\documentclass{article}

\usepackage{microtype}
\usepackage{graphicx}
\usepackage{subcaption}
\usepackage{booktabs} 

\usepackage{hyperref}

\usepackage[accepted]{icml2026}

\usepackage{amsmath}
\usepackage{amssymb}
\usepackage{mathtools}
\usepackage{amsthm}
\usepackage{multirow}

\usepackage[capitalize,noabbrev]{cleveref}

\renewcommand{\algorithmicrequire}{\textbf{Input:}}
\renewcommand{\algorithmicensure}{\textbf{Output:}}

\theoremstyle{plain}
\newtheorem{theorem}{Theorem}[section]

\newtheorem{lemma}[theorem]{Lemma}

\theoremstyle{definition}
\newtheorem{definition}[theorem]{Definition}

\theoremstyle{remark}

\usepackage[textsize=tiny]{todonotes}

\icmltitlerunning{Revisiting Positive Samples in Graph Contrastive Learning: From the Perspective of Message Passing}

\begin{document}

\twocolumn[
  \icmltitle{Revisiting Positive Samples in Graph Contrastive Learning: \\ From the Perspective of Message Passing}

  \icmlsetsymbol{equal}{\ensuremath{\dagger}}
  \icmlsetsymbol{corresponding}{*}

  \begin{icmlauthorlist}
    \icmlauthor{Lianze Shan}{equal,tju}
    \icmlauthor{Ningchong Wang}{equal,sft}
    \icmlauthor{Jitao Zhao}{equal,tju}
    \icmlauthor{Di Jin}{tju}
    \icmlauthor{Dongxiao He}{corresponding,tju}
  \end{icmlauthorlist}
  
  \icmlaffiliation{tju}{School of Computer Science and Technology, Tianjin University, Tianjin, China}
  \icmlaffiliation{sft}{School of Future Technology, Tianjin University, Tianjin, China}
  
  \icmlcorrespondingauthor{Dongxiao He}{hedongxiao@tju.edu.cn}
  
  \icmlkeywords{Graph Contrastive Learning, Graph Neural Networks, Message Passing}

  \vskip 0.3in
]

\printAffiliationsAndNotice{%
  \textsuperscript{\ensuremath{\dagger}}Equal contribution.
  \textsuperscript{*}Corresponding author.%
}

\begin{abstract}
Graph Contrastive Learning (GCL), which trains graph encoders by maximizing similarity between positive samples and minimizing it between negative ones, has emerged as a mainstream graph pre-training paradigm.
It is widely recognized that positive samples are essential in GCLs.
Ideally, maximizing the similarity of positive samples enables graph encoders to capture intrinsic semantics and patterns of graph data.
However, we discover an interesting phenomenon: \textit{GCLs can achieve competitive performance even without positive samples.}
This motivates us to revisit the fundamental mechanism of positive samples in GCLs.
From the perspective of Dirichlet energy, we theoretically find that message passing, a key mechanism in graph encoders, trivializes the maximization of positive samples, preventing GCLs from effectively learning from positive samples.
To address this, we propose SPGCL to mitigate the trivialization caused by message passing and restore the learning efficacy of positive samples. 
Specifically, we find that high Dirichlet energy features help positive samples provide effective learning signals while low Dirichlet energy features contribute little to positive learning signal but is useful for positive sampling.
Based on this, SPGCL introduces an energy-aware propagation mechanism that selectively propagates features based on their Dirichlet energy, preventing  low-energy features from dominating similarity.
Furthermore, we construct a positive sampling matrix based on Dirichlet energy to exclude misleading positive samples and focus on node pairs with higher discrimination.
Extensive experiments demonstrate the effectiveness of SPGCL.

\end{abstract}

\section{Introduction}
\label{sec:intro}
\begin{figure}[t]
\begin{subfigure}[t]{0.56\linewidth}
    \centering
    \includegraphics[width=\linewidth]{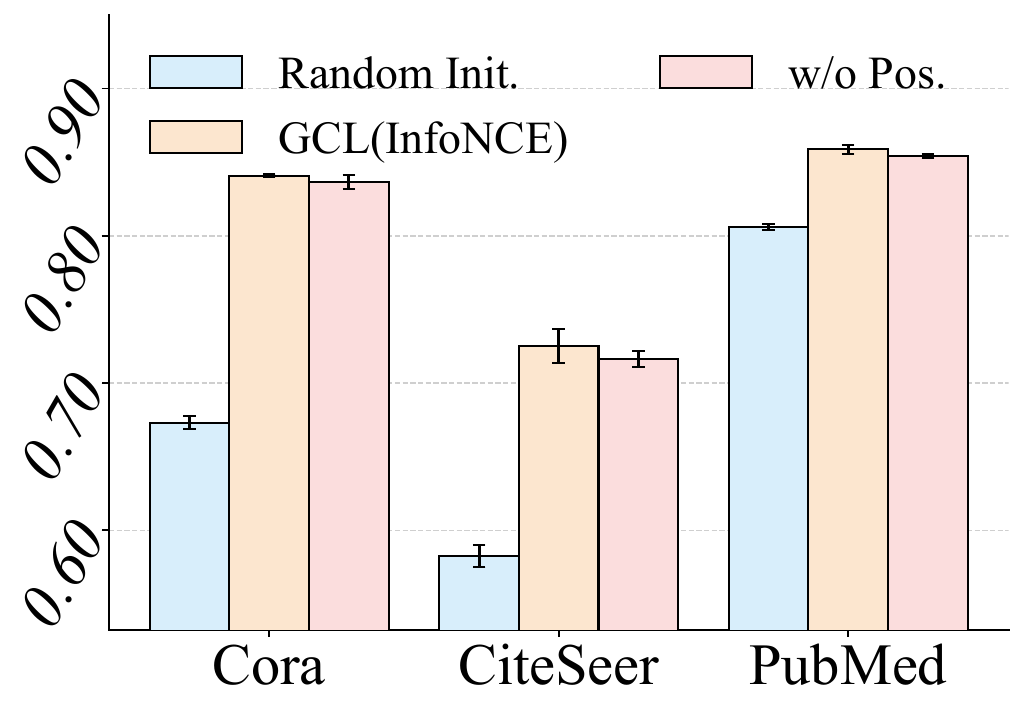}
    \caption{Impact of positive alignment.}
    \label{fig:motivation_a}
\end{subfigure}
\begin{subfigure}[t]{0.41\linewidth}
    \centering
    \includegraphics[width=\linewidth]{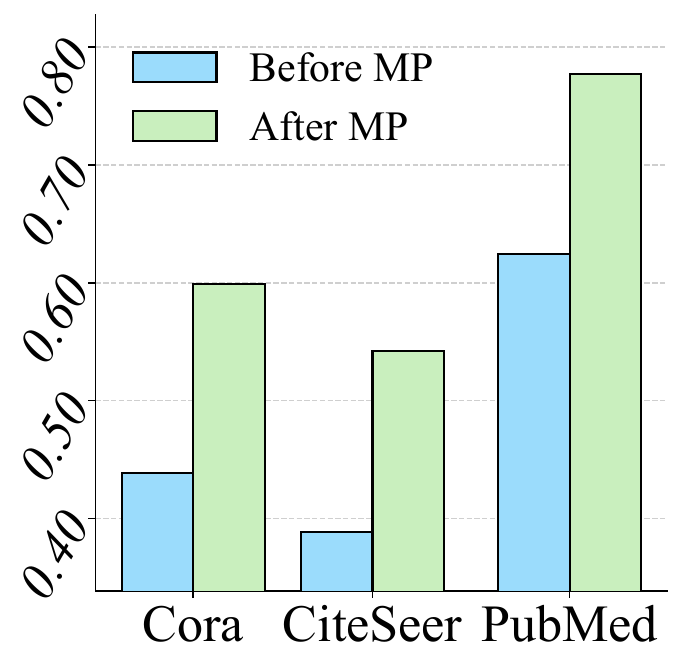}
    \caption{Pos. similarity.}
    \label{fig:motivation_b}
\end{subfigure}

\caption{Motivation experiments (Acc.). In (a), Random Init. means a GCN encoder with randomly initialized parameters without any training; GCL (InfoNCE) means a standard graph contrastive learning model trained with the InfoNCE loss; w/o Pos. means a GCL variant where the positive alignment term is removed from the InfoNCE loss.
In (b), Before MP means the average cosine similarity of positive samples before message passing;
After MP means the average cosine similarity after message passing.}
\label{fig:motivation}
\end{figure}
Graph Contrastive Learning (GCL), which trains Graph Neural Network (GNN) encoders by aligning positive samples 
while distinguishing them from negative ones, 
has emerged as a mainstream self-supervised pre-training paradigm for learning graph representations without labels \cite{GCL_Survey}.
It has been extensively deployed across diverse real-world scenarios and achieved great success, including e-commerce recommendation systems \cite{LightGCN}, social network analysis \cite{socialnetworks}, and molecular property prediction \cite{molecular_property_prediction}.

In GCLs, positive samples are widely regarded as a key component \cite{BGRL,SpCo,AFGRL,HomoGCL}.
GCLs typically construct positive samples by augmentation-based sampling \cite{GRACE,DGI,GCA} or neighborhood-based sampling \cite{LocalGCL,NeCo}.
Ideally, aligning positive pairs enables graph encoders to learn invariant representations that encode intrinsic semantic and structural patterns of graph data \cite{GCA}.
A broad consensus holds that the efficacy of GCLs critically depends on the quality of learning from this alignment process.
Therefore, many recent works focus on designing various augmentation and sampling strategies to enhance the effectiveness of learning from positive samples.
For example, GCA \cite{GCA}, SpCo \cite{SpCo}, and CSGCL \cite{CSGCL}
design effective augmentation strategies to improve the construction of positive samples.
LocalGCL \cite{LocalGCL}, AFGRL \cite{AFGRL}, and NeCo \cite{NeCo} improve positive samples by adopting carefully designed neighbor sampling.

However, we find a counterintuitive phenomenon: \textit{the positive samples, which are widely considered essential in GCLs, have surprisingly limited impact on performance.}
Here, we begin with a motivation experiment and the results are presented in Figure \ref{fig:motivation} (a).
We show the results of three models: a randomly initialized Graph Convolutional Network (GCN) \cite{GCN}, a GCL model trained with the InfoNCE loss \cite{GRACE}, and a GCL variant without positive samples.
It can be observed that removing positive samples achieves competitive performance, closely matching the standard GCL and outperforming the randomly initialized GCN.
This suggests that the benefit of positive alignment is limited, prompting us to ask: \textbf{Why} do positive samples fail to provide significant performance gains? \textbf{What} constrains learning from positive samples?  \textbf{How} to learn more effectively from positive samples? 
Essentially, the effectiveness of positive samples stems from learning invariant semantics under substantial perturbations; i.e., positive samples should exhibit significant diversity while preserving core semantics.
However, as shown in Figure \ref{fig:motivation}(b), we observe a marked increase in the average representation similarity of positive samples after message passing.
This motivates us to theoretically investigate the impact of message passing on positive samples learning.
Consistent with our empirical observation, we theoretically find that message passing increases the expected similarity between positive samples.
This reduces the diversity between positive samples before contrastive optimization, which we refer to as a pre-alignment effect, weakening the learning signal carried by positive samples.

We present a detailed theoretical analysis in Section \ref{sec:theoretical_analysis}.
Specifically, we first analyze how GNN message passing affects positive samples under two widely used positive construction strategies, including
augmentation-based positive samples and neighborhood–based positive samples.
We theoretically prove that GNN message passing inherently increases the expected similarity between positive samples, implying that positive samples can become overly similar before contrastive optimization.
Consequently, similarity increase alone can not distinguish effective positive alignment from trivial similarity induced by message passing, and thus is insufficient to reflect how much useful information GCL can learn from positive samples.
We further introduce a metric to measure the effectiveness of positive sample learning and define it as the mutual information between the ground truth sample distribution and the embedding similarity distribution.
We then prove that this effectiveness is strictly upper bounded by the Dirichlet energy of each feature, i.e., low-energy features dominate the similarity score but offer negligible signals for positive samples learning.
This not only explains why GCLs maintain competitive performance without positive samples, but also provides a principled guideline for optimizing positive samples learning in GCLs.

Based on the above findings and theoretical analysis, we propose \textbf{S}eparate \textbf{P}ropagation \textbf{G}raph \textbf{C}ontrastive \textbf{L}earning (SPGCL) \footnote{Code is publicly available at 
\url{https://github.com/hedongxiao-tju/SPGCL}.} to reduce the pre-alignment effect of message passing and better exploit positive samples.
Specifically, SPGCL introduces an Energy Aware Propagation (EAP) mechanism. 
For each feature dimension, we estimate its Dirichlet energy and separate its propagation accordingly.
Low-energy features are excluded from message passing to eliminate their contribution to redundant pre-alignment effect, while high-energy features are propagated to preserve informative diversity.
As a result, contrastive objective is forced to capture invariant representations from feature components that provide effective positive learning signals.
Furthermore, we design an Energy-guided Positive Sampling (EPS) mechanism to construct more reliable positive samples.
Existing GCL methods typically rely on heuristic data augmentations \cite{GRACE,DGI,BGRL} or local neighborhood–based sampling \cite{LocalGCL,AFGRL} to define positive samples.
These strategies inevitably introduce misleading positive samples, as stochastic perturbations may destroy core semantics while local neighbors lack semantic consistency in heterophilic graphs. 
Therefore, we leverage low-energy features to construct a probabilistic positive sampling matrix.
Since low-energy features capture stable signals that are less sensitive to perturbations and neighborhood noise, using them for positive sampling helps filter out misleading positives while retaining semantically reliable relations.
Our contributions can be summarized as follows:
\begin{itemize}
    \item We find a phenomenon in GCLs, termed pre-alignment effect, where message passing makes positive samples similar before optimization, weakening the effective learning from positive samples.

    \item We propose SPGCL, which restores informative positive learning by separating feature propagation based on Dirichlet energy and using low-energy features to guide positive sampling.

    \item We conducted extensive experiments on 12 graph benchmarks, including node classification and clustering tasks. The results demonstrate that SPGCL outperforms state-of-the-art GCLs.

\end{itemize}

\section{Preliminary}
\label{sec:preliminary}
In this section, we introduce the basic concepts and notations used in this paper, including graph-structured data, graph neural networks, and graph contrastive learning. 
Unless otherwise specified, we use calligraphic symbols to denote sets, bold uppercase letters for matrices, bold lowercase letters for vectors, and lowercase letters for scalars.

\textbf{Graph-structured data.} 
A graph is denoted as $\mathcal{G} = (\mathcal{V}, \mathcal{T})$, where $\mathcal{V} = \{v_1, \ldots, v_N\}$ is the set of $N$ nodes and $\mathcal{T} \subseteq \mathcal{V} \times \mathcal{V}$ is the set of edges. 
Each node $v_i \in \mathcal{V}$ is associated with a feature vector $\mathbf{x}_i \in \mathbb{R}^F$, and all node features are stacked into a feature matrix $\mathbf{X} \in \mathbb{R}^{N \times F}$. 
The graph structure is represented by an adjacency matrix $\mathbf{A} \in \{0,1\}^{N \times N}$, where $\mathbf{A}_{ij} = 1$ if $(v_i, v_j) \in \mathcal{T}$ and $0$ otherwise.

\textbf{Graph Neural Network.}
Graph Neural Networks (GNNs) aim to learn node representations by iteratively aggregating information from local neighbors \cite{GNNsurvey1, GNNsurvey2}.
Most GNNs follow the message passing framework, where the representation of each node is updated by combining its own features with its neighbors. Formally, let $\mathbf{H}^{(0)} = \mathbf{X}$ denote the input node features,
at the $l$-th layer, a GNN encoder updates node representations as:
\begin{equation}
\mathbf{H}^{(l+1)} = \sigma\!\left( \mathbf{P} \mathbf{H}^{(l)} \mathbf{W}^{(l)} \right),
\end{equation}
where $\mathbf{W}^{(l)}$ is a learnable weight matrix, $\sigma(\cdot)$ is a nonlinear activation function, and $\mathbf{P}$ is a propagation operator derived from the graph structure. A representative example of GNNs is Graph Convolutional Network (GCN) \cite{GCN}, which employs the normalized adjacency matrix with self-loops for propagation:
\begin{equation}
\label{eq:gcn}
\mathbf{P} = \tilde{\mathbf{D}}^{-\frac{1}{2}} \tilde{\mathbf{A}} \tilde{\mathbf{D}}^{-\frac{1}{2}}, \tilde{\mathbf{A}} = \mathbf{A} + \mathbf{I},
\end{equation}
where $\tilde{\mathbf{D}}$ is the corresponding degree matrix.
This propagation can be viewed as a form of Laplacian smoothing that encourages neighboring nodes to have similar representations.
This smoothing effect can be measured by the Dirichlet energy. Given a graph signal $\mathbf{f} \in \mathbb{R}^N$, its Dirichlet energy is defined as:
\begin{equation}
\label{eq:dirichlet_energy}
\mathcal{E}(\mathbf{f}) = \mathbf{f}^\top \mathbf{L} \mathbf{f}
= \frac{1}{2} \sum_{(v_i,v_j)\in\mathcal{T}} \mathbf{A}_{ij} (f_i - f_j)^2,
\end{equation}
where $\mathbf{L}=\mathbf{D}-\mathbf{A}$ is the graph Laplacian.
Low-energy features correspond to smoother signals across the graph, while high-energy features preserve greater variation among neighboring nodes.

\textbf{Graph Contrastive Learning.}
Graph Contrastive Learning (GCL) aims to train GNN encoders based on mutual information maximization \cite{DGI}. 
However, directly maximizing mutual information is intractable in practice.
Instead, most GCL methods adopt contrastive objectives that provide a lower bound on mutual information.
A commonly used formulation is the InfoNCE loss \cite{InfoNCE}, defined as:
\begin{equation} \label{eq:InfoNCELoss} 
\begin{aligned}  & \mathcal{L}_{\text{InfoNCE}} (\textbf{h}_i) = -\text{log}( \\
&
\frac{\sum\limits_{p \in \mathcal{P}_i}\exp({\mathrm{sim}(\textbf{h}_i, \textbf{h}_p)/\tau})} 
{\sum\limits_{p \in \mathcal{P}_i}\exp({\mathrm{sim}(\textbf{h}_i, \textbf{h}_p)/\tau}) +\sum\limits_{n \in \mathcal{N}_i} \exp({\mathrm{sim}(\textbf{h}_i, \textbf{h}_n)/\tau}) }), 
\end{aligned} 
\end{equation}
where $\mathcal{P}_i$ and $\mathcal{N}_i$ denote the sets of positive and negative representations for anchor $\mathbf{h}_i$, respectively, $\mathrm{sim}(\cdot,\cdot)$ denotes a similarity function, $\tau$ is a temperature parameter, and $\exp(\cdot)$ denotes the exponential function.

\section{Theoretical Analysis}
\label{sec:theoretical_analysis}
In this section, we theoretically investigate how GNN message passing affects positive samples learning in graph contrastive learning.
We first show that, under common positive samples construction rules, message passing inherently increases the expected similarity between positive samples, leading to a pre-alignment effect.
This observation indicates that increased similarity alone is insufficient to characterize how much information GCL actually learns from positive alignment.
To address this, we introduce an information-theoretic metric to quantify positive learning effectiveness, defined as the mutual information between the ground
truth sample distribution and the embedding similarity encoded by GNN.
Based on this, we further derive a feature-wise upper bound, showing that the contribution of each feature dimension to positive learning effectiveness is limited by its Dirichlet energy.
These results reveal that low-energy features dominate similarity increase while providing limited learning signal, which motivates separating their propagation to better exploit informative positive alignment.

We now analyze the effect of message passing on positive samples under common positive construction rules and give the following lemma:
\begin{lemma}[Pre-alignment effect of Message Passing]
\label{lem:pre_alignment}
Let $\mathbf{P}$ be a symmetric propagation operator and denote
$\mathbf{H}^{(l+1)} \triangleq \mathbf{P}^{(l)}\mathbf{H}^{(l)}$.
For a node pair $(v_i,v_j)$, let
$R^{(l)}_{ij} \triangleq \phi\!\left(\mathbf{H}^{(l)}_{i,:},\,\mathbf{H}^{(l)}_{j,:}\right)$,
where $\phi$ is an inner product or cosine similarity with normalized features.
Then, under common positive construction rules, for $\forall\ l \ge 0$ we have:
\begin{equation}
\mathbb{E}\!\left[R^{(l+1)}_{ij}\mid (v_i,v_j) \in \mathcal{P}\right]
\;\ge\;
\mathbb{E}\!\left[R^{(l)}_{ij}\mid (v_i,v_j) \in \mathcal{P}\right].
\end{equation}
\end{lemma}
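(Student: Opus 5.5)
We prove the statement separately for the two positive construction rules named in Section~\ref{sec:theoretical_analysis}: augmentation-based positives, where $(v_i,v_j)$ are the two views of the same node, and neighborhood-based positives, where $(v_i,v_j)\in\mathcal{T}$. In both cases we turn similarity into a distance. With row-normalized features and $\phi$ the cosine, $R_{ij}=1-\tfrac12\|\mathbf{H}_{i,:}-\mathbf{H}_{j,:}\|^2$. For the inner product we use $R_{ij}=\tfrac12(\|\mathbf{H}_{i,:}\|^2+\|\mathbf{H}_{j,:}\|^2-\|\mathbf{H}_{i,:}-\mathbf{H}_{j,:}\|^2)$ and treat the norm terms separately. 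It therefore suffices to show that the expected squared discrepancy between positive pairs does not increase under one step $\mathbf{H}\mapsto\mathbf{P}\mathbf{H}$. Since $\mathbf{P}$ is symmetric, we diagonalize $\mathbf{P}=\mathbf{U}\boldsymbol{\Lambda}\mathbf{U}^\top$. For the GCN operator of Eq.~\eqref{eq:gcn} all eigenvalues lie in $(-1,1]$, so $\|\mathbf{P}\|_2\le 1$ and $\mathbf{I}-\mathbf{P}\succeq 0$. We then work one feature column $\mathbf{f}$ at a time and sum over columns at the end.

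\emph{Augmentation-based positives.} We model the two views as $\mathbf{H}^{(l)}+\boldsymbol{\epsilon}_1$ and $\mathbf{H}^{(l)}+\boldsymbol{\epsilon}_2$, with zero-mean perturbations and $\boldsymbol{\delta}=\boldsymbol{\epsilon}_1-\boldsymbol{\epsilon}_2$; the same propagation applies to both views. Averaging over the uniformly chosen anchor, the expected discrepancy at layer $l+1$ is $\frac1N\mathbb{E}\|\mathbf{P}\boldsymbol{\delta}\|_F^2\le\frac1N\|\mathbf{P}\|_2^2\,\mathbb{E}\|\boldsymbol{\delta}\|_F^2\le\frac1N\mathbb{E}\|\boldsymbol{\delta}\|_F^2$, which is exactly the layer-$l$ quantity. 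For edge-dropping augmentations, which give different operators $\mathbf{P}_1,\mathbf{P}_2$, we would write $\mathbf{P}_1\mathbf{H}-\mathbf{P}_2\mathbf{H}$ and bound it through a common smoothing component. This case is more delicate and we would state it under an assumption that the perturbation of the operator is small.

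\emph{Neighborhood-based positives.} Averaged over edges, the discrepancy equals the Dirichlet energy of Eq.~\eqref{eq:dirichlet_energy} divided by $|\mathcal{T}|$. We use the normalized version $\mathbf{f}^\top(\mathbf{I}-\mathbf{P})\mathbf{f}$, which matches the degree-normalized operator. Writing $\mathbf{f}=\sum_k c_k\mathbf{u}_k$, the energy is $\sum_k(1-\lambda_k)c_k^2$ before propagation and $\sum_k(1-\lambda_k)\lambda_k^2c_k^2$ after. The latter is no larger, because each term has $1-\lambda_k\ge 0$ and $\lambda_k^2\le 1$. Hence the positive-pair discrepancy does not increase, and the expected similarity does not decrease.

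\emph{Main obstacle.} The hardest step is reconciling the bound with the normalization. The clean energy inequality holds for unnormalized signals, but $\phi$ is a cosine computed after row-renormalization, and $\mathbf{P}$ also shrinks norms, which affects the inner-product version. We would first handle this by assuming the features are (re)normalized so that the degree-weighted norm is preserved. The statement then compares relative discrepancies $\mathcal{E}(\mathbf{f})/\|\mathbf{f}\|^2$, i.e., Rayleigh quotients. Propagation reweights the spectral mass toward $\lambda_k$ near $1$, which requires the additional assumption that the spectrum is nonnegative, e.g.\ via the self-loop or a lazy operator. Under that assumption the Rayleigh quotient of $\mathbf{P}\mathbf{f}$ is at most that of $\mathbf{f}$ by a standard Chebyshev-sum argument, because the weights $\lambda_k^2$ are decreasing in $1-\lambda_k$. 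Both constructions then yield the claimed monotonicity for every $l\ge 0$.
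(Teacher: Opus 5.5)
Your outline matches the paper's: rewrite the normalized similarity as $1-\frac{1}{2}\|\cdot\|_2^2$, contract the view difference using $\|\mathbf{P}\|_2\le 1$, and use monotonicity of the Dirichlet energy for neighbor pairs. The proposal does not reach the stated inequality, however.

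The decisive gap is the sentence ``it therefore suffices to show that the expected squared discrepancy does not increase under $\mathbf{H}\mapsto\mathbf{P}\mathbf{H}$''. The identity $R_{ij}=1-\frac{1}{2}\|\mathbf{H}_{i,:}-\mathbf{H}_{j,:}\|_2^2$ needs unit rows, and the rows of $\mathbf{P}\mathbf{H}$ are not unit. $R^{(l+1)}$ is computed after rescaling each row by its own factor $1/\|(\mathbf{P}\mathbf{H})_{i,:}\|_2$, and contraction of unnormalized differences says nothing about these node-dependent rescalings. Your Rayleigh-quotient repair divides a whole column by a single scalar, which is a different normalization. It correctly shows (given a nonnegative spectrum) that $\mathbf{f}^\top(\mathbf{I}-\mathbf{P})\mathbf{f}/\|\mathbf{f}\|_2^2$ does not increase, but that says nothing about cosines.

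The step genuinely fails. Take the GCN operator on a single edge $(v_i,v_j)$, so $\mathbf{P}=\frac{1}{2}\mathbf{1}\mathbf{1}^\top$. Let $\mathbf{H}_{i,:}=(1,0)$ and $\mathbf{H}_{j,:}=(-1,0)$, and perturb the two views by small independent isotropic Gaussian noise. Before propagation the two views of $v_i$ have cosine close to $1$. After propagation $(\mathbf{P}\mathbf{H})_{i,:}=\mathbf{0}$, so the propagated views of $v_i$ are independent isotropic noise vectors with expected cosine $0$, even though $\|\mathbf{P}\boldsymbol{\delta}\|_F\le\|\boldsymbol{\delta}\|_F$ holds.

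The unnormalized inner product cannot be rescued by treating norm terms separately either. In your own model, the anchor-averaged $\mathbb{E}R^{(l+1)}_{ii}$ equals $\|\mathbf{P}\mathbf{H}\|_F^2/N\le\|\mathbf{H}\|_F^2/N$, which moves the wrong way.

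The paper bridges normalized and unnormalized distances with the Lipschitz bound $\|\mathbf{u}/\|\mathbf{u}\|_2-\mathbf{v}/\|\mathbf{v}\|_2\|_2\le\|\mathbf{u}-\mathbf{v}\|_2/c$ on $\{\|\mathbf{z}\|_2\ge c\}$, which you omit. That bound runs in one direction only, though. It compares the normalized layer-$(l+1)$ distance with the unnormalized layer-$l$ distance up to a factor $c^{-2}$, and does not give the stated monotonicity. As written, your arguments prove monotonicity of unnormalized or globally normalized discrepancies. Turning that into the cosine statement needs an explicit hypothesis controlling row norms at both layers, and the example shows this cannot be had for free.

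Two further steps would also fail.

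\emph{Neighborhood case.} The uniform-edge discrepancy is proportional to $\operatorname{tr}(\mathbf{H}^\top\mathbf{L}\mathbf{H})$ with $\mathbf{L}=\mathbf{D}-\mathbf{A}$. By contrast, $\mathbf{f}^\top(\mathbf{I}-\mathbf{P})\mathbf{f}=\frac{1}{2}\sum_{i,j}\tilde{\mathbf{A}}_{ij}\bigl(f_i/\sqrt{\tilde{d}_i}-f_j/\sqrt{\tilde{d}_j}\bigr)^2$, with $\tilde{d}_i=\tilde{\mathbf{D}}_{ii}$, measures the degree-rescaled rows $\mathbf{H}_{i,:}/\sqrt{\tilde{d}_i}$, a different quantity. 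Your spectral computation works precisely because $\mathbf{I}-\mathbf{P}$ commutes with $\mathbf{P}$. In general $\mathbf{L}$ does not (it does on regular graphs), and for $\mathbf{L}$ the energy can increase. On the path $v_1-v_2-v_3$ the constant signal has energy $0$, while $\mathbf{P}\mathbf{1}=\bigl(\frac{1}{2}+\frac{1}{\sqrt{6}},\frac{1}{3}+\frac{2}{\sqrt{6}},\frac{1}{2}+\frac{1}{\sqrt{6}}\bigr)$ has energy $2\bigl(\frac{1}{6}-\frac{1}{\sqrt{6}}\bigr)^2>0$. The paper asserts the $\mathbf{L}$-energy inequality from $\|\mathbf{P}\|_2\le1$ alone, so copying that step would not help.

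\emph{Edge dropping.} No small-perturbation assumption rescues this case. With $\mathbf{P}^{(a)}\neq\mathbf{P}^{(b)}$, the second moment of the view difference contains $\mathbb{E}\|(\mathbf{P}^{(a)}-\mathbf{P}^{(b)})\mathbf{H}^{(l,b)}\|_F^2\ge 0$ plus a cross term. This term is not removed by $\mathbb{E}[\mathbf{P}^{(a)}-\mathbf{P}^{(b)}]=\mathbf{0}$, which is exactly the ground on which the paper drops it. With edge-only augmentation both views share $\mathbf{X}$ at $l=0$, so $R^{(0)}_{ii}=1$. Meanwhile $R^{(1)}_{ii}<1$ whenever $(\mathbf{P}^{(a)}\mathbf{X})_{i,:}$ and $(\mathbf{P}^{(b)}\mathbf{X})_{i,:}$ are not parallel, however small the perturbation. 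That case has to be excluded from the lemma, not assumed away.
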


This effect holds under two commonly used positive construction rules in GCLs: augmentation-based positive samples and neighborhood-based positive samples. We provide detailed proofs in Appendix~\ref{app:pre_alignment_proof}.
Lemma~\ref{lem:pre_alignment} shows that message passing increases the expected similarity of positive samples.
Consequently, the increase of similarity itself can not distinguish informative positive alignment from trivial similarity induced by message passing, and is therefore insufficient to characterize how much information GCL can learn from positive samples. To quantify this, we introduce the following definition:
\begin{definition}[Positive Samples Learning Effectiveness]
\label{def:pos_effectiveness}
Let $S_{ij}\in\{0,1\}$ be a binary random variable indicating whether a node pair $(v_i,v_j)$ is generated by the positive samples construction rule, where $S_{ij}=1$ denotes a positive sample and $S_{ij}=0$ denotes a random sample.
Let $R_{ij}$ denote the embedding similarity score produced by a GNN encoder for the pair $(v_i,v_j)$.
The \emph{positive samples learning effectiveness} is defined as:
\begin{equation}
\mathcal{I} \;\triangleq\; I(S_{ij}; R_{ij}),
\end{equation}
where $I(\cdot;\cdot)$ denotes mutual information.
\end{definition}
Definition \ref{def:pos_effectiveness} provides an information-theoretic criterion for analyzing the effectiveness of positive alignment in GCLs.
It quantifies how much information the similarity score $R_{ij}$ carries about whether a node pair is a true positive sample.
Intuitively, it indicates whether positive alignment allows the encoder to distinguish true positive pairs from randomly sampled pairs based on their similarity.

Based on Lemma \ref{lem:pre_alignment} and Definition \ref{def:pos_effectiveness}, we next analyze how GNN message passing affects positive learning effectiveness.
We establish this connection through Dirichlet energy.
As shown in Eq. \ref{eq:dirichlet_energy}, the Dirichlet energy of feature $m$ at propagation layers $l$ is defined as:
\begin{equation}
\label{eq:dirichlet_energyxx}
\mathcal E_m^{(l)} \triangleq \left(\textbf{H}_{:,m}^{(l)}\right)^\top \mathbf{L}\, \textbf{H}_{:,m}^{(l)}.
\end{equation}
Noting that the similarity score aggregates contributions from all feature dimensions, it remains unclear which features provide informative learning signals and which merely inflate similarity.
We therefore decompose the similarity score into feature-wise components and analyze the contribution of each dimension separately:
\begin{equation}
\label{eq:dim_evidence}
R_{ij,m}^{(l)} \triangleq \phi_m\!\left(x_{i,m}^{(l)},\,x_{j,m}^{(l)}\right),
\qquad
R_{ij}^{(l)}=\sum_{m=1}^d R_{ij,m}^{(l)} .
\end{equation}
And the feature-wise positive learning effectiveness at propagation layers $l$ can be defined as follows:
\begin{equation}
\label{eq:dim_mi}
\mathcal I_m^{(l)} \triangleq I\!\left(S_{ij};R_{ij,m}^{(l)}\right).
\end{equation}
We now give the following theorem:
\begin{theorem}
\label{thm:mi_energy_bound}
Let $\phi$ be an inner product or cosine similarity with normalized features.
Then there exists a constant $C>0$ such that for each dimension $m$ and depth $t$,
\begin{equation}
\label{eq:mi_energy_bound}
I\!\left(S_{ij};R_{ij,m}^{(t)}\right)
\;\le\;
C\,\mathcal E_m^{(t)}.
\end{equation}
\end{theorem}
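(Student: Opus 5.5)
Our plan is to bound the mutual information by a divergence between the two conditional laws of $R_{ij,m}^{(t)}$, and then to bound that divergence by a quantity that vanishes when feature $m$ is constant on connected pairs. That quantity is controlled by $\mathcal E_m^{(t)}$. First, since $S_{ij}$ is binary with prior $\pi=\Pr(S_{ij}=1)$, we write
\begin{equation*}
I(S_{ij};R_{ij,m}^{(t)}) = \pi\,\mathrm{KL}(Q_1\|Q) + (1-\pi)\,\mathrm{KL}(Q_0\|Q),
\end{equation*}
where $Q_s$ is the law of $R_{ij,m}^{(t)}$ given $S_{ij}=s$ and $Q=\pi Q_1+(1-\pi)Q_0$. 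Using $\mathrm{KL}\le\chi^2$, and the fact that $Q$ dominates each $Q_s$ with density ratio at most $1/\min(\pi,1-\pi)$, we get $I\le c_\pi\,\mathrm{TV}(Q_1,Q_0)$ or a squared Hellinger analogue. Under the normalization assumption on $\phi$, the score $R_{ij,m}=x_{i,m}x_{j,m}$ is bounded, $|R_{ij,m}|\le 1$. We will work with a smoothed version of the score, or assume the conditional laws admit densities bounded above and below. This lets us replace TV by a Wasserstein-1 distance: $\mathrm{TV}(Q_1,Q_0)\le c'\,W_1(Q_1,Q_0)$. This is the step we expect to be the main obstacle. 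The inequality is false for arbitrary distributions, so the constant $C$ must absorb a regularity or bounded-density assumption. We would state this explicitly as part of what ``there exists $C$'' hides.

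Second, we bound $W_1(Q_1,Q_0)$ by coupling each positive pair $(i,j)$ with a pair that isolates the deviation along edges. For neighborhood-based positives, $(v_i,v_j)\in\mathcal T$. The difference between $x_ix_j$ and the ``diagonal'' score $x_i^2$ is $|x_i(x_j-x_i)|\le|x_i-x_j|$. For augmentation-based positives, the same holds after one propagation step, by the argument behind Lemma~\ref{lem:pre_alignment}. Thus the positive-pair distribution of feature $m$ differs from a reference distribution depending only on node marginals by at most $\mathbb E_{(i,j)\in\mathcal P}|x_{i,m}-x_{j,m}|$.

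We then argue that the random-pair distribution is matched to the same reference up to a term of the same form. If feature $m$ were constant across edges, i.e.\ $\mathcal E_m^{(t)}=0$, then on a connected graph it is constant everywhere. In that case $R_{ij,m}$ is deterministic under both $S=1$ and $S=0$, so $I=0$, which is consistent with the bound. In general we apply Cauchy--Schwarz:
\begin{equation*}
\mathbb E_{(i,j)\in\mathcal T}|x_{i,m}-x_{j,m}|\le\Big(\tfrac{2}{|\mathcal T|}\mathcal E_m^{(t)}\Big)^{1/2}.
\end{equation*}
This yields $I\le C\sqrt{\mathcal E_m^{(t)}}$.

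To reach the linear form in \eqref{eq:mi_energy_bound}, we will use the $\chi^2$/Hellinger route instead of TV. Then $I\lesssim\chi^2(Q_1\|Q)$ is quadratic in a mean or density shift, and the bounded-density assumption converts that shift squared into $\mathbb E(x_i-x_j)^2$ over positive pairs. By \eqref{eq:dirichlet_energy}, this is $2\mathcal E_m^{(t)}/|\mathcal T|$, up to the factor relating the positive-pair distribution to uniform edge sampling. For normalized features we can also deduce the linear bound from the square-root one. Since $\mathcal E_m\le 2\lambda_{\max}\|x_m\|^2$ is bounded and MI is nonnegative, on the relevant range the constant can be enlarged, $C$ absorbing $|\mathcal T|$, $\pi$, $\lambda_{\max}$ and the density bounds. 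The remaining check is that none of these constants depend on $m$ or $t$, which holds because normalization keeps every layer's features in the unit ball.
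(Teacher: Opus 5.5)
There is a genuine gap where you match the random-pair law $Q_0$ to your reference. Under $S_{ij}=0$ the two endpoints are independent, so you need to control $\mathbb E|x_{i,m}-x_{j,m}|$ over \emph{independent} $i,j$, which is essentially $\sqrt{2\,\mathrm{Var}(x_{:,m})}$. Your Cauchy--Schwarz bound covers only pairs in $\mathcal T$. The observation that $\mathcal E_m=0$ forces constancy on a connected graph is only the qualitative version of what is needed. The missing ingredient is a Poincar\'e (spectral-gap) inequality, $\mathrm{Var}_{\boldsymbol{\pi}}(f)\le \mathcal E_m/\lambda_{\mathrm{gap}}$, and this is exactly the hinge of the paper's proof. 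The dependence matters: a linear ramp on a path with $N$ nodes has variance $\Theta(1)$ but $\mathcal E_m=\Theta(1/N)$, while positive and random pairs still have clearly different score laws. So $C$ must encode the graph's connectivity through (a lower bound on) $\lambda_{\mathrm{gap}}$, and nothing in your argument produces this. Once Poincar\'e is available, your detour through the diagonal score $x_{i,m}^2$ is unnecessary. The paper draws a common anchor $i\sim\boldsymbol{\pi}$ (stationary for the lazy walk $\mathbf T$), then $j_+\sim\mathbf T(i,\cdot)$ and $j_-\sim\boldsymbol{\pi}$ independently. This gives $|R_+-R_-|\le B|f_{j_+}-f_{j_-}|$ and $\mathbb E(f_{j_+}-f_{j_-})^2=2\,\mathrm{Var}_{\boldsymbol{\pi}}(f)$. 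The shared anchor law also removes the mismatch between degree-biased anchors (edge sampling) and uniform anchors (random pairs), which your two separate references would otherwise have to absorb.

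Your passage to the linear bound also does not work as written. From $I\le C\sqrt{\mathcal E_m}$, an \emph{upper} bound on $\mathcal E_m$ gives nothing, since $\sqrt{\mathcal E_m}\gg\mathcal E_m$ as $\mathcal E_m\to0$. Densities bounded above and below yield neither $\mathrm{TV}\lesssim W_1$ nor a quadratic $\chi^2$ bound; rapidly oscillating densities are counterexamples. What does work is Gaussian smoothing $\widetilde R=R_{ij,m}+\xi$, $\xi\sim\mathcal N(0,\sigma^2)$, because it makes the bound quadratic in the coupling distance. By joint convexity of KL, $D_{\mathrm{KL}}(Q_1*\mathcal N_\sigma\,\|\,Q_0*\mathcal N_\sigma)\le W_2^2(Q_1,Q_0)/(2\sigma^2)$. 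The coupling above gives $W_2^2\le\mathbb E|R_+-R_-|^2\le 2B^2\mathcal E_m/\lambda_{\mathrm{gap}}$ directly, with no detour through $W_1$. Convexity of KL in its second argument then bounds $I(S;\widetilde R)$ by the larger of the two KL divergences, giving $C=B^2/(\sigma^2\lambda_{\mathrm{gap}})$.

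Your suspicion that smoothing must be part of the hypothesis is well founded. The paper returns from $\widetilde R$ to $R$ via $I(S;R)\le I(S;\widetilde R)$, which is the data-processing inequality reversed. Moreover, for the unsmoothed inner-product score no $C$ independent of the feature scale (not even $C\propto B^2$) can exist. The map $x_{:,m}\mapsto\epsilon x_{:,m}$ leaves $I(S;R_{ij,m})$ unchanged ($r\mapsto\epsilon^2r$ is a bijection) while multiplying $\mathcal E_m$ by $\epsilon^2$. Finally, your augmentation-based case does not follow from Lemma~\ref{lem:pre_alignment}, which only shows that view discrepancies contract, not that they are bounded by $\mathcal E_m$.
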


The proof can be found in Appendix \ref{proof:mi_energy_bound}.
Notably, under Gaussian smoothing with variance $\sigma^2$, one may take $C = \frac{B^2}{\sigma^2\lambda_{\mathrm{gap}}}$, where $B$ is a bound on the absolute value of each feature (e.g., $|\textbf{x}_{i,m}^{(t)}|\le B$),
and $\lambda_{\mathrm{gap}}$ denotes the spectral gap of the propagation operator.
Theorem~\ref{thm:mi_energy_bound} shows that the contribution of each feature dimension to positive learning effectiveness is upper bounded by its Dirichlet energy, with tighter bounds for lower-energy features.
We further show which feature dominate positive learning effectiveness after similarity aggregation and give the following theorem:

\begin{theorem}
\label{thm:energy_decomposition_main}
Let $\mathbf{R}^{(l)}_{ij} \triangleq \big(R^{(l)}_{ij,1},\ldots,R^{(l)}_{ij,d}\big)$ be the vector of feature-wise similarity components at layer $l$, and let $\mathcal{H}$ and $\mathcal{L}$ denote the sets of high- and low-energy dimensions, respectively.
Assume that $\sum_{m\in\mathcal{L}}\mathcal{E}^{(l)}_{m}\le \varepsilon$.
Then the following inequalities hold:
\begin{equation}
\label{eq:decomp_sandwich}
I\!\left(S_{ij}; \mathbf{R}^{(l)}_{ij,\mathcal{H}}\right)
\;\le\;
I\!\left(S_{ij}; \mathbf{R}^{(l)}_{ij}\right)
\;\le\;
I\!\left(S_{ij}; \mathbf{R}^{(l)}_{ij,\mathcal{H}}\right) + C\,\varepsilon,
\end{equation}
and
\begin{equation}
\label{eq:r_scalar_upper}
I\!\left(S_{ij}; R^{(l)}_{ij}\right)
\;\le\;
I\!\left(S_{ij}; \mathbf{R}^{(l)}_{ij}\right),
\end{equation}
where $C$ is the constant in Theorem~\ref{thm:mi_energy_bound}.
\end{theorem}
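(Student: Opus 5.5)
The plan is to rely on three tools: the chain rule for mutual information, the data processing inequality, and Theorem~\ref{thm:mi_energy_bound} applied dimension by dimension. Throughout, write $\mathbf{R}^{(l)}_{ij}=(\mathbf{R}^{(l)}_{ij,\mathcal{H}},\mathbf{R}^{(l)}_{ij,\mathcal{L}})$, so the full vector splits into its high- and low-energy blocks.

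\emph{Lower bound in \eqref{eq:decomp_sandwich}.} The chain rule gives
\begin{equation*}
I\!\left(S_{ij};\mathbf{R}^{(l)}_{ij}\right)=I\!\left(S_{ij};\mathbf{R}^{(l)}_{ij,\mathcal{H}}\right)+I\!\left(S_{ij};\mathbf{R}^{(l)}_{ij,\mathcal{L}}\mid \mathbf{R}^{(l)}_{ij,\mathcal{H}}\right).
\end{equation*}
Conditional mutual information is nonnegative, so the left inequality follows at once.

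\emph{Upper bound in \eqref{eq:decomp_sandwich}.} Here the task is to bound the conditional term $I(S_{ij};\mathbf{R}^{(l)}_{ij,\mathcal{L}}\mid\mathbf{R}^{(l)}_{ij,\mathcal{H}})$ by $C\varepsilon$. First, apply the chain rule inside the $\mathcal{L}$ block, one coordinate at a time:
\begin{equation*}
I\!\left(S_{ij};\mathbf{R}^{(l)}_{ij,\mathcal{L}}\mid \mathbf{R}^{(l)}_{ij,\mathcal{H}}\right)=\sum_{m\in\mathcal{L}} I\!\left(S_{ij};R^{(l)}_{ij,m}\mid \mathbf{R}^{(l)}_{ij,\mathcal{H}},R^{(l)}_{ij,<m}\right).
\end{equation*}
Second, bound each summand by $C\,\mathcal{E}^{(l)}_m$ and sum, using $\sum_{m\in\mathcal{L}}\mathcal{E}^{(l)}_m\le\varepsilon$.

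This per-coordinate step is the main obstacle. Conditioning can increase mutual information, so Theorem~\ref{thm:mi_energy_bound} does not apply directly. I would try two routes, in order:
\begin{itemize}
\item \textbf{Route 1 (conditional version of the theorem).} Reprove Theorem~\ref{thm:mi_energy_bound} conditionally. Its proof presumably bounds the divergence between the laws of $R_{ij,m}$ under $S=1$ and $S=0$ by a quantity controlled by $\sum_{(i,j)}(x_{i,m}-x_{j,m})^2$, i.e.\ by the energy of dimension $m$. Such an argument should survive conditioning on the other coordinates: dimension $m$'s signal passes through a Gaussian-smoothing channel whose capacity depends only on $\mathcal{E}^{(l)}_m$, with $C=B^2/(\sigma^2\lambda_{\mathrm{gap}})$. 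One then averages over the conditioning variables.
\item \textbf{Route 2 (extra assumption).} Assume conditional independence of the low-energy coordinates given $S_{ij}$ and the high block. This is the standard simplification in such decomposition arguments. Under it, $I(S;R_m\mid\cdot)\le I(S;R_m)$, and the unconditional Theorem~\ref{thm:mi_energy_bound} applies directly.
\end{itemize}
I expect to state whichever assumption is needed explicitly rather than hide it.

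\emph{Inequality \eqref{eq:r_scalar_upper}.} By \eqref{eq:dim_evidence}, the scalar $R^{(l)}_{ij}=\sum_m R^{(l)}_{ij,m}$ is a deterministic function of the vector $\mathbf{R}^{(l)}_{ij}$. Hence $S_{ij}\to\mathbf{R}^{(l)}_{ij}\to R^{(l)}_{ij}$ is a Markov chain, and the data processing inequality gives the claim.

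Finally, combining the three results: after aggregation, the positive learning signal is essentially carried by the high-energy dimensions, up to an additive $C\varepsilon$.
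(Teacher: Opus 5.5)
Your skeleton matches the paper's proof. The lower bound in \eqref{eq:decomp_sandwich} comes from the chain rule plus nonnegativity of conditional mutual information, and \eqref{eq:r_scalar_upper} comes from data processing. For the upper bound, the paper does not derive the increment bound from Theorem~\ref{thm:mi_energy_bound} either. It simply postulates $I(S_{ij};\mathbf{R}^{(l)}_{ij,\mathcal L}\mid\mathbf{R}^{(l)}_{ij,\mathcal H})\le\sum_{m\in\mathcal L}I(S_{ij};R^{(l)}_{ij,m})$ as \eqref{eq:increment_additive_cond}, then applies Theorem~\ref{thm:mi_energy_bound} termwise. So your diagnosis is right, and Route~2 is the paper's move.

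\textbf{Route 2 as phrased does not work.} Independence of the low-energy coordinates \emph{given $S_{ij}$ and the high block} does not imply $I(S;R_m\mid\cdot)\le I(S;R_m)$. Take independent fair bits $S,N$ with $\mathbf R_{\mathcal H}=S\oplus N$ and $R_m=N$. Given $(S,\mathbf R_{\mathcal H})$ the low block is deterministic, so your condition holds, yet $I(S;R_m)=0$ while $I(S;R_m\mid\mathbf R_{\mathcal H})=\log 2$. You need to condition on $S$ alone, with the high block among the independent variables: $R_m\perp(\mathbf R_{\mathcal H},R_{<m})\mid S$. Then, writing $Z=(\mathbf R_{\mathcal H},R_{<m})$, you get $I(S;R_m\mid Z)=I(S;R_m)-I(R_m;Z)\le I(S;R_m)$. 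In the paper's model every $R_{ij,m}$ is a function of the same random pair $(i,j)$, so this is a strong modeling restriction. You may as well assume the paper's weaker aggregate condition directly.

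\textbf{Route 1 can be carried out, but not in the form you describe.} Conditioning the paper's argument pointwise on $Z=z$ fails. That proof couples $S=1$ and $S=0$ through a common anchor $i\sim\pi$ and uses that $j_\pm$ both have marginal $\pi$ to reach $2\mathrm{Var}_\pi(f)$, then applies Poincar\'e. But $Z$ is a function of the same pair $(i,j)$, so conditioning on it reweights the pair law differently under $S=1$ and $S=0$, and neither step survives. Likewise, $\mathrm{Var}(R_m\mid Z=z)$ need not be bounded by $B^2\mathcal E_m/\lambda_{\mathrm{gap}}$ for individual $z$. What works is to smooth every coordinate with independent noise, $\widetilde R_k=R_k+\xi_k$, and average the Gaussian-channel bound over $Z\triangleq(\widetilde{\mathbf R}_{\mathcal H},\widetilde R_{<m})$, with $f$ the $m$-th feature as in the proof of Theorem~\ref{thm:mi_energy_bound}:
\[
\begin{aligned}
I(S;\widetilde R_m\mid Z)&\le I(R_m;\widetilde R_m\mid Z)\le\mathbb{E}_Z\Bigl[\tfrac12\log\Bigl(1+\tfrac{\mathrm{Var}(R_m\mid Z)}{\sigma^2}\Bigr)\Bigr]\\
&\le\frac{\mathrm{Var}(R_m)}{2\sigma^2}\le\frac{B^2\,\mathrm{Var}_{\pi}(f)}{2\sigma^2}\le\frac{B^2\,\mathcal{E}^{(l)}_m}{2\sigma^2\lambda_{\mathrm{gap}}}=\frac{C}{2}\,\mathcal{E}^{(l)}_m .
\end{aligned}
\]
The steps use, in order:
\begin{itemize}
\item the conditional Markov chain $S\to R_m\to\widetilde R_m$ given $Z$;
\item the Gaussian-channel bound $I(X;X+\xi)\le\frac12\log(1+\mathrm{Var}(X)/\sigma^2)$;
\item Jensen's inequality together with the law of total variance;
\item $|f_i|\le B$ and the fact that $j$ has marginal $\pi$ under both $\mathbb P_\pm$;
\item Poincar\'e.
\end{itemize}
Summing over $\mathcal L$ gives the right half of \eqref{eq:decomp_sandwich} for $\widetilde{\mathbf R}$ with $C\varepsilon/2$ and no independence assumption at all.

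\textbf{This does not transfer to the raw scores.} $\widetilde R$ is a garbling of $R$, so $I(S;\widetilde R)\le I(S;R)$; the paper's \eqref{eq:data_processing} runs the wrong way. Indeed, rescaling $f\mapsto\delta f$ leaves $I(S;R_m)$ unchanged but multiplies $B^2\mathcal E_m$ by $\delta^4$. So a raw-score bound with this $C$ fails for small $\delta$ whenever $I(S;R_m)>0$. You therefore have two options. Either state the decomposition for the smoothed scores and use the argument above, or, as the paper does, treat Theorem~\ref{thm:mi_energy_bound} as a black box and add an explicit, correctly formulated extra hypothesis.
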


Theorem~\ref{thm:energy_decomposition_main} shows that aggregating similarity across all feature dimensions, low-energy dimensions can contribute at most a bounded amount of information to positive learning effectiveness, while the dominant informative signal is determined by high-energy dimensions.
The proof and further implications are provided in Appendix~\ref{app:energy_decomposition_proof}.

\section{Methodology}
\label{sec:Method}
\begin{figure*}[t]
\centering
\includegraphics[width=1\textwidth]{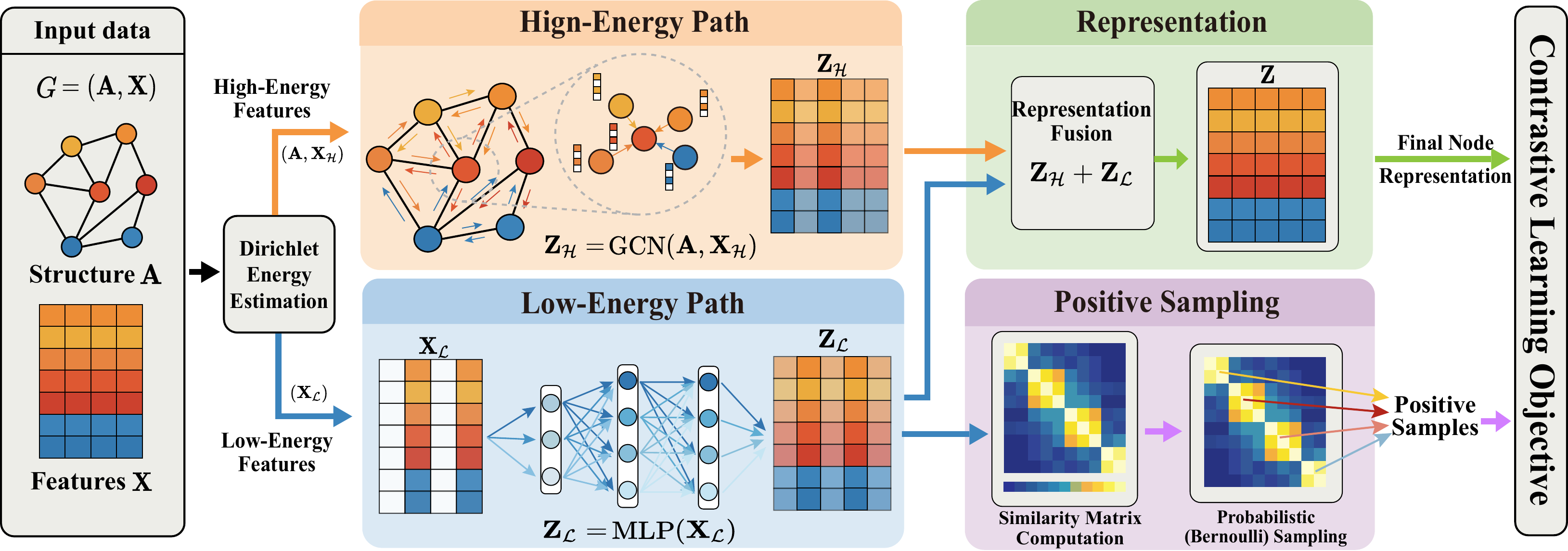} 
\caption{Overview of the proposed SPGCL. Given an input graph $G=(\mathbf{A},\mathbf{X})$ with adjacency matrix $\mathbf{A}$ and node features $\mathbf{X}$, SPGCL first estimates the Dirichlet energy of each feature dimension and partitions $\mathbf{X}$ into high-energy features $\mathbf{X}_{\mathcal{H}}$ and low-energy features $\mathbf{X}_{\mathcal{L}}$.
High-energy features are propagated through a GCN encoder to produce $\mathbf{Z}_{\mathcal{H}}=\mathrm{GCN}(\mathbf{A},\mathbf{X}_{\mathcal{H}})$, while low-energy features are transformed independently by a MLP, yielding $\mathbf{Z}_{\mathcal{L}}=\mathrm{MLP}(\mathbf{X}_{\mathcal{L}})$.
The two components are then combined via element-wise addition to obtain the final node representation $\mathbf{Z}=\mathbf{Z}_{\mathcal{H}}+\mathbf{Z}_{\mathcal{L}}$.
In parallel, low-energy features are used to compute a similarity matrix.
Finally, the resulting representations $\mathbf{Z}$ and the positive samples are used to optimize Eq.\ref{eq:InfoNCELoss}.
}
\label{fig:overview}
\end{figure*}

Based on the theoretical analysis in Section~\ref{sec:theoretical_analysis}, we propose \textbf{S}eparate \textbf{P}ropagation \textbf{G}raph \textbf{C}ontrastive \textbf{L}earning (SPGCL), a novel method designed to restore the learning efficacy of positive samples in graph contrastive learning.
In this section, we present the overall methodology of SPGCL. 
An overview of the framework is shown in Figure~\ref{fig:overview}.
We first describe how SPGCL modifies the message passing process to prevent trivial similarity inflation and preserve learning signals that are essential for positive alignment (in Section ~\ref{subsec:energy_partition}). 
We then introduce a principled strategy for constructing positive samples that emphasizes stable semantic relations while reducing the influence of unreliable positives (in Section ~\ref{subsec:positive_sampling}). 
Finally, we present the training objective that integrates these designs into a unified contrastive learning framework.
Throughout this section, we use $\mathbf{Z}$ to denote the resulting node representations.

\subsection{Energy Aware Propagation (EAP)}
\label{subsec:energy_partition}

Given the input feature matrix $\mathbf{X}$, our goal is to identify feature dimensions that are informative for positive alignment.
As shown in Section~\ref{sec:theoretical_analysis}, the contribution of each feature dimension to positive learning effectiveness is upper bounded by its Dirichlet energy.
This motivates us to explicitly quantify the Dirichlet energy of individual feature dimensions and use it to guide feature partition.
Following the definition in Section~\ref{sec:preliminary}, the Dirichlet energy of the $m$-th feature dimension $\mathbf{X}_{:,m}\in\mathbb{R}^{N}$ can be estimated as:
\begin{equation}
\label{eq:feature_energy_edge}
\mathcal{E}_m
=
\frac{1}{2}\sum_{(v_i,v_j)\in\mathcal{T}}
\mathbf{A}_{ij}\big(x_{i,m}-x_{j,m}\big)^2,
\end{equation}
which measures the total variation of the $m$-th feature across graph edges.
A larger value of $\mathcal{E}_m$ indicates that this feature exhibits stronger local variation and is therefore less affected by excessive smoothing induced by message passing.
In practice, explicitly computing Eq.~\eqref{eq:feature_energy_edge} for all feature dimensions incurs a significant computational cost.
To improve efficiency, we can estimate the Dirichlet energy as follows:
\begin{equation}
\label{eq:energy_estimator}
\widehat{\mathcal{E}}_m
\;\propto\;
\mathbb{E}_{(v_i,v_j)\sim\mathcal{T}}
\!\left[
\big(x_{i,m}-x_{j,m}\big)^2
\right],
\end{equation}
where $(v_i,v_j)\sim\mathcal{T}$ denotes uniformly sampling an edge from the edge set.
Since SPGCL only relies on the relative ordering of feature energies, this approximation is sufficient for guiding feature partition. Based on the estimated energies, we partition the feature dimensions into high- and low-energy subsets by selecting the $\operatorname{TopK}$ dimensions with the largest Dirichlet energies:
\begin{equation}
\label{eq:feature_split}
\mathcal{H} = \operatorname{TopK}\big(\{\mathcal{\widehat{E}}_m\}_{m=1}^F\big),
\qquad
\mathcal{L} = \{1,\dots,F\}\setminus\mathcal{H},
\end{equation}
where $F$ denotes the total number of feature dimensions, $\mathcal{H}$ is the set of high-energy dimensions, and $\mathcal{L}$ contains the remaining low-energy ones.
This partition is computed once before training and remains fixed throughout optimization.

Following the feature partition, we separate message passing for high and low energy features to explicitly mitigate the pre-alignment effect caused by message passing.
As shown in Section~\ref{sec:theoretical_analysis}, propagating low Dirichlet energy features through message passing mainly amplifies similarity without providing effective learning signals.
We aim to prevent such redundant similarity accumulation while preserving informative variations.
Let $\mathbf{X}_{\mathcal{H}}$ and $\mathbf{X}_{\mathcal{L}}$ denote the feature matrices restricted to the high-energy dimension set $\mathcal{H}$ and the low-energy dimension set $\mathcal{L}$, respectively. High-energy features are propagated through a standard message passing GCN encoder:
\begin{equation}
\label{eq:high_energy_gnn}
\mathbf{Z}_{\mathcal{H}}
=
\mathrm{GCN}\!\left(\mathbf{A}, \mathbf{X}_{\mathcal{H}} \right) = \sigma\!\left(
\mathbf{P}\,\mathbf{X}_{\mathcal{H}}\,\mathbf{W}_{\mathcal{H}}
\right) ,
\end{equation}
where $\mathrm{GCN}(\cdot)$ follows the message passing formulation in Eq.~\ref{eq:gcn}.
In contrast, low-energy features are not propagated across the graph.
Instead, they are transformed independently using a feature-wise mapping:
\begin{equation}
\label{eq:low_energy_linear}
\mathbf{Z}_{\mathcal{L}}
=
\mathrm{MLP}\!\left(\mathbf{X}_{\mathcal{L}}\right) = \sigma\!\left(
\mathbf{X}_{\mathcal{L}}\,\mathbf{W}_{\mathcal{L}}
\right),
\end{equation}
where $\mathrm{MLP}(\cdot)$ applies a node-wise transformation independently to each node, without involving any graph propagation.
This prevents low-energy features from dominating representation similarity through excessive smoothing, thereby alleviating the pre-alignment effect identified in our theoretical analysis.
We then obtain the final node representation by combining the two components via element-wise addition as follows:
\begin{equation}
\label{eq:feature_fusion}
\mathbf{Z}
=
\mathbf{Z}_{\mathcal{H}} + \mathbf{Z}_{\mathcal{L}},
\end{equation}
where the addition is performed along the feature dimension.
This representation serves as the final node embedding for training and testing.

\subsection{Energy-guided Positive Sampling (EPS)}
\label{subsec:positive_sampling}
While low-energy features contribute little to effective positive alignment, the theoretical analysis in Section~\ref{sec:theoretical_analysis} shows that they encode smooth and stable signals over the graph.
Such features are less sensitive to perturbations, making them suitable for identifying reliable semantic relations.
We therefore use low-energy features to guide positive samples construction.
Specifically, given the low-energy feature matrix $\mathbf{X}_{\mathcal{L}}$, we first perform row-wise $\ell_2$ normalization and compute a pairwise similarity matrix:
\begin{equation}
\label{eq:low_energy_similarity}
\textbf{S}_{ij}
=
\frac{
\mathbf{x}_{i,\mathcal{L}}^\top \mathbf{x}_{j,\mathcal{L}}
}{
\|\mathbf{x}_{i,\mathcal{L}}\|_2 \, \|\mathbf{x}_{j,\mathcal{L}}\|_2
},
\end{equation}
where $\mathbf{x}_{i,\mathcal{L}}$ denotes the low-energy feature vector of node $v_i$.
The similarity score $\textbf{S}_{ij}\in[0,1]$ reflects the consistency of node pairs under smooth feature components.
Based on $\mathbf{S}$, we define a probabilistic positive sampling rule.
For each edge $(v_i,v_j)\in\mathcal{T}$, a binary random variable is sampled as
\begin{equation}
\label{eq:bernoulli_sampling}
\textbf{M}_{ij} \sim \mathrm{Bernoulli}(p_{ij}), p_{ij} = \min\big(\alpha\, S_{ij},\, 1\big)
\end{equation}
where $\alpha>0$ is a scaling hyper-parameter controlling the overall retention rate.
A node pair $(v_i,v_j)$ is selected as a positive pair if $\textbf{M}_{ij}=1$.
This sampling strategy biases positive selection toward node pairs that exhibit stable similarity under low-energy features.
Such an effect is particularly beneficial on heterophilic graphs, where local neighborhoods often fail to reflect semantic consistency.
The selected positive pairs are directly used in the contrastive objective.
Finally, SPGCL optimizes the standard InfoNCE loss in Eq.~\eqref{eq:InfoNCELoss}.
The key difference from existing GCL methods lies in how we construct the positive set $\mathcal{P}$.
SPGCL does not introduce any additional loss terms.
The effectiveness of SPGCL stems from the energy-aware design of feature propagation and positive sampling, which directly follows from the theoretical analysis in Section~\ref{sec:theoretical_analysis}.

\section{Experiments}
\label{eq:sec_exp}
In this section, we comprehensively evaluate the effectiveness of the proposed SPGCL by comparing it with multiple baselines on node-level tasks, including node classification and node clustering.
Specifically, in Section \ref{sec:Experimental Settings}, we describe the experimental settings, including datasets, baselines, and implementation details.
In Section \ref{sec:performance_analysis}, we analyze the performance of SPGCL on node classification tasks under both homophilic and heterophilic graph settings, as well as on node clustering tasks.
In Section \ref{sec:model_analysis}, we further conduct hyper-parameter sensitivity analysis and ablation studies to quantify the contribution of each module in SPGCL.

\subsection{Experimental Settings}
\label{sec:Experimental Settings}
\textbf{Datasets.}
We evaluate SPGCL on $12$ widely used datasets covering both homophilic and heterophilic graphs.
For homophilic graphs, we use Cora, CiteSeer, PubMed \cite{citationnetwork, Cora}, Photo, Computers \cite{PhotoComputers}, and CS \cite{CSPhysics}.
For heterophilic graphs, we use Chameleon, Cornell, Texas, Wisconsin, Crocodile and  Actor \cite{wisconsintexascornellactor, chameleoncrocodile}.
For detailed statistics and descriptions on these datasets, please refer to Appendix \ref{app_datasets}.

\textbf{Baselines.}
We compare SPGCL with a standard semi-supervised learning method: GCN \cite{GCN}, and the recent state-of-the-art graph self-supervised learning methods: GRACE \cite{GRACE}, DGI \cite{DGI}, BGRL \cite{BGRL}, SCE \cite{SCE}, GCA \cite{GCA}, LocalGCL \cite{LocalGCL}, ProGCL \cite{ProGCL}, AFGRL \cite{AFGRL}, SGRL \cite{SGRL}, E2Neg \cite{E2Neg}, StrGCL \cite{StrGCL}, CSGCL \cite{CSGCL}, SIGNA \cite{SIGNA}, HGRL \cite{HGRL}, DSSL \cite{DSSL}, GraphACL \cite{GraphACL}.
The descriptions and implementation details of baselines are given in Appendix \ref{baselines}.

\textbf{Setups.} 
All graph self-supervised learning methods are pre-trained with a standard GCN backbone, where the number of layers follows the optimal setting reported in the original papers and their released code.
We randomly initialize model parameters and train the encoder with the Adam optimizer \cite{Adam}.
Each experiment is repeated ten times, and the average results and standard deviations are reported.
For node classification, we adopt a standard linear evaluation protocol.
On homophilic graphs, the classifier setting follows \cite{BGRL}, while on heterophilic graphs, we follow the evaluation protocol of \cite{GraphACL}. 
In all cases, a linear classifier is trained on top of the frozen representations, and test accuracy is reported.
For node clustering, we follow \cite{SGRL}.
More detailed settings can be found in Appendix \ref{app:experimental setups}.

\begin{table*}[ht]
\caption{Node classification results on homophilic graphs. Best results are in bold, and second-best are underlined. Methods with $*$ report results from original papers. $\textbf{X}$, $\textbf{A}$, and $\textbf{Y}$ denote node features, adjacency matrix, and node labels. GCN is drawn from \cite{PiGCL}.}
\centering
\resizebox{1.0\textwidth}{!}{
\begin{tabular}{lccccccc}
\toprule
\textbf{Method}     & \textbf{Training Data} & \textbf{Cora} & \textbf{CiteSeer} & \textbf{PubMed} & \textbf{Photo} & \textbf{CS} & \textbf{Computers} \\ 
\midrule
GCN               & $\textbf{X, A, Y}$       & 82.80 $\pm$ 0.00 & 72.00 $\pm$ 0.00 & 84.80 $\pm$ 0.00 & 93.03 $\pm$ 0.00 & 93.03 $\pm$ 0.00 & 86.51 $\pm$ 0.00  \\
\midrule
GRACE               & $\textbf{X, A}$          & 84.23 $\pm$ 0.37  & 70.77 $\pm$ 0.50  &86.02 $\pm$ 0.19  & 92.57 $\pm$ 0.34  & 93.05 $\pm$ 0.10  & 88.78 $\pm$ 0.15  \\
DGI                 & $\textbf{X, A}$          & 83.72 $\pm$ 0.55  & 70.52 $\pm$ 0.64 & 86.12 $\pm$ 0.22  & 92.94 $\pm$ 0.04  & 93.17 $\pm$ 0.11  & 87.89 $\pm$ 0.45  \\
BGRL                & $\textbf{X, A}$          &83.25 $\pm$ 0.82 &70.32 $\pm$ 0.77  &85.57 $\pm$ 0.34  &93.26 $\pm$ 0.16  &93.41 $\pm$ 0.11  &89.41 $\pm$ 0.09  \\
SCE                    & $\textbf{X, A}$       &84.72 $\pm$ 0.70  &\underline{72.67 $\pm$ 0.70}  &85.03 $\pm$ 0.25  &92.67 $\pm$ 0.13  &93.06 $\pm$ 0.20  &89.45 $\pm$ 0.27  \\
GCA                 & $\textbf{X, A}$          & \underline{85.08 $\pm$0.47}  &71.22 $\pm$ 0.49  &86.46 $\pm$ 0.22  &92.84 $\pm$ 0.26  &93.48 $\pm$ 0.09  &89.95 $\pm$ 0.02  \\
LocalGCL                 & $\textbf{X, A}$          &84.82 $\pm$ 0.79  &69.48 $\pm$ 0.35  &86.53 $\pm$ 0.25  &93.56 $\pm$ 0.17  &92.52 $\pm$ 0.13  &89.59 $\pm$ 0.13  \\
ProGCL                 & $\textbf{X, A}$       &84.91 $\pm$ 0.85  & 71.73 $\pm$ 0.26   & 86.97 $\pm$ 0.18  &93.09 $\pm$ 0.24  &93.36 $\pm$ 0.09  &88.09 $\pm$ 0.11  \\
AFGRL*                 & $\textbf{X, A}$        & -  & - & -  & 93.22 $\pm$ 0.28  & 93.27 $\pm$ 0.17  & 89.88 $\pm$ 0.33  \\
SGRL & $\textbf{X, A}$ & 83.93 $\pm$ 0.30 & 70.41 $\pm$ 0.62  & 86.02 $\pm$ 0.25 & \underline{93.95 $\pm$ 0.01} & \underline{94.08 $\pm$ 0.08} & 90.03 $\pm$ 0.05 \\
E2Neg                  & $\textbf{X, A}$       &84.20 $\pm$ 0.36  &70.30 $\pm$ 0.61  &\underline{87.08 $\pm$ 0.12}  &93.10 $\pm$ 0.26  &92.99 $\pm$ 0.06  &89.02 $\pm$ 0.36  \\
StrGCL                 & $\textbf{X, A}$       &84.86 $\pm$ 0.54  &72.49 $\pm$ 0.11  &86.52 $\pm$ 0.19  &93.94 $\pm$ 0.12  &94.06 $\pm$ 0.03  &90.32 $\pm$ 0.05  \\
CSGCL & $\textbf{X, A}$ & 82.66 $\pm$ 0.43 & 69.91 $\pm$ 0.47  & 86.48 $\pm$ 0.24 & 93.21 $\pm$ 0.07 & 93.63 $\pm$ 0.13 & 90.67 $\pm$ 0.21 \\
SIGNA & $\textbf{X, A}$  & 82.19 $\pm$ 0.35 & 69.41 $\pm$ 0.47  & 84.23 $\pm$ 0.09 & 93.57 $\pm$ 0.05 & 93.12 $\pm$ 0.05 & \underline{90.90 $\pm$ 0.09} \\
\midrule
\textbf{SPGCL(Ours)} & $\textbf{X, A}$          & \textbf{85.44 $\pm$ 0.13} & \textbf{73.49 $\pm$ 0.26}  & \textbf{87.19 $\pm$ 0.13} & \textbf{94.83 $\pm$ 0.07} & \textbf{95.03 $\pm$ 0.03} & \textbf{91.04 $\pm$ 0.09} \\
\bottomrule
\end{tabular}
}
\label{tb:Classification}
\end{table*}

\subsection{Performance Analysis}
\label{sec:performance_analysis}
\textbf{Node classification on homophilic graphs.}
As shown in Table \ref{tb:Classification}, 
SPGCL achieves the best accuracy on all six homophilic datasets.
The gain mainly comes from separate propagation, which removes the trivial positive similarity introduced by message passing and makes positive pairs provide non-trivial learning signals.
First, compared with methods that learn only use positive samples, such as BGRL and AFGRL, SPGCL outperforms them significantly.
Although these methods avoid explicit negative sampling through bootstrap-style training strategy, they still operate on representations produced by standard GCN propagation, where message passing induces a pre-alignment effect that trivializes positive similarity.
As a result, the learning signal extracted from positive pairs remains limited, whereas SPGCL explicitly mitigates this issue through energy-aware feature propagation.
Moreover, SPGCL outperforms methods that improve data augmentation or positive sampling strategies, including GCA, LocalGCL, and CSGCL.
These methods strengthen positive construction through 
improved perturbations or sampling rules, but they still build positives on embeddings that have already been partly aligned by message passing.
This limits the effective learning signals from positive samples. 
Finally, methods that focus on negative sample optimization, such as ProGCL and SGRL, achieve competitive but consistently inferior performance compared to SPGCL.
This suggests that refining negative samples alone is insufficient to exploit the full potential of contrastive learning, and that effectively restoring informative positive learning plays a more critical role in representation quality.

\textbf{Node classification on heterophilic datasets.}
\begin{table}[t]
\caption{Node clustering performance evaluated by NMI and Homogeneity. Best results are in bold and second-best are underlined.}
\label{tb:clustering}
\centering
\renewcommand{\arraystretch}{1.12}

\resizebox{\linewidth}{!}{
\begin{tabular}{l l c c c c c}
\toprule
 &  & GRACE & DGI & BGRL & SGRL & \textbf{SPGCL}\\
\midrule

\multirow{2}{*}{Computers}
& NMI  & 0.4793 & 0.4630 & 0.5364 & \underline{0.5380} & \textbf{0.5626} \\
& Hom. & 0.5222 & 0.4836 & \underline{0.5869} & 0.5705 & \textbf{0.6132} \\
\midrule

\multirow{2}{*}{Photo}
& NMI  & 0.6513 & 0.5487 & \underline{0.6841} & 0.6788 & \textbf{0.7135} \\
& Hom. & 0.6657 & 0.5557 & \underline{0.7004} & 0.6786 & \textbf{0.7251} \\
\midrule

\multirow{2}{*}{CS}
& NMI  & 0.7562 & 0.7162 & 0.7732 & \underline{0.7961} & \textbf{0.8046} \\
& Hom. & 0.7909 & 0.7428 & 0.8041 & \underline{0.8216} & \textbf{0.8345} \\
\bottomrule
\end{tabular}
}
\end{table}
To further verify the effectiveness of SPGCL, we evaluate it on six widely used heterophilic graphs, and the results are reported in Table \ref{tb:Classification_for_heter}.
Compared with methods designed for homophilic graphs, including GRACE, DGI, BGRL, GCA, and LocalGCL, SPGCL achieves significant performance improvements across all datasets.
This gap mainly arises from the positive sampling strategies 
(augmentation-based or neighbor-based) adopted by these methods.
Augmentation-based methods construct positive samples by perturbing node features or graph structures, which destroy core semantic information in 
heterophilic graphs, leading to unreliable positives.
Neighbor-based positive sampling, which works well under the homophily assumption, becomes ineffective in heterophilic settings, as adjacent nodes often belong to different classes and enforcing their alignment introduces misleading signals.
Moreover, SPGCL also outperforms methods designed for heterophilic graphs.
Although these methods introduce different designs to handle heterophily, such as modeling higher-order dependencies \cite{HGRL}, latent semantics \cite{DSSL}, or asymmetric neighborhoods \cite{GraphACL},
they all operate on representations where all feature dimensions are uniformly propagated through the graph.
As a result, positive samples are aligned without distinguishing which feature dimensions contribute to their similarity.
In contrast, SPGCL explicitly separates feature propagation based on Dirichlet energy, which introduces feature-level control over how positive similarity is constructed.
\begin{table*}[ht]
\caption{Node classification results on heterophilic graphs. Best results are in bold, and second-best are underlined. Methods with $*$ report results from \cite{GraphACL}. $\textbf{X}$, $\textbf{A}$, and $\textbf{Y}$ denote node features, adjacency matrix, and node labels.}
\centering
\resizebox{1.0\textwidth}{!}{
\begin{tabular}{lccccccc}
\toprule
\textbf{Method}     & \textbf{Training Data} & \textbf{Chameleon} & \textbf{Cornell} & \textbf{Texas} & \textbf{Wisconsin} & \textbf{Crocodile} & \textbf{Actor} \\ 
\midrule
GCN               & $\textbf{X, A, Y}$          & 62.15 $\pm$ 2.47 & 43.51 $\pm$ 8.87 & 58.73 $\pm$ 4.77 & 56.86 $\pm$ 7.95 & 64.31 $\pm$ 0.71 & 26.43 $\pm$ 1.21 \\
MLP               & $\textbf{X, Y}$ & 49.06 $\pm$ 1.86 & 50.91 $\pm$ 8.99 & 58.92 $\pm$ 4.56 & 65.49 $\pm$ 4.55 & 64.41 $\pm$ 0.53 & 29.88 $\pm$ 1.12 \\
\midrule
GRACE    & $\textbf{X, A}$ & 64.58 $\pm$ 1.84 & 42.70 $\pm$ 7.62 & 60.27 $\pm$ 5.10 & 58.04 $\pm$ 7.05 & 71.32 $\pm$ 0.33 & 29.93 $\pm$ 0.91 \\
DGI      & $\textbf{X, A}$ & 64.91 $\pm$ 1.86 & 50.00 $\pm$ 6.65 & 63.78 $\pm$ 4.45 & 56.86 $\pm$ 6.98 & 71.26 $\pm$ 0.59 & 29.28 $\pm$ 1.11 \\
BGRL     & $\textbf{X, A}$ & 63.86 $\pm$ 2.89 & 43.70 $\pm$ 5.81 & 59.19 $\pm$ 6.91 & 52.16 $\pm$ 6.21 & 54.35 $\pm$ 0.89 & 24.83 $\pm$ 0.99 \\
GCA      & $\textbf{X, A}$ & 60.35 $\pm$ 2.69 & 47.57 $\pm$ 6.88 & 63.24 $\pm$ 4.80 & 57.84 $\pm$ 8.48 & 68.87 $\pm$ 0.88 & 29.70 $\pm$ 1.04 \\
LocalGCL & $\textbf{X, A}$ & 61.86 $\pm$ 1.77 & 37.30 $\pm$ 9.08 & 52.70 $\pm$ 5.73 & 46.47 $\pm$ 5.47 & 63.04 $\pm$ 0.48 & 25.37 $\pm$ 0.49 \\
HGRL* & $\textbf{X, A}$ & 65.82 $\pm$ 0.61 & 51.78 $\pm$ 1.03 & 61.83 $\pm$ 0.71 & 63.90 $\pm$ 0.58 & 61.87 $\pm$ 0.45 & 27.95 $\pm$ 0.30 \\
DSSL* & $\textbf{X, A}$ & 66.15 $\pm$ 0.32 & 53.15 $\pm$ 1.28 & 62.11 $\pm$ 1.53 & 62.25 $\pm$ 0.55 & 62.98 $\pm$ 0.51 & 28.15 $\pm$ 0.31 \\
GraphACL & $\textbf{X, A}$ & \underline{69.12 $\pm$ 2.11} & \underline{60.54 $\pm$ 3.86} & \underline{71.62 $\pm$ 7.12} & \underline{70.20 $\pm$ 5.61} & \underline{66.09 $\pm$ 1.00} & \underline{29.95 $\pm$ 0.93} \\
\midrule
\textbf{SPGCL(Ours)} & $\textbf{X, A}$          & \textbf{72.26 $\pm$ 1.66} & \textbf{75.41 $\pm$ 6.43} & \textbf{80.81 $\pm$ 6.30} & \textbf{83.53 $\pm$ 4.26} & \textbf{77.43 $\pm$ 0.67} & \textbf{37.23 $\pm$ 1.19} \\
\bottomrule
\end{tabular}
}
\label{tb:Classification_for_heter}
\end{table*}

\begin{table*}[t]
\caption{Ablation study on node classification task.}
\centering
\footnotesize
\renewcommand{\arraystretch}{0.95}
\resizebox{0.9\linewidth}{!}{
\begin{tabular}{lcccccc}
\toprule
\textbf{Variant}     & \textbf{Cora} & \textbf{Photo} & \textbf{CS} & \textbf{Cornell} & \textbf{Texas}  & \textbf{Actor} \\ 
\midrule
\textbf{SPGCL}   & \textbf{85.44 $\pm$ 0.13} & \textbf{94.83 $\pm$ 0.07} & \textbf{95.03 $\pm$ 0.03} & \textbf{75.41 $\pm$ 6.43} & \textbf{80.81 $\pm$ 6.30} & \textbf{37.23 $\pm$ 1.19} \\
\midrule
w/o EAP  & 84.10 $\pm$ 0.21 & 93.21 $\pm$ 0.12 & 92.99 $\pm$ 0.15 & 50.27 $\pm$ 5.87 & 68.91 $\pm$ 8.76 & 24.86 $\pm$ 1.20 \\
\midrule
w/o EPS & 84.91 $\pm$ 0.38 & 93.99 $\pm$ 0.11 & 94.67 $\pm$ 0.01 & 66.76 $\pm$ 5.41 & 78.65 $\pm$ 7.48 & 35.00 $\pm$ 1.03 \\
\midrule
w/o EAP \& EPS & 83.70 $\pm$ 0.24 & 93.73 $\pm$ 0.08 & 93.67 $\pm$ 0.04 & 50.08 $\pm$ 7.48 & 66.76 $\pm$ 7.76 & 23.33 $\pm$ 1.27 \\
\bottomrule
\end{tabular}
}
\label{tb:ablation}
\end{table*}

\textbf{Node clustering.}
We follow the evaluation protocol and report the results from \cite{SGRL}.
As shown in Table \ref{tb:clustering}, SPGCL outperforms all baselines.
Notably, SPGCL shows a pronounced improvement on Photo, surpassing the second-best method by 2.94\% in NMI and 2.47\% in Homogeneity, indicating substantially better cluster separation.
These results indicate that SPGCL produces representations where nodes with the same labels are more likely to be grouped into the same clusters.
This stems from more effective positive learning in SPGCL, leading to higher-quality representations.
We further visualize the learned embeddings using t-SNE, and present the results in the Appendix. 

\subsection{Model Analysis}
\label{sec:model_analysis}
\begin{figure}[t]
\begin{subfigure}[t]{0.505\linewidth}
    \centering
    \includegraphics[width=\linewidth]{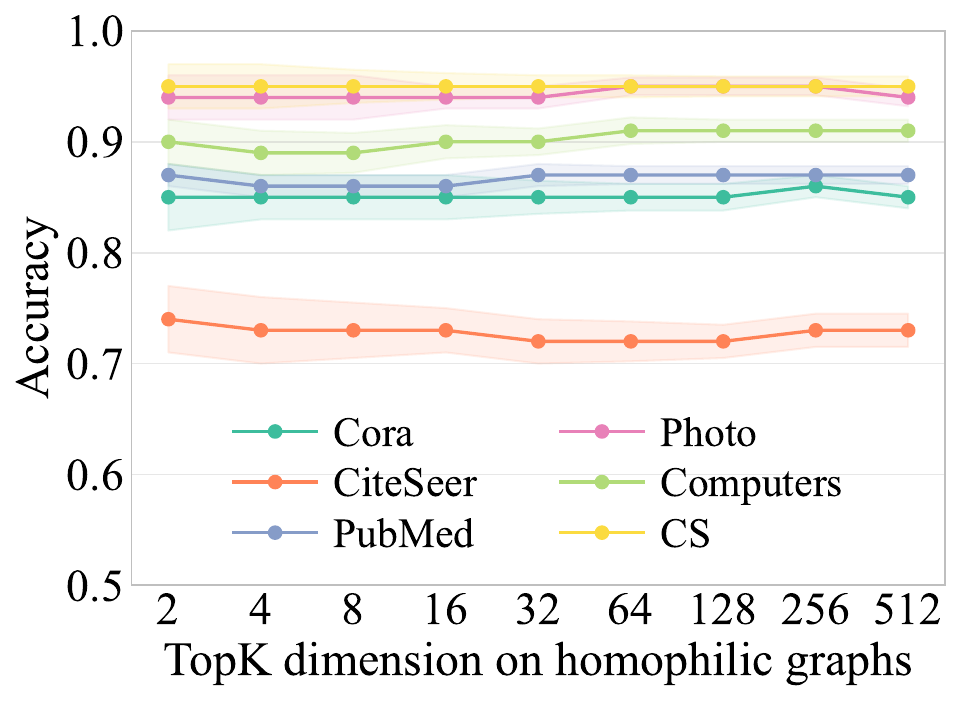}
    \label{fig:fig:hyperparameter analysis_a}
\end{subfigure}
\begin{subfigure}[t]{0.48\linewidth}
    \centering
    \includegraphics[width=\linewidth]{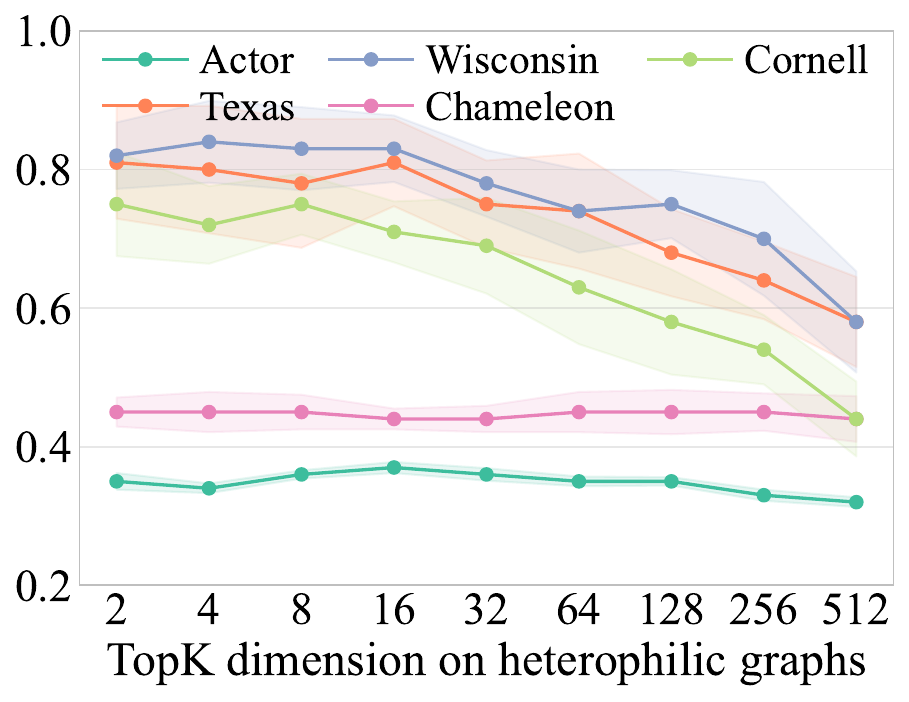}
    \label{fig:fig:hyperparameter analysis_b}
\end{subfigure}

\caption{Hyper-parameter analysis of SPGCL with respect to the $\operatorname{TopK}$ threshold. We report the classification performance on both homophilic (left) and heterophilic (right) graphs.}
\label{fig:fig:hyperparameter analysis}
\end{figure}

Table \ref{tb:ablation} reports the ablation results on homophilic and heterophilic datasets. Removing either Energy-Aware Propagation (EAP) or Energy-guided Positive Sampling (EPS) consistently degrades performance, and the drop is particularly significant when EAP is removed. This confirms that separating feature propagation based on Dirichlet energy is the core factor for restoring effective positive learning. 
We also conduct hyper-parameter experiments.
Figure \ref{fig:fig:hyperparameter analysis} analyzes the impact of the $\operatorname{TopK}$ used to select high-energy feature dimensions. On homophilic graphs, performance remains relatively stable as $\operatorname{TopK}$ increases, with a slight downward trend. This reflects a trade-off between two effects: expanding the high-energy subset weakens the selectivity of feature propagation for positive alignment, while simultaneously enlarging the low-energy subset used for positive sampling, which yields more stable positive pairs. These two effects partially offset each other, resulting in overall stable performance.
In contrast, on heterophilic graphs, performance exhibits a clear decreasing trend as $\operatorname{TopK}$ increases. In this setting, propagating a broader set of features amplifies the mixing of semantically dissimilar neighbors, causing positive pairs from different classes to become overly similar. Although low-energy features can still identify relatively reliable positives, the degradation of positive learning signals induced by propagation dominates, leading to a more pronounced performance drop.

\section{Conclusion}
In this paper, we reveal that positive sampling may not work in GCL as be widely recognized. Empirically, we observe that existing GCL methods can achieve competitive performance without positive samples. Theoretically we attribute this phenomenon to the pre-alignment effect, where the message passing of graph neural network increases the similarity between positive samples before contrastive optimization. We further show that the learning effectiveness of positive samples is upper bounded by feature-wise Dirichlet energy. Pre-alignment reduces the Dirichlet energy, thus weakens learning from positive samples. To address this, we propose SPGCL, which separates feature propagation according to Dirichlet energy: high-energy features are propagated to preserve informative variation for positive learning, while low-energy features are exploited to guide reliable positive sampling. Extensive experiments demonstrate that SPGCL can effectively learn from positive samples and outperforms existing GCL methods.

\section*{Acknowledgments}
This work was supported by the National Natural Science Foundation of China (No. 62276187, No. 62422210, No.92370111, and No. 62272340).

\section*{Impact Statement}
This paper aims to contribute to the theoretical understanding of graph contrastive learning within the broader field of machine learning. 
By analyzing how message passing influences learning from positive samples, our work offers a perspective on contrastive pre-training paradigms, which is widely used in graph representation learning. 
We hope that this study may help clarify certain aspects of positive sample utilization and support future investigations into contrastive objectives for graph models.

This work is primarily theoretical in nature and does not introduce new datasets, deployable systems, or models intended for real-world use. As such, we do not anticipate any direct or indirect negative societal impacts or ethical concerns arising from this study. The proposed analysis does not involve sensitive or personal data, human subjects, or crowdsourcing, nor does it introduce methods that could be readily misused for harmful purposes. To the best of our knowledge, this research does not raise ethical issues.




\nocite{langley00}

\bibliography{slz}

@article{citationnetwork,
  title={Collective classification in network data},
  author={Sen, Prithviraj and Namata, Galileo and Bilgic, Mustafa and Getoor, Lise and Galligher, Brian and Eliassi-Rad, Tina},
  journal={AI magazine},
  volume={29},
  number={3},
  pages={93--93},
  year={2008}
}

@inproceedings{Cora,
  title={Revisiting semi-supervised learning with graph embeddings},
  author={Yang, Zhilin and Cohen, William and Salakhudinov, Ruslan},
  booktitle={International conference on machine learning},
  pages={40--48},
  year={2016},
  organization={PMLR}
}

@inproceedings{PhotoComputers,
  title={Image-based recommendations on styles and substitutes},
  author={McAuley, Julian and Targett, Christopher and Shi, Qinfeng and Van Den Hengel, Anton},
  booktitle={Proceedings of the 38th international ACM SIGIR conference on research and development in information retrieval},
  pages={43--52},
  year={2015}
}

@inproceedings{CSPhysics,
  title={An overview of microsoft academic service (mas) and applications},
  author={Sinha, Arnab and Shen, Zhihong and Song, Yang and Ma, Hao and Eide, Darrin and Hsu, Bo-June and Wang, Kuansan},
  booktitle={Proceedings of the 24th international conference on world wide web},
  pages={243--246},
  year={2015}
}

@inproceedings{socialnetworks,
  title={Deep graph kernels},
  author={Yanardag, Pinar and Vishwanathan, SVN},
  booktitle={Proceedings of the 21th ACM SIGKDD international conference on knowledge discovery and data mining},
  pages={1365--1374},
  year={2015}
}

@article{GRACE,
  author       = {Yanqiao Zhu and
                  Yichen Xu and
                  Feng Yu and
                  Qiang Liu and
                  Shu Wu and
                  Liang Wang},
  title        = {Deep Graph Contrastive Representation Learning},
  journal      = {CoRR},
  volume       = {abs/2006.04131},
  year         = {2020},
  url          = {https://arxiv.org/abs/2006.04131},
  eprinttype    = {arXiv},
  eprint       = {2006.04131},
  bibsource    = {dblp computer science bibliography, https://dblp.org}
}

@inproceedings{DGI,
  author       = {Petar Velickovic and
                  William Fedus and
                  William L. Hamilton and
                  Pietro Li{\`{o}} and
                  Yoshua Bengio and
                  R. Devon Hjelm},
  title        = {Deep Graph Infomax},
  booktitle    = {7th International Conference on Learning Representations, {ICLR} 2019,
                  New Orleans, LA, USA, May 6-9, 2019},
  publisher    = {OpenReview.net},
  year         = {2019},
  url          = {https://openreview.net/forum?id=rklz9iAcKQ},
  bibsource    = {dblp computer science bibliography, https://dblp.org}
}

@inproceedings{BGRL,
  title={Bootstrapped representation learning on graphs},
  author={Thakoor, Shantanu and Tallec, Corentin and Azar, Mohammad Gheshlaghi and Munos, R{\'e}mi and Veli{\v{c}}kovi{\'c}, Petar and Valko, Michal},
  booktitle={ICLR 2021 Workshop on Geometrical and Topological Representation Learning},
  year={2021}
}

@inproceedings{ADGCL,
  author       = {Susheel Suresh and
                  Pan Li and
                  Cong Hao and
                  Jennifer Neville},
  editor       = {Marc'Aurelio Ranzato and
                  Alina Beygelzimer and
                  Yann N. Dauphin and
                  Percy Liang and
                  Jennifer Wortman Vaughan},
  title        = {Adversarial Graph Augmentation to Improve Graph Contrastive Learning},
  booktitle    = {Advances in Neural Information Processing Systems 34: Annual Conference
                  on Neural Information Processing Systems 2021, NeurIPS 2021, December
                  6-14, 2021, virtual},
  pages        = {15920--15933},
  year         = {2021},
  url          = {https://proceedings.neurips.cc/paper/2021/hash/854f1fb6f65734d9e49f708d6cd84ad6-Abstract.html},
  bibsource    = {dblp computer science bibliography, https://dblp.org}
}

@InProceedings{NeCo,
  title =   {Contrastive Learning Meets Homophily: Two Birds with One Stone},
  author =       {He, Dongxiao and Zhao, Jitao and Guo, Rui and Feng, Zhiyong and Jin, Di and Huang, Yuxiao and Wang, Zhen and Zhang, Weixiong},
  booktitle =   {Proceedings of the 40th International Conference on Machine Learning},
  pages =   {12775--12789},
  year =   {2023},
  editor =   {Krause, Andreas and Brunskill, Emma and Cho, Kyunghyun and Engelhardt, Barbara and Sabato, Sivan and Scarlett, Jonathan},
  volume =   {202},
  series =   {Proceedings of Machine Learning Research},
  month =   {23--29 Jul},
  publisher =    {PMLR},
  url =   {https://proceedings.mlr.press/v202/he23c.html},
  }

@inproceedings{GCN,
  author       = {Thomas N. Kipf and
                  Max Welling},
  title        = {Semi-Supervised Classification with Graph Convolutional Networks},
  booktitle    = {5th International Conference on Learning Representations, {ICLR} 2017,
                  Toulon, France, April 24-26, 2017, Conference Track Proceedings},
  publisher    = {OpenReview.net},
  year         = {2017},
  url          = {https://openreview.net/forum?id=SJU4ayYgl},
  bibsource    = {dblp computer science bibliography, https://dblp.org}
}

@inproceedings{GAT,
  author       = {Petar Velickovic and
                  Guillem Cucurull and
                  Arantxa Casanova and
                  Adriana Romero and
                  Pietro Li{\`{o}} and
                  Yoshua Bengio},
  title        = {Graph Attention Networks},
  booktitle    = {6th International Conference on Learning Representations, {ICLR} 2018,
                  Vancouver, BC, Canada, April 30 - May 3, 2018, Conference Track Proceedings},
  publisher    = {OpenReview.net},
  year         = {2018},
  url          = {https://openreview.net/forum?id=rJXMpikCZ},
  bibsource    = {dblp computer science bibliography, https://dblp.org}
}

@inproceedings{GraphSAGE,
  author       = {William L. Hamilton and
                  Zhitao Ying and
                  Jure Leskovec},
  title        = {Inductive Representation Learning on Large Graphs},
  booktitle    = {Advances in Neural Information Processing Systems 30: Annual Conference
                  on Neural Information Processing Systems,
                  Long Beach, CA, {USA}},
  pages        = {1024--1034},
  year         = {2017},
  bibsource    = {dblp computer science bibliography, https://dblp.org}
}

@article{InfoNCE,
  title={Representation learning with contrastive predictive coding},
  author={Oord, Aaron van den and Li, Yazhe and Vinyals, Oriol},
  journal={arXiv preprint arXiv:1807.03748},
  year={2018}
}

@inproceedings{GCA,
  author       = {Yanqiao Zhu and
                  Yichen Xu and
                  Feng Yu and
                  Qiang Liu and
                  Shu Wu and
                  Liang Wang},
  title        = {Graph Contrastive Learning with Adaptive Augmentation},
  booktitle    = {{WWW} '21: The Web Conference 2021, Virtual Event / Ljubljana, Slovenia,
                  April 19-23, 2021},
  pages        = {2069--2080},
  publisher    = {{ACM} / {IW3C2}},
  year         = {2021},
  url          = {https://doi.org/10.1145/3442381.3449802},
  bibsource    = {dblp computer science bibliography, https://dblp.org}
}

@inproceedings{SpCo,
  author       = {Nian Liu and
                  Xiao Wang and
                  Deyu Bo and
                  Chuan Shi and
                  Jian Pei},
  title        = {Revisiting Graph Contrastive Learning from the Perspective of Graph
                  Spectrum},
  booktitle    = {NeurIPS},
  year         = {2022},
  url          = {http://papers.nips.cc/paper\_files/paper/2022/hash/13b45b44e26c353c64cba9529bf4724f-Abstract-Conference.html},
  bibsource    = {dblp computer science bibliography, https://dblp.org}
}

@article{LocalGCL,
  title={Localized Contrastive Learning on Graphs},
  author={Zhang, Hengrui and Wu, Qitian and Wang, Yu and Zhang, Shaofeng and Yan, Junchi and Yu, Philip S},
  journal={arXiv preprint arXiv:2212.04604},
  year={2022}
}

@inproceedings{ProGCL,
  author       = {Jun Xia and
                  Lirong Wu and
                  Ge Wang and
                  Jintao Chen and
                  Stan Z. Li},
  editor       = {Kamalika Chaudhuri and
                  Stefanie Jegelka and
                  Le Song and
                  Csaba Szepesv{\'{a}}ri and
                  Gang Niu and
                  Sivan Sabato},
  title        = {ProGCL: Rethinking Hard Negative Mining in Graph Contrastive Learning},
  booktitle    = {International Conference on Machine Learning, {ICML} 2022, 17-23 July
                  2022, Baltimore, Maryland, {USA}},
  series       = {Proceedings of Machine Learning Research},
  volume       = {162},
  pages        = {24332--24346},
  publisher    = {{PMLR}},
  year         = {2022},
  url          = {https://proceedings.mlr.press/v162/xia22b.html},
  bibsource    = {dblp computer science bibliography, https://dblp.org}
}

@article{GNNsurvey1,
  title={A comprehensive survey on graph neural networks},
  author={Wu, Zonghan and Pan, Shirui and Chen, Fengwen and Long, Guodong and Zhang, Chengqi and Philip, S Yu},
  journal      = {{IEEE} Trans. Neural Networks Learn. Syst.},
  volume       = {32},
  number       = {1},
  pages        = {4--24},
  year         = {2021},
  url          = {https://doi.org/10.1109/TNNLS.2020.2978386},
  bibsource    = {dblp computer science bibliography, https://dblp.org}
}

@article{GNNsurvey2,
  title={Graph neural networks: A review of methods and applications},
  author={Zhou, Jie and Cui, Ganqu and Hu, Shengding and Zhang, Zhengyan and Yang, Cheng and Liu, Zhiyuan and Wang, Lifeng and Li, Changcheng and Sun, Maosong},
  journal={AI open},
  volume={1},
  pages={57--81},
  year={2020},
  publisher={Elsevier}
}

@inproceedings{AFGRL,
  author       = {Namkyeong Lee and
                  Junseok Lee and
                  Chanyoung Park},
  title        = {Augmentation-Free Self-Supervised Learning on Graphs},
  booktitle    = {Thirty-Sixth {AAAI} Conference on Artificial Intelligence, {AAAI}
                  2022, Thirty-Fourth Conference on Innovative Applications of Artificial
                  Intelligence, {IAAI} 2022, The Twelveth Symposium on Educational Advances
                  in Artificial Intelligence, {EAAI} 2022 Virtual Event, February 22
                  - March 1, 2022},
  pages        = {7372--7380},
  publisher    = {{AAAI} Press},
  year         = {2022},
  url          = {https://doi.org/10.1609/aaai.v36i7.20700},
  doi          = {10.1609/aaai.v36i7.20700},
  bibsource    = {dblp computer science bibliography, https://dblp.org}
}

@inproceedings{LightGCN,
  author       = {Xiangnan He and
                  Kuan Deng and
                  Xiang Wang and
                  Yan Li and
                  Yong{-}Dong Zhang and
                  Meng Wang},
  editor       = {Jimmy X. Huang and
                  Yi Chang and
                  Xueqi Cheng and
                  Jaap Kamps and
                  Vanessa Murdock and
                  Ji{-}Rong Wen and
                  Yiqun Liu},
  title        = {LightGCN: Simplifying and Powering Graph Convolution Network for Recommendation},
  booktitle    = {Proceedings of the 43rd International {ACM} {SIGIR} conference on
                  research and development in Information Retrieval, {SIGIR} 2020, Virtual
                  Event, China, July 25-30, 2020},
  pages        = {639--648},
  publisher    = {{ACM}},
  year         = {2020},
  url          = {https://doi.org/10.1145/3397271.3401063},
  doi          = {10.1145/3397271.3401063},
  bibsource    = {dblp computer science bibliography, https://dblp.org}
}

@article{Matrix_Factorization ,
  title={An Introduction to Matrix factorization and Factorization Machines in Recommendation System, and Beyond},
  author={Zhang, Yuefeng},
  journal={arXiv preprint arXiv:2203.11026},
  year={2022}
}

@book{randomwalk,
  title={Random walk: a modern introduction},
  author={Lawler, Gregory F and Limic, Vlada},
  volume={123},
  year={2010},
  publisher={Cambridge University Press}
}

@inproceedings{struc2vec,
  title={struc2vec: Learning node representations from structural identity},
  author={Ribeiro, Leonardo FR and Saverese, Pedro HP and Figueiredo, Daniel R},
  booktitle={Proceedings of the 23rd ACM SIGKDD international conference on knowledge discovery and data mining},
  pages={385--394},
  year={2017}
}

@article{GSSL,
  title={Graph self-supervised learning: A survey},
  author={Liu, Yixin and Jin, Ming and Pan, Shirui and Zhou, Chuan and Zheng, Yu and Xia, Feng and Philip, S Yu},
  journal={IEEE Transactions on Knowledge and Data Engineering},
  volume={35},
  number={6},
  pages={5879--5900},
  year={2022},
  publisher={IEEE}
}

@article{t-SNE,
  title={Visualizing data using t-SNE.},
  author={Van der Maaten, Laurens and Hinton, Geoffrey},
  journal={Journal of machine learning research},
  volume={9},
  number={11},
  year={2008}
}

@article{GCL_Survey,
  title={Self-supervised learning on graphs: Contrastive, generative, or predictive},
  author={Wu, Lirong and Lin, Haitao and Tan, Cheng and Gao, Zhangyang and Li, Stan Z},
  journal={IEEE Transactions on Knowledge and Data Engineering},
  volume={35},
  number={4},
  pages={4216--4235},
  year={2021},
  publisher={IEEE}
}

@inproceedings{PiGCL,
  author       = {Dongxiao He and
                  Jitao Zhao and
                  Cuiying Huo and
                  Yongqi Huang and
                  Yuxiao Huang and
                  Zhiyong Feng},
  editor       = {Michael J. Wooldridge and
                  Jennifer G. Dy and
                  Sriraam Natarajan},
  title        = {A New Mechanism for Eliminating Implicit Conflict in Graph Contrastive
                  Learning},
  booktitle    = {Thirty-Eighth {AAAI} Conference on Artificial Intelligence, {AAAI}
                  2024, Thirty-Sixth Conference on Innovative Applications of Artificial
                  Intelligence, {IAAI} 2024, Fourteenth Symposium on Educational Advances
                  in Artificial Intelligence, {EAAI} 2014, February 20-27, 2024, Vancouver,
                  Canada},
  pages        = {12340--12348},
  publisher    = {{AAAI} Press},
  year         = {2024},
  url          = {https://doi.org/10.1609/aaai.v38i11.29125},
  doi          = {10.1609/AAAI.V38I11.29125},
  bibsource    = {dblp computer science bibliography, https://dblp.org}
}

@article{grl_survey,
  title={Graph representation learning: a survey},
  author={Chen, Fenxiao and Wang, Yun-Cheng and Wang, Bin and Kuo, C-C Jay},
  journal={APSIPA Transactions on Signal and Information Processing},
  volume={9},
  pages={e15},
  year={2020},
  publisher={Cambridge University Press}
}

@inproceedings{SIGNA,
  title={Single-View Graph Contrastive Learning with Soft Neighborhood Awareness},
  author={Sun, Qingqiang and Chen, Chaoqi and Qiao, Ziyue and Zheng, Xubin and Wang, Kai},
  booktitle={Proceedings of the AAAI Conference on Artificial Intelligence},
  volume={39},
  pages={20708--20716},
  year={2025}
}

@inproceedings{molecular_property_prediction,
  title={Geomgcl: Geometric graph contrastive learning for molecular property prediction},
  author={Li, Shuangli and Zhou, Jingbo and Xu, Tong and Dou, Dejing and Xiong, Hui},
  booktitle={Proceedings of the AAAI conference on artificial intelligence},
  volume={36},
  pages={4541--4549},
  year={2022}
}

@article{SGRL,
  title={Exploitation of a latent mechanism in graph contrastive learning: Representation scattering},
  author={He, Dongxiao and Shan, Lianze and Zhao, Jitao and Zhang, Hengrui and Wang, Zhen and Zhang, Weixiong},
  journal={Advances in Neural Information Processing Systems},
  volume={37},
  pages={115351--115376},
  year={2024}
}

@article{wisconsintexascornellactor,
  title={Geom-gcn: Geometric graph convolutional networks},
  author={Pei, Hongbin and Wei, Bingzhe and Chang, Kevin Chen-Chuan and Lei, Yu and Yang, Bo},
  journal={arXiv preprint arXiv:2002.05287},
  year={2020}
}

@article{GeomGCN,
  title={Geom-gcn: Geometric graph convolutional networks},
  author={Pei, Hongbin and Wei, Bingzhe and Chang, Kevin Chen-Chuan and Lei, Yu and Yang, Bo},
  journal={arXiv preprint arXiv:2002.05287},
  year={2020}
}

@article{chameleoncrocodile,
  title={Multi-scale attributed node embedding},
  author={Rozemberczki, Benedek and Allen, Carl and Sarkar, Rik},
  journal={Journal of Complex Networks},
  volume={9},
  number={2},
  pages={cnab014},
  year={2021},
  publisher={Oxford University Press}
}

@inproceedings{kmeans,
  title={Some methods of classification and analysis of multivariate observations},
  author={McQueen, James B},
  booktitle={Proc. of 5th Berkeley Symposium on Math. Stat. and Prob.},
  pages={281--297},
  year={1967}
}

@article{GraphACL,
  title={Simple and asymmetric graph contrastive learning without augmentations},
  author={Xiao, Teng and Zhu, Huaisheng and Chen, Zhengyu and Wang, Suhang},
  journal={Advances in neural information processing systems},
  volume={36},
  pages={16129--16152},
  year={2023}
}

@article{CSGCL,
  title={CSGCL: community-strength-enhanced graph contrastive learning},
  author={Chen, Han and Zhao, Ziwen and Li, Yuhua and Zou, Yixiong and Li, Ruixuan and Zhang, Rui},
  journal={arXiv preprint arXiv:2305.04658},
  year={2023}
}

@inproceedings{HGRL,
  title={Towards self-supervised learning on graphs with heterophily},
  author={Chen, Jingfan and Zhu, Guanghui and Qi, Yifan and Yuan, Chunfeng and Huang, Yihua},
  booktitle={Proceedings of the 31st ACM International Conference on Information \& Knowledge Management},
  pages={201--211},
  year={2022}
}

@article{DSSL,
  title={Decoupled self-supervised learning for graphs},
  author={Xiao, Teng and Chen, Zhengyu and Guo, Zhimeng and Zhuang, Zeyang and Wang, Suhang},
  journal={Advances in Neural Information Processing Systems},
  volume={35},
  pages={620--634},
  year={2022}
}

@inproceedings{film-director-actor-writer,
  title={Social influence analysis in large-scale networks},
  author={Tang, Jie and Sun, Jimeng and Wang, Chi and Yang, Zi},
  booktitle={Proceedings of the 15th ACM SIGKDD international conference on Knowledge discovery and data mining},
  pages={807--816},
  year={2009}
}

@article{Wikipedia,
  title={Multi-scale attributed node embedding},
  author={Rozemberczki, Benedek and Allen, Carl and Sarkar, Rik},
  journal={Journal of Complex Networks},
  volume={9},
  number={2},
  pages={cnab014},
  year={2021},
  publisher={Oxford University Press}
}

@article{PyG,
  title={Fast graph representation learning with PyTorch Geometric},
  author={Fey, Matthias and Lenssen, Jan Eric},
  journal={arXiv preprint arXiv:1903.02428},
  year={2019}
}

@article{dropout,
  title={Dropout: a simple way to prevent neural networks from overfitting},
  author={Srivastava, Nitish and Hinton, Geoffrey and Krizhevsky, Alex and Sutskever, Ilya and Salakhutdinov, Ruslan},
  journal={The journal of machine learning research},
  volume={15},
  number={1},
  pages={1929--1958},
  year={2014},
  publisher={JMLR. org}
}

@article{Adam,
  title={Adam: A Method for Stochastic Optimization},
  author={ Kingma, Diederik  and  Ba, Jimmy },
  journal={Computer Science},
  year={2014},
}

@inproceedings{HomoGCL,
  title={Homogcl: Rethinking homophily in graph contrastive learning},
  author={Li, Wen-Zhi and Wang, Chang-Dong and Xiong, Hui and Lai, Jian-Huang},
  booktitle={Proceedings of the 29th ACM SIGKDD conference on knowledge discovery and data mining},
  pages={1341--1352},
  year={2023}
}

@inproceedings{COSTA,
  title={COSTA: covariance-preserving feature augmentation for graph contrastive learning},
  author={Zhang, Yifei and Zhu, Hao and Song, Zixing and Koniusz, Piotr and King, Irwin},
  booktitle={Proceedings of the 28th ACM SIGKDD conference on knowledge discovery and data mining},
  pages={2524--2534},
  year={2022}
}

@article{GRADE,
  title={Uncovering the structural fairness in graph contrastive learning},
  author={Wang, Ruijia and Wang, Xiao and Shi, Chuan and Song, Le},
  journal={Advances in neural information processing systems},
  volume={35},
  pages={32465--32473},
  year={2022}
}

@article{BYOLandBN,
  title={Understanding self-supervised and contrastive learning with bootstrap your own latent (BYOL)},
  author={Fetterman, Abe and Albrecht, Josh},
  journal={Untitled AI, August},
  year={2020}
}

@inproceedings{E2Neg,
  title={Does GCL need a large number of negative samples? Enhancing graph contrastive learning with effective and efficient negative sampling},
  author={Huang, Yongqi and Zhao, Jitao and He, Dongxiao and Jin, Di and Huang, Yuxiao and Wang, Zhen},
  booktitle={Proceedings of the AAAI Conference on Artificial Intelligence},
  volume={39},
  number={16},
  pages={17511--17518},
  year={2025}
}

@inproceedings{StrGCL,
  title={Str-gcl: Structural commonsense driven graph contrastive learning},
  author={He, Dongxiao and Huang, Yongqi and Zhao, Jitao and Wang, Xiaobao and Wang, Zhen},
  booktitle={Proceedings of the ACM on Web Conference 2025},
  pages={1129--1141},
  year={2025}
}

@inproceedings{SCE,
  title={Sce: Scalable network embedding from sparsest cut},
  author={Zhang, Shengzhong and Huang, Zengfeng and Zhou, Haicang and Zhou, Ziang},
  booktitle={Proceedings of the 26th ACM SIGKDD International Conference on Knowledge Discovery \& Data Mining},
  pages={257--265},
  year={2020}
}

@inproceedings{HEATS,
  author       = {Jiaming Zhuo and
                  Feiyang Qin and
                  Can Cui and
                  Kun Fu and
                  Bingxin Niu and
                  Mengzhu Wang and
                  Yuanfang Guo and
                  Chuan Wang and
                  Zhen Wang and
                  Xiaochun Cao and
                  Liang Yang},
  title        = {Improving Graph Contrastive Learning via Adaptive Positive Sampling},
  booktitle    = { {CVPR} },
  pages        = {23179--23187},
  year         = {2024},
}

@inproceedings{GOUDA,
  author       = {Jiaming Zhuo and
                  Yintong Lu and
                  Hui Ning and
                  Kun Fu and
                  Bingxin Niu and
                  Dongxiao He and
                  Chuan Wang and
                  Yuanfang Guo and
                  Zhen Wang and
                  Xiaochun Cao and
                  Liang Yang},
  title        = {Unified Graph Augmentations for Generalized Contrastive Learning on
                  Graphs},
  booktitle    = {{NeurIPS}},
  year         = {2024},
}
\bibliographystyle{icml2026}

\newpage
\appendix
\onecolumn

\section{Related Work}
\label{app:related_work}
\textbf{Graph Representation Learning.}
Graph Representation Learning (GRL) aims to embed high-dimensional, sparse, and non-Euclidean graph-structured data into low-dimensional representations for various downstream tasks, such as node classification, clustering, and link prediction \cite{grl_survey}.
Early GRL methods primarily rely on matrix factorization \cite{Matrix_Factorization} or random walk–based techniques \cite{randomwalk}, which focus on either graph structure or node attributes and thus struggle to jointly model both \cite{struc2vec}.
With the emergence of Graph Neural Networks (GNNs), such as GCN \cite{GCN}, GAT \cite{GAT}, and GraphSAGE \cite{GraphSAGE}, message passing and aggregation mechanism makes it possible to jointly embed graph structure and node attributes in representations.
Despite their success, GNNs typically require large amounts of labeled data for supervision, which is often expensive and impractical in real-world \cite{GSSL}.
To address this limitation, Graph Contrastive Learning (GCL) has emerged as a mainstream self-supervised pre-training paradigm, which trains graph encoders by maximizing the similarity between positive samples and minimizing it between negative samples without labels.

\textbf{Graph Contrastive Learning.}
In GCLs, it is a consensus that positive samples play a key role, as contrastive objectives rely on aligning semantically consistent views of the same node or subgraph to learn invariant representations \cite{GRACE,DGI,BGRL,SpCo,GOUDA}.
GCLs typically construct positive samples in two ways: (i) the same anchor node across multiple views generated by data augmentations, and (ii) its local neighbors sampled from graphs.
Consequently, many existing methods focus on refining positive samples through improved data augmentation and neighborhood-based positive sampling.
Early GCLs adopt random feature masking and edge dropping to generate positive views \cite{GRACE,DGI,BGRL,HEATS}.
However, such random perturbations may destroy critical structural and semantic information, leading to misleading positive pairs.
To address this, some methods design augmentation strategies based on graph properties.
GCA \cite{GCA} and GRADE \cite{GRADE} improve data augmentation according to node centrality or degree, reducing the risk that positive samples become semantically inconsistent due to excessive structural corruption.
CSGCL \cite{CSGCL} incorporates community structure into augmentation design, encouraging positive samples to preserve consistent community-level semantics.
COSTA \cite{COSTA} constrains feature perturbations by preserving feature covariance, so that positive samples constructed from different views retain consistent attribute semantics.
Moreover, some methods improve data augmentation from a spectral perspective to enhance the quality of positive samples.
SpCo \cite{SpCo} reveals that effective augmentations should preserve low-frequency components while perturbing high-frequency ones.
It perturbs edges with high-frequency while retaining low-frequency edges during augmentations.
However, these methods rely on manually designed rules, which limits their adaptability across diverse graphs.
To address this, AD-GCL \cite{ADGCL} trains a neural network to generate diverse views by minimizing mutual information between original and augmented graphs, improving the construction of informative positive samples without predefined heuristics augmentation. Notably, BGRL \cite{BGRL} and AFGRL \cite{AFGRL} adopt a bootstrap framework that trains models without negative samples.
They employ two asymmetric graph encoders and learn representations by aligning positive pairs across different views.
These methods represent an early attempt to exploit positive samples learning in GCLs.
However, recent studies show that the effectiveness of such bootstrap frameworks is highly sensitive to batch normalization, which implicitly introduces negative sample information \cite{BYOLandBN,SGRL}.
Consequently, the utilization of positive samples in these methods remains limited.

In contrast to prior work that mainly focuses on constructing better positive samples, we revisit how positive samples are actually utilized in GCLs.
We show that message passing induces a pre-alignment effect, which increases positive similarity before contrastive optimization and consequently weakens the effective learning signal from positive samples.
This perspective departs from existing augmentation and sampling strategies and motivates a novel design to restore informative positive learning in GCLs.

\section{Experimental Settings}
\label{app_exp_settings}

\subsection{Datasets}
\label{app_datasets}
In this subsection, we describe the datasets used for experimental evaluation in detail.
We evaluate SPGCL on diverse graph datasets with substantially different structural characteristics, providing a comprehensive assessment on node-level tasks.
Detailed statistics of each dataset are summarized in Table~\ref{tab:datasets}.

\textbf{Citation networks.} Cora, CiteSeer, and PubMed are widely used citation datasets from Planetoid \cite{citationnetwork, Cora}. 
Each node denotes a paper and each edge denotes a citation relation. 
Node attributes come from sparse bag-of-words representations of paper text (e.g., titles and abstracts). 
The class labels indicate the research area of paper, such as machine learning or neural networks.

\textbf{Co-purchase networks.} Photo and Computers \cite{PhotoComputers} describe co-purchasing relations in e-commerce. 
Nodes are products in their respective catalogs. Edges link items that are often purchased together.
Each node uses sparse text features extracted from user reviews. Labels mark high-level product types..

\textbf{Co-author networks.} The Co.CS dataset is built from the Microsoft Academic Graph (MAG) \cite{CSPhysics}. 
Nodes represent authors, and edges indicate co-authored papers. 
Node features use keyword-based statistics from an author’s publications. 
Labels reflect the author’s main research area inferred from their publication records.

\textbf{WebKB.} WebKB\footnote{\url{http://www.cs.cmu.edu/afs/cs.cmu.edu/project/theo-11/www/wwkb}} contains web pages from computer science departments at several universities, originally collected at Carnegie Mellon University. 
Cornell, Texas, and Wisconsin are used in our experiments. 
Nodes correspond to pages and edges correspond to hyperlinks. 
Each page is represented with sparse word features extracted from its content. 
Labels are human-annotated into five classes: student, project, course, staff, and faculty.

\textbf{Actor co-occurrence network.} This graph comes from an actor-only subgraph of a larger film-related network \cite{film-director-actor-writer}. 
Each node denotes an actor. 
An edge indicates that two actors appear together on the same Wikipedia page. 
Node attributes are sparse keyword features extracted from the page text. 
Actors are grouped into five classes based on the content of their Wikipedia pages.

\textbf{Wikipedia network.} Chameleon and Squirrel are page graphs built from Wikipedia \cite{Wikipedia}. 
Nodes correspond to pages, and edges indicate links between pages. 
Each page uses sparse keyword features extracted from its text. 
Labels form five classes based on the page’s average monthly traffic.

We obtain these datasets from the public PyTorch Geometric (PyG) repository \cite{PyG}.
All datasets can be accessed through URLs listed below:
\begin{itemize}
  \item Cora, CiteSeer, PubMed: \url{https://github.com/kimiyoung/planetoid/raw/master/data}.
  \item Photo, Computers, CS: \url{https://github.com/shchur/gnn-benchmark/raw/master/data/npz}.
  \item Crocodile: \url{https://graphmining.ai/datasets/ptg/wiki}.
  \item Chameleon, Actor, Cornell, Texas, Wisconsin: \url{https://github.com/bingzhewei/geom-gcn/tree/master/splits}.
\end{itemize}

\begin{table}[ht]
\centering
\caption{Dataset statistics.}
\renewcommand{\arraystretch}{1.12}
\resizebox{\textwidth}{!}{
\begin{tabular}{lcccccccccccc}
\toprule
\textbf{Datasets} & \textbf{Cora} & \textbf{CiteSeer} & \textbf{PubMed} & \textbf{Photo} & \textbf{Computers} & \textbf{CS} & \textbf{Actor} & \textbf{Chameleon} & \textbf{Crocodile} & \textbf{Cornell} & \textbf{Texas} & \textbf{Wisconsin} \\
\midrule
\textbf{Nodes}    & 2,708  & 3,327  & 19,717 & 7,650  & 13,752 & 18,333 & 7,600 & 2,277 & 11,631 & 183 & 183 & 251 \\
\textbf{Edges}    & 4,732  & 5,429  & 44,338 & 119,081& 245,861& 81,894 & 33,544& 36,101& 170,918& 295 & 309 & 499 \\
\textbf{Features} & 1,433  & 3,703  & 500    & 745    & 767    & 6,805  & 932   & 2,325 & 128    & 1,703 & 1,703 & 1,703 \\
\textbf{Classes}  & 7      & 6      & 3      & 8      & 10     & 15     & 5     & 5     & 7      & 5     & 5     & 5 \\
\bottomrule
\end{tabular}
}
\label{tab:datasets}
\end{table}

\subsection{Introductions of Baselines.}
\label{baselines}
\textbf{GCN} \cite{GCN}: This method performs layer-wise feature propagation using a first-order approximation of spectral graph convolutions, enabling nodes to aggregate information from their neighbors through normalized adjacency and node features for semi-supervised node classification. \par
\textbf{GRACE} \cite{GRACE}: This method generates two augmented views by applying random perturbations to node features and edges, and trains graph encoders with the InfoNCE loss by maximizing the similarity between positive samples while minimizing the similarity to negative samples. \par
\textbf{DGI} \cite{DGI}: This method generates a corrupted graph by randomly shuffling node features and trains a mutual-information discriminator to distinguish node embeddings from the original graph against those from the corrupted one. The objective maximizes the mutual information between node representations learned on the original graph and a global summary vector of the same graph, enabling effective representation learning without labels. \par
\textbf{BGRL} \cite{BGRL}: This method adopts a bootstrap framework to train a graph encoder without negative samples. Similar to GRACE, two augmented views are generated through stochastic perturbations of node features and graph structure, and the training objective enforces consistency between the representations of corresponding nodes across these views. The architecture includes an online encoder with a predictor and a momentum-updated target network. \par
\textbf{GCA} \cite{GCA}: GCA: This method retains GRACE-style node-level contrastive learning, while replacing random augmentations with adaptive edge dropping and feature masking guided by node centrality. The augmentation preserves structurally important components more conservatively and applies stronger corruption to less informative regions, balancing structure preservation with effective perturbations for representation learning. \par
\textbf{LocalGCL} \cite{LocalGCL}: This method performs node-level contrastive learning without graph augmentations and defines one-hop neighbors as positive samples. The contrastive objective takes a kernelized form, and random Fourier features approximate the kernel to enable scalable loss computation. \par
\textbf{ProGCL} \cite{ProGCL}: This method provides a hard-negative mining framework for node-level graph contrastive learning, with explicit exploiting 
``false hard negatives'' in GCLs. It models each negative sample with a beta mixture model to estimate the probability of being a true negative, and incorporates this estimate into negative reweighting or negative mixing to strengthen the contrastive learning.\par
\textbf{AFGRL} \cite{AFGRL}: This method also adopts bootstrap framework. It 
is an augmentation-free graph contrastive learning method that constructs positive samples through k-NN retrieval in the embedding space and refines them with both local adjacency constraints and global clustering signals. It aligns the online encoder with a momentum target encoder on the selected positives, where the online network predicts the target representations. \par
\textbf{SGRL} \cite{SGRL}: This method proposes representation scattering mechanism to train graph encoders. It learns embeddings with an explicit center-away scattering loss and a topology-based constraint to keep structure-aware diversity without manual negatives.\par
\textbf{CSGCL} \cite{CSGCL}: This method incorporates community strength into both view generation and model training. It introduces strength-aware attribute voting and community-guided edge dropping to construct informative augmented views. Training follows a two-stage Team-up objective that progressively injects community-strength bias into InfoNCE-based similarity estimation, enabling more structure-aware contrastive learning. \par
\textbf{SIGNA} \cite{SIGNA}: This method is a single-view graph contrastive learning method that introduces dropout noise in embeddings and applies stochastic neighbor masking to select only a subset of neighbors as positive samples. It optimizes a normalized Jensen–Shannon divergence objective, providing a soft, neighborhood-aware training signal without cross-view augmentations. \par
\textbf{HGRL} \cite{HGRL}: This method is a self-supervised framework designed for heterophilic graphs. It learns node representations through two mutual-information-based pretext tasks: one preserves original node features, and the other captures informative signals from distant neighbors. These objectives are integrated within a multi-view learning setup to model heterophily-aware dependencies. \par
\textbf{DSSL} \cite{DSSL}: This method is a variational latent-variable self-supervised framework that decouples heterogeneous neighborhood semantics through a mixture-based neighbor generation model. The latent mixture components capture diverse neighbor relations, supporting augmentation-free representation learning on heterophilic graphs. \par
\textbf{GraphACL} \cite{GraphACL}: This method designs an augmentation-free asymmetric framework.
An online encoder with a predictor is trained against an EMA teacher to predict one-hop neighbor context, capturing one-hop contextual signals and inducing two-hop ``monophily'' effects that benefit representation learning on heterophilic graphs. \par
We implement GCN via the PyTorch Geometric (PyG) library. For self-supervised baselines, we employ an official implementation publicly released by the original papers. The corresponding sources are listed as follows:
\begin{itemize}
  \item \raggedright GCN: \url{https://github.com/pyg-team/pytorch_geometric/tree/master/torch_geometric/nn/conv}.
  \item GRACE: \url{https://github.com/CRIPAC-DIG/GRACE}.
  \item DGI: \url{https://github.com/PetarV-/DGI}.
  \item BGRL: \url{https://github.com/nerdslab/bgrl}.
  \item GCA: \url{https://github.com/CRIPAC-DIG/GCA/}.
  \item LocalGCL: \url{https://openreview.net/attachment?id=dSYkYNNZkV&name=supplementary_material}.
  \item ProGCL: \url{https://github.com/junxia97/ProGCL}.
  \item AFGRL: \url{https://github.com/Namkyeong/AFGRL}.
  \item SGRL: \url{https://github.com/hedongxiao-tju/SGRL}.
  \item CSGCL: \url{https://github.com/HanChen-HUST/CSGCL}.
  \item SIGNA: \url{https://github.com/sunisfighting/SIGNA}.
  \item HGRL: \url{https://github.com/yifanQi98/HGRL}.
  \item \raggedright DSSL: \url{https://papers.nips.cc/paper_files/paper/2022/file/040c816286b3844fd78f2124eec75f2e-Supplemental-Conference.zip}.
  \item GraphACL: \url{https://github.com/tengxiao1/GraphACL}.
\end{itemize}

\subsection{Experimental Details.}
\label{app:experimental setups}

\textbf{Dataset splitting.}
For the homophilic datasets (Cora, CiteSeer, PubMed, Photo, Coauthor-CS, and Computers), we randomly split the nodes into training, validation and test sets with a ratio of 1:1:8. 
For the heterophilic datasets (Texas, Wisconsin, Cornell, Chameleon, Crocodile, and Actor), we follow Geom-GCN \cite{GeomGCN} and use the provided raw data with the standard fixed 10-fold splits.

\textbf{Experiment Setup.}
All graph self-supervised learning baselines are pre-trained with a GCN backbone encoder, using one or two layers following the settings reported in the original papers when available.
For methods that do not specify the number of layers, we evaluate both one-layer and two-layer GCNs and report the best results. 
After pre-training, the learned node embeddings are used for downstream evaluation, including node classification and node clustering.
For classification, we follow the standard linear evaluation protocol of BGRL \cite{BGRL}. We train an $\ell_2$-regularized logistic regression classifier on top of the frozen embeddings. We report the mean and standard deviation of the F1 score.
For node clustering, obtained representations are fed into a randomly initialized K-Means \cite{kmeans} predictor.We set the cluster number to the ground-truth class count and report the best Normalized Mutual Information (NMI) and Homogeneity Score (Hom).

\textbf{Environment.}
All experiments were run on two Linux servers, as summarized in Table \ref{tab:environment}.

\begin{table}[H]
\centering
\caption{Experimental environment servers.}
\label{tab:environment}
\setlength{\tabcolsep}{10pt}
\begin{tabular}{l c | c}
\toprule
 & Server 1 & Server 2 \\
\midrule
\multicolumn{1}{l|}{OS}  & Linux 6.8.0-87-generic  & Linux 6.14.0-33-generic \\
\midrule
\multicolumn{1}{l|}{CPU} & Intel(R) Xeon(R) Silver 4410Y & Intel(R) Core(TM) i5-12400 CPU @ 3.00GHz \\
\midrule
\multicolumn{1}{l|}{GPU} & Nvidia GeForce RTX 5090 & Nvidia GeForce RTX 3090 \\
\bottomrule
\end{tabular}
\end{table}

\section{Algorithm Description.}
\label{algorithmdescription}

\begin{algorithm}[H]
\caption{SPGCL}
\label{alg:spgcl}
\begin{algorithmic}

\REQUIRE Graph $\mathcal{G}=(\mathcal{V},\mathcal{T})$, hyper-parameters $\operatorname{TopK}$ and $\alpha$
\ENSURE Node representations $\mathbf{Z}$

\FOR{$m$ in 1 to $F$}
  \STATE Compute Dirichlet energies $\widehat{\mathcal{E}}_m$ as in \cref{eq:feature_energy_edge};
\ENDFOR

\STATE Select the $\operatorname{TopK}$ dimensions $\mathcal{H}$ with $\{\widehat{\mathcal{E}}_m\}_{m=1}^{F}$;
\STATE Compute pairwise similarity matrix $\mathbf{S}$ as in \cref{eq:low_energy_similarity};
\STATE Construct positive-pair sampling matrix $\mathbf{M}$ with $\mathbf{S}$ and $\alpha$ as in \cref{eq:bernoulli_sampling};
\STATE Construct positive pair set $\mathcal{P}$ with $\mathbf{M}$;
\STATE Construct negative pair set $\mathcal{N}_i$ with other nodes;

\WHILE{not converged}
  \STATE Obtain embeddings $\mathbf{Z}_{\mathcal{H}} \leftarrow \mathrm{GCN}(\mathbf{X}_{\mathcal{H}}, \mathbf{A})$ and
         $\mathbf{Z}_{\mathcal{L}} \leftarrow \mathrm{MLP}(\mathbf{X}_{\mathcal{L}})$;
  \STATE Compute $\mathcal{L} \leftarrow \mathcal{L}_{\text{InfoNCE}}$ via \cref{eq:InfoNCELoss};
  \STATE Do backpropagation with $\mathcal{L}$;
\ENDWHILE

\STATE \textbf{return} $\mathbf{Z}= \mathbf{Z}_{\mathcal{H}} + \mathbf{Z}_{\mathcal{L}}$;

\end{algorithmic}
\end{algorithm}

\section{Hyper-parameter Settings}
\label{hyperparameter}
In hyper-parameter search, we adjust the value of $\alpha$ and $\operatorname{TopK}$ in SPGCL, as well as other hyper-parameters, including temperature parameter $\tau$, hidden dimension, projection dim, learning rate and number of layers. We apply the grid search strategy to choose the optimal hyper-parameters.
Specifically, we search $\operatorname{TopK}$ in $\{2, 4, 8, 16, 32, 64, 128, 256, 512, 1024\}$, $\alpha$ in $\{0.2, 0.4, 0.6, 0.8\}$, hidden dim and proj dim in $\{256, 512, 1024\}$, learning rate in $\{0.0001, 0.0005, 0.001, 0.005, 0.01\}$, number of layers in $\{1, 2\}$.
Additionally, to mitigate overfitting during training, we follow \cite{SIGNA} and apply Dropout \cite{dropout} to the GCN encoder. The dropout is selected from $\{0.1, 0.2, 0.3, 0.4, 0.5\}$.
\cref{tab:hyperparameter1,tab:hyperparameter2} provide the hyper-parameter settings for SPGCL on node classification and clustering tasks.

\begin{table}[t]
\centering
\caption{Hyper-parameters of SPGCL on node classification task.}
\label{tab:hyperparameter1}
\renewcommand{\arraystretch}{1}
\begin{tabular}{lccccccccc}
\toprule
 & $\alpha$ & $\operatorname{TopK}$ & $\tau$ & Hidden dim. & Projection dim. & Learning rate & Epochs & Dropout & Num layers \\
\midrule
Cora      & 0.8 & 256 & 0.8 & 256  & 512 & 0.0005 & 1000 & 0.3 & 2 \\
CiteSeer  & 0.8 & 768 & 0.8 & 1024 & 256 & 0.0001 & 300  & 0.4 & 2 \\
PubMed    & 0.8 & 256 & 0.5 & 1024 & 256 & 0.001  & 1000 & 0.2 & 1 \\
Photo     & 0.2 & 256 & 0.5 & 1024 & 256 & 0.001  & 1000 & 0.1 & 2 \\
CS        & 0.8 & 32  & 0.5 & 512  & 256 & 0.0001 & 1000 & 0.4 & 1 \\
Computers & 0.8 & 512 & 0.5 & 1024 & 512 & 0.0005 & 1000 & 0.4 & 2 \\
Actor     & 0.4 & 8    & 1    & 4096 & 1024 & 0.001 & 50 & 0.1 & 1 \\
Texas     & 0.2 & 16   & 0.75 & 2048 & 1024 & 0.005 & 40 & 0.06 & 1 \\
Wisconsin & 0.4 & 4    & 0.5  & 2048 & 1024 & 0.005 & 50 & 0.04 & 1 \\
Cornell   & 0.8 & 2   & 0.5  & 2048 & 1024 & 0.005 & 50 & 0.08 & 1 \\
Crocodile & 0.4 & 1024 & 0.5  & 4096 & 1024 & 0.0005 & 60 & 0.1 & 1 \\
\bottomrule
\end{tabular}
\end{table}

\begin{table}[t]
\centering
\caption{Hyper-parameters of SPGCL on node clustering task.}
\label{tab:hyperparameter2}
\renewcommand{\arraystretch}{1.05}
\begin{tabular}{lccccccccc}
\toprule
 & $\alpha$ & $\operatorname{TopK}$ & $\tau$ & Hidden dim & Proj dim & Learning rate & Epochs & Dropout & Num layers \\
\midrule
Photo      & 0.1 & 256 & 0.5 & 256 & 1024 & 0.0005 & 1000 & 0.3 & 2 \\
CS         & 0.8 & 128 & 0.5 & 256 & 256  & 0.0001 & 1000 & 0.4 & 1 \\
Computers  & 0.8 & 256 & 0.5 & 512 & 1024 & 0.0005 & 1000 & 0.4 & 2 \\
\bottomrule
\end{tabular}
\end{table}

\section{Theoretical analysis}
\label{app:Theoretical analysis}

\subsection{Proof for Lemma \ref{lem:pre_alignment}}
\label{app:pre_alignment_proof}
We provide proofs of Lemma~\ref{lem:pre_alignment} under two commonly used positive construction rules in graph contrastive learning: (i) augmentation-based positive samples constructed from two augmented views of the same anchor node, and (ii) neighborhood-based positive samples constructed by sampling one-hop neighbors.
We consider cosine similarity with $\ell_2$-normalized node representations, i.e.,
$R^{(l)}_{ij}=\left\langle \bar{\mathbf{H}}^{(l)}_{i,:}, \bar{\mathbf{H}}^{(l)}_{j,:}\right\rangle$,
where $\bar{\mathbf{H}}^{(l)}_{i,:}\triangleq \mathbf{H}^{(l)}_{i,:}/\|\mathbf{H}^{(l)}_{i,:}\|_2$.

\textbf{Augmentation-based positive samples.}
We consider two augmented views of the same graph, indexed by $(a)$ and $(b)$.
Each view applies stochastic perturbations to both node features and edges, producing
$(\mathbf{X}^{(a)},\mathbf{A}^{(a)})$ and $(\mathbf{X}^{(b)},\mathbf{A}^{(b)})$.
Let $\mathbf{P}^{(a)}$ and $\mathbf{P}^{(b)}$ be the corresponding  propagation operators constructed from $\mathbf{A}^{(a)}$ and $\mathbf{A}^{(b)}$.
The message passing in view $\mu\in\{a,b\}$ is:
\[
\mathbf{H}^{(0,\mu)}=\mathbf{X}^{(\mu)},\qquad
\mathbf{H}^{(l+1,\mu)}=\mathbf{P}^{(\mu)}\mathbf{H}^{(l,\mu)}.
\]
A positive pair is constructed by taking the same anchor node $v_i$ from the two views.
We assume the two perturbations are independent and unbiased:
$\mathbb{E}[\mathbf{X}^{(a)}]=\mathbb{E}[\mathbf{X}^{(b)}]=\mathbf{X}$, and similarly for edge perturbations.
Define the layer-$l$ positive similarity
\[
R^{(l)}_{i,i}\triangleq \left\langle \bar{\mathbf{H}}^{(l,a)}_{i,:}, \bar{\mathbf{H}}^{(l,b)}_{i,:}\right\rangle.
\]

\begin{proof}
For unit-norm vectors, cosine similarity can be equivalently written in terms of squared Euclidean distance:
\[
\langle \mathbf{u},\mathbf{v}\rangle
= 1 - \tfrac{1}{2}\|\mathbf{u}-\mathbf{v}\|_2^2,
\qquad \|\mathbf{u}\|_2=\|\mathbf{v}\|_2=1.
\]
Therefore, proving $\mathbb{E}[R^{(l+1)}_{i,i}] \ge \mathbb{E}[R^{(l)}_{i,i}]$
is equivalent to showing that the expected squared distance between the normalized representations
of the two views does not increase with depth, i.e.,
\begin{equation}
\label{eq:aug_dist_monotone}
\mathbb{E}\!\left[\|\bar{\mathbf{H}}^{(l+1,a)}_{i,:}-\bar{\mathbf{H}}^{(l+1,b)}_{i,:}\|_2^2\right]
\;\le\;
\mathbb{E}\!\left[\|\bar{\mathbf{H}}^{(l,a)}_{i,:}-\bar{\mathbf{H}}^{(l,b)}_{i,:}\|_2^2\right].
\end{equation}

We first relate distances between normalized representations to distances between the corresponding unnormalized ones.
Since row-wise normalization is Lipschitz continuous on the set
$\{\mathbf{z}:\|\mathbf{z}\|_2\ge c\}$, we have
\[
\left\|\frac{\mathbf{u}}{\|\mathbf{u}\|_2}-\frac{\mathbf{v}}{\|\mathbf{v}\|_2}\right\|_2
\le \frac{1}{c}\|\mathbf{u}-\mathbf{v}\|_2.
\]
Applying this inequality to $\mathbf{u}=\mathbf{H}^{(l,a)}_{i,:}$ and
$\mathbf{v}=\mathbf{H}^{(l,b)}_{i,:}$ yields
\[
\|\bar{\mathbf{H}}^{(l,a)}_{i,:}-\bar{\mathbf{H}}^{(l,b)}_{i,:}\|_2^2
\le \frac{1}{c^2}\|\mathbf{H}^{(l,a)}_{i,:}-\mathbf{H}^{(l,b)}_{i,:}\|_2^2.
\]
Thus, it suffices to show that the expected unnormalized difference between the two views
contracts with message passing.

To this end, define the difference matrix at layer $l$ as
$\mathbf{\Delta}^{(l)} \triangleq \mathbf{H}^{(l,a)} - \mathbf{H}^{(l,b)}$, we obtain
\[
\mathbf{\Delta}^{(l+1)}
= \mathbf{P}^{(a)}\mathbf{H}^{(l,a)} - \mathbf{P}^{(b)}\mathbf{H}^{(l,b)}
= \mathbf{P}^{(a)}\mathbf{\Delta}^{(l)} + (\mathbf{P}^{(a)}-\mathbf{P}^{(b)})\mathbf{H}^{(l,b)}.
\]

Taking expectation over the independent stochastic perturbations of the two views,
the second term vanishes due to unbiasedness,
i.e., $\mathbb{E}[\mathbf{P}^{(a)}-\mathbf{P}^{(b)}]=\mathbf{0}$.
Moreover, since $\mathbf{P}^{(a)}$ is independent of $\mathbf{\Delta}^{(l)}$ in expectation
and satisfies $\|\mathbf{P}^{(a)}\|_2\le 1$, we have
\[
\mathbb{E}\|\mathbf{\Delta}^{(l+1)}\|_F^2
\le
\mathbb{E}\|\mathbf{P}^{(a)}\mathbf{\Delta}^{(l)}\|_F^2
\le
\mathbb{E}\|\mathbf{\Delta}^{(l)}\|_F^2.
\]

Finally, when the anchor node $i$ is sampled uniformly,
\[
\mathbb{E}_i\mathbb{E}\|\mathbf{\Delta}^{(l)}_{i,:}\|_2^2
=
\frac{1}{N}\mathbb{E}\|\mathbf{\Delta}^{(l)}\|_F^2.
\]
Combining the above inequalities shows that
\[
\mathbb{E}\!\left[\|\bar{\mathbf{H}}^{(l+1,a)}_{i,:}-\bar{\mathbf{H}}^{(l+1,b)}_{i,:}\|_2^2\right]
\le
\mathbb{E}\!\left[\|\bar{\mathbf{H}}^{(l,a)}_{i,:}-\bar{\mathbf{H}}^{(l,b)}_{i,:}\|_2^2\right],
\]
which establishes \eqref{eq:aug_dist_monotone} and thus
$\mathbb{E}[R^{(l+1)}_{i,i}] \ge \mathbb{E}[R^{(l)}_{i,i}]$.
\end{proof}

\textbf{Neighborhood-based positive samples.}
In this case, we assume that positive samples are sampled uniformly from one-hop neighbors, i.e., $(v_i,v_j)\sim \text{Uniform}(\mathcal{T})$.
We consider a single propagation operator $\mathbf{P}$ and $\mathbf{H}^{(l+1)}=\mathbf{P}\mathbf{H}^{(l)}$.
Define $R^{(l)}_{ij}=\left\langle \bar{\mathbf{H}}^{(l)}_{i,:},\bar{\mathbf{H}}^{(l)}_{j,:}\right\rangle$.

\begin{proof}
Similarly, we suffice to show that the expected normalized distance of a random edge decreases with $l$.
For unit vectors,
$R^{(l)}_{ij}=1-\frac{1}{2}\|\bar{\mathbf{H}}^{(l)}_{i,:}-\bar{\mathbf{H}}^{(l)}_{j,:}\|_2^2$.
By the same normalization Lipschitz argument,
\[
\|\bar{\mathbf{H}}^{(l)}_{i,:}-\bar{\mathbf{H}}^{(l)}_{j,:}\|_2^2
\le \frac{1}{c^2}\|\mathbf{H}^{(l)}_{i,:}-\mathbf{H}^{(l)}_{j,:}\|_2^2.
\]
Thus it is sufficient to prove the following edge-average contraction:
\begin{equation}
\label{eq:edge_avg_contraction}
\mathbb{E}_{(v_i,v_j)\sim\mathcal{T}}\big[\|\mathbf{H}^{(l+1)}_{i,:}-\mathbf{H}^{(l+1)}_{j,:}\|_2^2\big]
\le
\mathbb{E}_{(v_i,v_j)\sim\mathcal{T}}\big[\|\mathbf{H}^{(l)}_{i,:}-\mathbf{H}^{(l)}_{j,:}\|_2^2\big].
\end{equation}

Let $\mathbf{h}^{(l)}_{:,k}$ be the $k$-th feature dimension of $\mathbf{H}^{(l)}$.
Using the definition of Dirichlet energy,
\[
\sum_{(v_i,v_j)\in\mathcal{T}}\|\mathbf{H}^{(l)}_{i,:}-\mathbf{H}^{(l)}_{j,:}\|_2^2
=
2\sum_{k=1}^{D} \big(\mathbf{h}^{(l)}_{:,k}\big)^\top \mathbf{L}\,\mathbf{h}^{(l)}_{:,k}.
\]
Because $\mathbf{H}^{(l+1)}=\mathbf{P}\mathbf{H}^{(l)}$ and $\|\mathbf{P}\|_2\le 1$,
the Dirichlet energy is non-increasing under propagation, i.e.,
\[
\big(\mathbf{P}\mathbf{h}^{(l)}_{:,k}\big)^\top \mathbf{L}\,(\mathbf{P}\mathbf{h}^{(l)}_{:,k})
\le
\big(\mathbf{h}^{(l)}_{:,k}\big)^\top \mathbf{L}\,\mathbf{h}^{(l)}_{:,k},\quad \forall k.
\]
Summing over $k$ yields
\[
\sum_{(v_i,v_j)\in\mathcal{T}}\|\mathbf{H}^{(l+1)}_{i,:}-\mathbf{H}^{(l+1)}_{j,:}\|_2^2
\le
\sum_{(v_i,v_j)\in\mathcal{T}}\|\mathbf{H}^{(l)}_{i,:}-\mathbf{H}^{(l)}_{j,:}\|_2^2.
\]
Dividing by $|\mathcal{T}|$ gives \eqref{eq:edge_avg_contraction}, and therefore
$\mathbb{E}_{(v_i,v_j)\sim\mathcal{T}}[R^{(l+1)}_{ij}] \ge \mathbb{E}_{(v_i,v_j)\sim\mathcal{T}}[R^{(l)}_{ij}]$.
\end{proof}

\subsection{Gradient view of the pre-alignment effect}
\label{app:pos_grad_analysis}

We analyze how pre-alignment effect affects the learning signal of positive alignment under the InfoNCE objective in Eq.~\eqref{eq:InfoNCELoss}.
For analytical clarity, we assume negative similarities unchanged, so as to isolate the effect of positive pre-alignment on the InfoNCE optimization.

\begin{theorem}
\label{thm:prealign_grad}
Consider the InfoNCE loss in Eq.~\eqref{eq:InfoNCELoss},
for any positive sample $p\in\mathcal{P}_i$, define $s_{ip}\triangleq \mathrm{sim}(\mathbf{h}_i,\mathbf{h}_p)$, and $s_{in}\triangleq \mathrm{sim}(\mathbf{h}_i,\mathbf{h}_n)$.
Then the gradient magnitude w.r.t.\ the positive similarity satisfies:
\begin{equation}
\label{eq:pos_grad_core}
\left|\frac{\partial \mathcal{L}_{\text{InfoNCE}}(\mathbf{h}_i)}{\partial s_{ip}}\right|
=
\frac{1}{\tau}\cdot
\underbrace{\frac{\exp(s_{ip}/\tau)}{\sum\limits_{p' \in \mathcal{P}_i}\exp(s_{ip'}/\tau)}}_{\text{positive weight}}
\cdot
\underbrace{\frac{\sum\limits_{n \in \mathcal{N}_i}\exp(s_{in}/\tau)}
{\sum\limits_{p' \in \mathcal{P}_i}\exp(s_{ip'}/\tau)+\sum\limits_{n \in \mathcal{N}_i}\exp(s_{in}/\tau)}}_{\text{negative dominance factor}}.
\end{equation}
Assume a pre-alignment effect that increases all positive similarities by the same amount $\Delta\ge 0$,
i.e., $s'_{ip}=s_{ip}+\Delta$ for all $p\in\mathcal{P}_i$, 
if $\mathcal{N}_i\neq\varnothing$, then for every $p\in\mathcal{P}_i$, we have:
\begin{equation}
\label{eq:pos_grad_shrink_factor}
\left|\frac{\partial \mathcal{L}'_{\text{InfoNCE}}(\mathbf{h}_i)}{\partial s'_{ip}}\right|
\;\le\;
e^{-\Delta/\tau}\,
\left|\frac{\partial \mathcal{L}_{\text{InfoNCE}}(\mathbf{h}_i)}{\partial s_{ip}}\right|.
\end{equation}
Consequently, when message passing repeatedly increases positive similarity across layers (i.e., $\Delta=\Delta_t>0$), the gradient signal provided by positive alignment diminishes rapidly as propagation depth grows.
\end{theorem}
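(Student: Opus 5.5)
The plan is to compute the gradient exactly, then compare the shifted and unshifted expressions factor by factor. Writing the computation out shows that the inequality \eqref{eq:pos_grad_shrink_factor} cannot hold exactly for $\Delta>0$; it holds in the limit $\Delta=0$ and, for $\Delta>0$, only up to an explicit correction factor. The proof in this form should therefore establish the exact gradient formula, the exact shrink ratio, and the $e^{-\Delta/\tau}$ rate in the regime where positives dominate.

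\textbf{Step 1: the gradient formula \eqref{eq:pos_grad_core}.} Abbreviate $Q\triangleq\sum_{p'\in\mathcal{P}_i}\exp(s_{ip'}/\tau)$, $M\triangleq\sum_{n\in\mathcal{N}_i}\exp(s_{in}/\tau)$ and $e_p\triangleq\exp(s_{ip}/\tau)$. Then
\begin{equation*}
\mathcal{L}_{\text{InfoNCE}}(\mathbf{h}_i)=-\log Q+\log(Q+M),
\end{equation*}
and since $\partial Q/\partial s_{ip}=e_p/\tau$ while $M$ does not depend on $s_{ip}$,
\begin{equation*}
\frac{\partial\mathcal{L}}{\partial s_{ip}}=-\frac{e_p}{\tau}\Big(\frac{1}{Q}-\frac{1}{Q+M}\Big)=-\frac{1}{\tau}\cdot\frac{e_p}{Q}\cdot\frac{M}{Q+M}.
\end{equation*}
Taking absolute values gives \eqref{eq:pos_grad_core}.

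\textbf{Step 2: effect of the shift.} Set $c\triangleq e^{\Delta/\tau}\ge 1$. Under $s'_{ip}=s_{ip}+\Delta$ for all $p\in\mathcal{P}_i$, with the negatives held fixed, we have $e_p\mapsto c\,e_p$, $Q\mapsto cQ$ and $M\mapsto M$.
\begin{itemize}
\item The positive weight is invariant, because the factor $c$ cancels.
\item The negative dominance factor becomes $M/(cQ+M)$.
\end{itemize}
With $a\triangleq Q/M$, which is finite and positive because $\mathcal{N}_i\neq\varnothing$, the gradient magnitudes satisfy
\begin{equation*}
\frac{|\partial\mathcal{L}'/\partial s'_{ip}|}{|\partial\mathcal{L}/\partial s_{ip}|}=\frac{Q+M}{cQ+M}=\frac{1+a}{1+ca}.
\end{equation*}

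\textbf{Step 3: comparison with $e^{-\Delta/\tau}$, the main obstacle.} The stated inequality needs $\frac{1+a}{1+ca}\le\frac1c$. This is equivalent to $c(1+a)\le 1+ca$, i.e.\ to $c\le 1$. With $c\ge1$ this holds only when $\Delta=0$, where both sides are equal; for every $\Delta>0$ the inequality reverses.

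What can be proved for $\Delta>0$ is the following sandwich:
\begin{equation*}
e^{-\Delta/\tau}\;\le\;\frac{1+a}{1+ca}\;\le\;\min\Big\{1,\;e^{-\Delta/\tau}\big(1+\tfrac{M}{Q}\big)\Big\}.
\end{equation*}
The upper bound by $1$ holds because $c\ge1$, and the other upper bound follows from $1+ca\ge ca$. Consequently \eqref{eq:pos_grad_shrink_factor} holds up to the multiplicative factor $1+M/Q$. This factor tends to $1$ when positives dominate the partition function, that is, after pre-alignment has made $Q\gg M$, and in that regime the ratio converges to $e^{-\Delta/\tau}$ as $a\to\infty$.

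\textbf{Step 4: the layerwise consequence.} Iterating Step 2 over layers with increments $\Delta_t$ multiplies the shifts, since the ratios telescope. The gradient is therefore non-increasing in depth, and in the positive-dominated regime it decays like $e^{-\sum_t\Delta_t/\tau}$, which is the qualitative conclusion of the theorem.
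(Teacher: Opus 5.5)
Your Steps 1 and 2 are exactly the paper's proof. It also writes $\mathcal{L}_{\text{InfoNCE}}(\mathbf{h}_i)=-\log A+\log(A+B)$ (your $Q$ and $M$), derives the gradient formula, notes that the positive weight is shift-invariant, and reduces everything to the ratio $\frac{A+B}{e^{\Delta/\tau}A+B}$.

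The paper then concludes by asserting that $\Delta\ge 0$ implies $e^{\Delta/\tau}A+B\ge e^{\Delta/\tau}(A+B)$. Cancelling $e^{\Delta/\tau}A$, this says $B\ge e^{\Delta/\tau}B$. That is false for $\Delta>0$, because $B>0$ whenever $\mathcal{N}_i\neq\varnothing$. So your Step 3 is correct, and the paper's argument breaks at precisely the point you flagged. The claimed shrink bound by $e^{-\Delta/\tau}$ fails strictly for every $\Delta>0$. For example, one positive and one negative with $s_{ip}=s_{in}=0$, $\tau=1$, $\Delta=\ln 2$ gives ratio $2/3>1/2$. What survives is the gradient formula and your weaker two-sided bound, which is the appropriate corrected statement.

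One small sharpening of Step 4. It is the factors $e^{\Delta_t/\tau}$ that multiply, while the shifts add. The telescoped ratio relative to the unshifted configuration is $\frac{Q+M}{e^{D/\tau}Q+M}$ with $D=\sum_t\Delta_t$. Your sandwich then places it in $\bigl[e^{-D/\tau},\,(1+M/Q)\,e^{-D/\tau}\bigr]$ unconditionally, with $Q,M$ fixed at the initial layer. So the exponential decay in the cumulative shift needs no positive-dominated assumption; that regime only makes the constant $1+M/Q$ close to $1$.
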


\begin{proof}
Define
$A \triangleq \sum_{p\in\mathcal{P}_i}\exp(s_{ip}/\tau)$ and
$B \triangleq \sum_{n\in\mathcal{N}_i}\exp(s_{in}/\tau)$.
Then $\mathcal{L}_{\text{InfoNCE}}(\mathbf{h}_i)=-\log A + \log(A+B)$.
For any $p\in\mathcal{P}_i$,
\[
\frac{\partial \mathcal{L}_{\text{InfoNCE}}(\mathbf{h}_i)}{\partial s_{ip}}
=
-\frac{1}{A}\cdot \frac{1}{\tau}\exp(s_{ip}/\tau)
+\frac{1}{A+B}\cdot \frac{1}{\tau}\exp(s_{ip}/\tau)
=
-\frac{1}{\tau}\frac{\exp(s_{ip}/\tau)\,B}{A(A+B)}.
\]
Taking absolute values and rewriting gives Eq.~\eqref{eq:pos_grad_core}.

For the pre-alignment effect $s'_{ip}=s_{ip}+\Delta$ for all positives, we have
$A'=\sum_{p}\exp((s_{ip}+\Delta)/\tau)=e^{\Delta/\tau}A$ and $B'=B$.
Also, the positive weight term in Eq.~\eqref{eq:pos_grad_core} is unchanged:
\[
\frac{\exp((s_{ip}+\Delta)/\tau)}{\sum_{p'}\exp((s_{ip'}+\Delta)/\tau)}
=
\frac{e^{\Delta/\tau}\exp(s_{ip}/\tau)}{e^{\Delta/\tau}\sum_{p'}\exp(s_{ip'}/\tau)}
=
\frac{\exp(s_{ip}/\tau)}{\sum_{p'}\exp(s_{ip'}/\tau)}.
\]
Therefore the ratio of gradient magnitudes reduces to the ratio of the ``negative dominance factor'':
\[
\frac{\left|\partial \mathcal{L}'_{\text{InfoNCE}}/\partial s'_{ip}\right|}
{\left|\partial \mathcal{L}_{\text{InfoNCE}}/\partial s_{ip}\right|}
=
\frac{B/(A'+B)}{B/(A+B)}
=
\frac{A+B}{e^{\Delta/\tau}A+B}.
\]
Since $\Delta\ge 0$ implies $e^{\Delta/\tau}A+B \ge e^{\Delta/\tau}(A+B)$, we have
\[
\frac{A+B}{e^{\Delta/\tau}A+B}
\le
\frac{A+B}{e^{\Delta/\tau}(A+B)}
=
e^{-\Delta/\tau},
\]
which proves Eq.~\eqref{eq:pos_grad_shrink_factor}.
\end{proof}

\subsection{Proof of Theorem~\ref{thm:mi_energy_bound}}
\label{proof:mi_energy_bound}
We first derive the bound under the commonly used local neighborhood-based positive sampling rules in graph contrastive learning, where positive samples correspond to neighboring nodes and negative samples are sampled independently from the marginal node distribution. 
We further discuss how the same proof strategy extends to the augmentation-based positive sampling at the end of the proof.
For clarity, we introduce the setup used throughout the proof as follows:

Fix a propagation depth $l$ and a feature dimension $m$.
Let $f\in\mathbb{R}^n$ denote the $m$-th feature across all nodes, where
$f_i \triangleq x^{(l)}_{i,m}$.
Let $\mathbf{T}$ be a lazy random-walk transition matrix, defined as $\mathbf{T} \;=\; \tfrac{1}{2}\mathbf{I} + \tfrac{1}{2}\mathbf{D}^{-1}\mathbf{A},$
where $\mathbf{A}$ is the adjacency matrix and $\mathbf{D}$ is the degree matrix.
The laziness (i.e., staying at the current node with probability $1/2$) ensures
that the random walk is aperiodic, which guarantees a strictly positive spectral
gap.
Let $\boldsymbol{\pi}$ denote the stationary distribution of $\mathbf{T}$, and
let $\lambda_{\mathrm{gap}}>0$ be its spectral gap.
We define the random-walk Laplacian as $\mathbf{L}_{\mathrm{rw}} \triangleq \mathbf{I}-\mathbf{T},$ and define the Dirichlet energy of feature $f$ as
\begin{equation}
    \mathcal{E}_m^{(l)}
\;\triangleq\;
f^\top \mathbf{L}_{\mathrm{rw}} f
\;=\;
\frac{1}{2}\sum_{i,j}\boldsymbol{\pi}_i\,\mathbf{T}_{ij}\,(f_i-f_j)^2,
\end{equation}
which measures the variation of the feature along edges of the graph under the
random-walk distribution.
We assume $f$ is centered with respect to
$\boldsymbol{\pi}$, i.e., $\sum_i \boldsymbol{\pi}_i f_i = 0$.

We consider a binary random variable $S\in\{0,1\}$ indicating whether a node pair
is a positive or a negative sample.
Specifically, we sample $(i,j)$ as
\begin{equation}
(i,j)\sim
\begin{cases}
\mathbb{P}_+:\; i\sim\boldsymbol{\pi},\; j\sim \mathbf{T}(i,\cdot), & S=1,\\
\mathbb{P}_-:\; i\sim\boldsymbol{\pi},\; j\sim \boldsymbol{\pi}\ \text{independently}, & S=0,
\end{cases}
\end{equation}
which corresponds to local neighborhood--based positives and randomly sampled
negatives.
For the $m$-th feature dimension, we define the similarity score as
$R^{(l)}_{ij,m}=\phi_m(f_i,f_j)$, where $\phi_m(a,b)=ab$ corresponds to the inner
product, and the same form applies to cosine similarity after $\ell_2$
normalization.
In both cases, we assume $|f_i|\le B$ for all $i$, which implies the Lipschitz
condition:
\begin{equation}
\label{eq:lip_in_b}
\big|\phi_m(f_i,f_j)-\phi_m(f_i,f_{j'})\big|
\;\le\;
B\,|f_j-f_{j'}|.
\end{equation}

Finally, to avoid imposing regularity assumptions on the score distribution, we
apply a standard Gaussian smoothing.
Define the observed score:
\begin{equation}
\widetilde{R}^{(l)}_{ij,m}
\;\triangleq\;
R^{(l)}_{ij,m} + \xi,
\qquad
\xi\sim\mathcal{N}(0,\sigma^2),
\end{equation}
where $\xi$ is independent of $(S,i,j)$.
Since $\widetilde{R}^{(l)}_{ij,m}$ is obtained from $R^{(l)}_{ij,m}$ by
post-processing, we have:
\begin{equation}
\label{eq:data_processing}
I\!\left(S;R^{(l)}_{ij,m}\right)
\;\le\;
I\!\left(S;\widetilde{R}^{(l)}_{ij,m}\right).
\end{equation}
Therefore, it suffices to upper bound
$I\!\left(S;\widetilde{R}^{(l)}_{ij,m}\right)$ in terms of
$\mathcal{E}_m^{(l)}$.

\begin{proof}
Let $Q_1$ and $Q_0$ denote the conditional distributions of the similarity score
$R^{(l)}_{ij,m}$ given $S=1$ and $S=0$, respectively.
To compare these two distributions, we bound their Wasserstein-1 distance
$W_1(Q_1,Q_0)$ by explicitly constructing a coupling. We first sample an anchor node $i\sim\boldsymbol{\pi}$.
Conditioned on $i$, we draw
$j_+\sim\mathbf{T}(i,\cdot)$ and $j_-\sim\boldsymbol{\pi}$ independently,
corresponding to a positive and a negative sample.
We then define the coupled random variables
\begin{equation}
R_+ \triangleq \phi_m(f_i,f_{j_+}),\qquad
R_- \triangleq \phi_m(f_i,f_{j_-}),
\end{equation}
where $R_+$ and $R_-$ follow the distributions $Q_1$ and $Q_0$,
respectively.
We can get the following:
\begin{equation}
\label{eq:w1_coupling}
W_1(Q_1,Q_0)
\;\le\;
\mathbb{E}\!\left[\,|R_+ - R_-|\,\right].
\end{equation}

We now bound the right side.
Using the Lipschitz property in~\eqref{eq:lip_in_b} and the Cauchy--Schwarz
inequality, we obtain
\begin{equation}
\mathbb{E}\!\left[|R_+ - R_-|\right]
\le
B\,\mathbb{E}\!\left[|f_{j_+}-f_{j_-}|\right]
\le
B\sqrt{\mathbb{E}\!\left[(f_{j_+}-f_{j_-})^2\right]}.
\end{equation}

Since $\mathbf{T}$ has stationary distribution $\boldsymbol{\pi}$ and
$i\sim\boldsymbol{\pi}$, both $j_+$ and $j_-$ are marginally distributed according
to $\boldsymbol{\pi}$, and they are independent.
Using the centering assumption $\sum_i \boldsymbol{\pi}_i f_i = 0$, we have
\begin{equation}
\mathbb{E}\!\left[(f_{j_+}-f_{j_-})^2\right]
=
2\,\mathrm{Var}_{\boldsymbol{\pi}}(f).
\end{equation}
Combining the above inequalities yields
\begin{equation}
\label{eq:w1_var}
W_1(Q_1,Q_0)
\;\le\;
B\sqrt{2\,\mathrm{Var}_{\boldsymbol{\pi}}(f)}.
\end{equation}

Recall that $\mathbf{T}$ defines a lazy random-walk transition matrix.
The laziness admits a strictly
positive spectral gap $\lambda_{\mathrm{gap}}>0$.
And we can use the Poincar\'e inequality to relate the variance of a
function under the stationary distribution to its Dirichlet energy.
Specifically, we have
\begin{equation}
\label{eq:poincare}
\mathrm{Var}_\pi(f)\;\le\;\frac{1}{\lambda_{\mathrm{gap}}}\,f^\top L f \;=\;\frac{1}{\lambda_{\mathrm{gap}}}\,\mathcal{E}_m^{(t)}.
\end{equation}
Combining~\eqref{eq:w1_var} and~\eqref{eq:poincare},
\begin{equation}
\label{eq:w1_energy}
W_1(Q_1,Q_0)
\;\le\;
B\sqrt{\frac{2}{\lambda_{\mathrm{gap}}}\,\mathcal{E}_m^{(t)}}.
\end{equation}

Since $\widetilde{R}^{(l)}_{ij,m}=R^{(l)}_{ij,m}+\xi$ is obtained by adding
independent Gaussian noise $\xi\sim\mathcal{N}(0,\sigma^2)$.
This smoothing ensures that the conditional distributions
$\widetilde{Q}_1$ and $\widetilde{Q}_0$ of $\widetilde{R}^{(l)}_{ij,m}$ given
$S=1$ and $S=0$ are absolutely continuous, allowing their KL divergence to be
controlled.
\begin{equation}
\label{eq:kl_w2}
D_{\mathrm{KL}}(\widetilde{Q}_1\Vert \widetilde{Q}_0)
\;\le\;
\frac{1}{2\sigma^2}\,W_2^2(Q_1,Q_0)
\;\le\;
\frac{1}{2\sigma^2}\,W_1^2(Q_1,Q_0),
\end{equation}
where we used $W_2\le W_1$ for 1D distributions.
Plugging~\eqref{eq:w1_energy} into~\eqref{eq:kl_w2} yields
\begin{equation}
\label{eq:kl_energy}
D_{\mathrm{KL}}(\widetilde{Q}_1\Vert \widetilde{Q}_0)
\;\le\;
\frac{1}{2\sigma^2}\cdot
B^2\cdot\frac{2}{\lambda_{\mathrm{gap}}}\,\mathcal{E}_m^{(t)}
=
\frac{B^2}{\sigma^2\lambda_{\mathrm{gap}}}\,\mathcal{E}_m^{(t)}.
\end{equation}
The same bound holds for $D_{\mathrm{KL}}(\widetilde{Q}_0\Vert \widetilde{Q}_1)$ by symmetry.

For binary $S$, we have:
\begin{equation}
I(S;\widetilde{R})
=
\sum_{s\in\{0,1\}} \mathbb{P}(S=s)\,
D_{\mathrm{KL}}(\widetilde{Q}_s\Vert \widetilde{Q}),
\qquad
\widetilde{Q}=\sum_s \mathbb{P}(S=s)\widetilde{Q}_s.
\end{equation}

\begin{equation}
I(S;\widetilde{R})
\;\le\;
\mathbb{P}(S=1)\,D_{\mathrm{KL}}(\widetilde{Q}_1\Vert \widetilde{Q}_0)
+
\mathbb{P}(S=0)\,D_{\mathrm{KL}}(\widetilde{Q}_0\Vert \widetilde{Q}_1)
\;\le\;
\max\!\left\{
D_{\mathrm{KL}}(\widetilde{Q}_1\Vert \widetilde{Q}_0),
D_{\mathrm{KL}}(\widetilde{Q}_0\Vert \widetilde{Q}_1)
\right\}.
\end{equation}
Combining with~\eqref{eq:kl_energy} gives
\[
I(S;\widetilde{R}_{ij,m}^{(t)})
\;\le\;
\frac{B^2}{\sigma^2\lambda_{\mathrm{gap}}}\,\mathcal{E}_m^{(t)}.
\]
Finally, by~\eqref{eq:data_processing},
\[
I(S;R_{ij,m}^{(t)})
\;\le\;
I(S;\widetilde{R}_{ij,m}^{(t)})
\;\le\;
\frac{B^2}{\sigma^2\lambda_{\mathrm{gap}}}\,\mathcal{E}_m^{(t)}.
\]
Setting $C \triangleq \frac{B^2}{\sigma^2\lambda_{\mathrm{gap}}}$ proves~\eqref{eq:mi_energy_bound}.

\paragraph{Remark (Augmentation-based positive sampling).}
The above analysis also applies to the augmentation-based positive sampling rules.
In this case, we define $S=1$ to indicate that $(i,j)$ corresponds to the same
node under two augmented views, and $S=0$ to indicate two
independent random nodes.
The key observation is that the similarity score $R^{(l)}_{ij,m}$ remains a
scalar function of the propagated features, and the randomness induced by view
sampling only affects its conditional distribution.
Consequently, the coupling construction in Step~1 and all subsequent bounds
remain valid under the same boundedness and Lipschitz assumptions, yielding the
same dependence on the Dirichlet energy $\mathcal{E}_m^{(l)}$.
\end{proof}

\subsection{Proof of Theorem~\ref{thm:energy_decomposition_main}}
\label{app:energy_decomposition_proof}

\begin{proof}
Let $\mathbf{R}^{(l)}_{ij}=\big(R^{(l)}_{ij,1},\ldots,R^{(l)}_{ij,d}\big)$ and denote
$\mathbf{R}_{\mathcal H}\triangleq \mathbf{R}^{(l)}_{ij,\mathcal H}$ and
$\mathbf{R}_{\mathcal L}\triangleq \mathbf{R}^{(l)}_{ij,\mathcal L}$.
We first prove~\eqref{eq:decomp_sandwich}.
By the chain rule of mutual information,
\begin{equation}
I\!\left(S_{ij};\mathbf{R}^{(l)}_{ij}\right)
=
I\!\left(S_{ij};\mathbf{R}_{\mathcal H}\right)
+
I\!\left(S_{ij};\mathbf{R}_{\mathcal L}\mid \mathbf{R}_{\mathcal H}\right).
\label{eq:chain_rule_energy_decomp}
\end{equation}
Since conditional mutual information is nonnegative, we immediately have
\begin{equation}
I\!\left(S_{ij};\mathbf{R}_{\mathcal H}\right)
\le
I\!\left(S_{ij};\mathbf{R}^{(l)}_{ij}\right),
\end{equation}
which proves the left inequality in~\eqref{eq:decomp_sandwich}.

To prove the right inequality, it suffices to upper bound the increment
$I\!\left(S_{ij};\mathbf{R}_{\mathcal L}\mid \mathbf{R}_{\mathcal H}\right)$.
We adopt the following upper-bound condition: the incremental mutual information
carried by the low-energy, conditioned on the high-energy, is bounded by the sum of the feature-wise mutual informations, i.e.,
\begin{equation}
I\!\left(S_{ij};\mathbf{R}_{\mathcal L}\mid \mathbf{R}_{\mathcal H}\right)
\le
\sum_{m\in\mathcal L} I\!\left(S_{ij};R^{(l)}_{ij,m}\right).
\label{eq:increment_additive_cond}
\end{equation}
Applying Theorem~\ref{thm:mi_energy_bound} to each $m\in\mathcal L$ gives
\begin{equation}
\sum_{m\in\mathcal L} I\!\left(S_{ij};R^{(l)}_{ij,m}\right)
\le
\sum_{m\in\mathcal L} C\,\mathcal E^{(l)}_m
=
C\sum_{m\in\mathcal L}\mathcal E^{(l)}_m
\le
C\,\varepsilon,
\label{eq:low_block_bound}
\end{equation}
where we used the assumption $\sum_{m\in\mathcal L}\mathcal E^{(l)}_m\le \varepsilon$ in the last step.
Combining~\eqref{eq:chain_rule_energy_decomp}, \eqref{eq:increment_additive_cond}, and~\eqref{eq:low_block_bound}, we obtain
\begin{equation}
I\!\left(S_{ij};\mathbf{R}^{(l)}_{ij}\right)
\le
I\!\left(S_{ij};\mathbf{R}_{\mathcal H}\right)
+
C\,\varepsilon,
\end{equation}
which proves the right inequality in~\eqref{eq:decomp_sandwich}.

We now prove~\eqref{eq:r_scalar_upper}.
Note that the scalar similarity $R^{(l)}_{ij}=\sum_{m=1}^d R^{(l)}_{ij,m}$ is a deterministic function of
$\mathbf{R}^{(l)}_{ij}$.
By the data processing inequality, applying any deterministic mapping cannot increase mutual information, hence
\begin{equation}
I\!\left(S_{ij};R^{(l)}_{ij}\right)
\le
I\!\left(S_{ij};\mathbf{R}^{(l)}_{ij}\right).
\end{equation}
This completes the proof.
\end{proof}

\begin{table*}[t]
\caption{Node classification results on heterophilic graphs with MLP as base encoder (except SPGCL). Best results are in bold, and second-best are underlined. \textbf{X}, \textbf{A}, and \textbf{Y} denote node features, adjacency matrix, and node labels.}
\centering
\resizebox{1.0\textwidth}{!}{
\begin{tabular}{lccccccc}
\toprule
\textbf{Method}     & \textbf{Training Data} & \textbf{Chameleon} & \textbf{Cornell} & \textbf{Texas} & \textbf{Wisconsin} & \textbf{Crocodile} & \textbf{Actor} \\ 
\midrule
MLP      & $\textbf{X, Y}$ & 49.06 $\pm$ 1.86 & 50.91 $\pm$ 8.99 & 58.92 $\pm$ 4.56 & 65.49 $\pm$ 4.55 & 64.41 $\pm$ 0.53 & 29.88 $\pm$ 1.12 \\
\midrule
GRACE    & $\textbf{X, A}$ & 48.92 $\pm$ 2.41 & 64.59 $\pm$ 6.30 & 68.38 $\pm$ 6.38 & \underline{80.00 $\pm$ 2.41} & 65.03 $\pm$ 1.18 & 36.14 $\pm$ 1.11 \\
DGI      & $\textbf{X, A}$ & 42.74 $\pm$ 1.80 & 55.41 $\pm$ 8.48 & 60.27 $\pm$ 5.70 & 70.00 $\pm$ 5.70 & 63.62 $\pm$ 0.96 & 35.57 $\pm$ 0.99 \\
BGRL     & $\textbf{X, A}$ & 33.84 $\pm$ 1.98 & 45.95 $\pm$ 4.77 & 63.24 $\pm$ 9.21 & 49.80 $\pm$ 8.02 & 51.27 $\pm$ 1.40 & 28.47 $\pm$ 1.40 \\
GCA      & $\textbf{X, A}$ & 45.22 $\pm$ 1.94 & 50.00 $\pm$ 7.00 & 58.92 $\pm$ 4.56 & 52.16 $\pm$ 5.16 & 64.69 $\pm$ 0.65 & 35.03 $\pm$ 0.91 \\
LocalGCL & $\textbf{X, A}$ & \underline{59.39 $\pm$ 1.56} & 30.81 $\pm$ 8.66 & 38.11 $\pm$ 8.59 & 41.18 $\pm$ 7.34 & 64.02 $\pm$ 0.69 & 29.47 $\pm$ 0.78 \\
GraphACL & $\textbf{X, A}$ & 48.51 $\pm$ 2.18 & \underline{69.19 $\pm$ 4.63} & \underline{72.70 $\pm$ 5.32} & \underline{77.65 $\pm$ 2.95} & \underline{65.96 $\pm$ 1.02} & \underline{36.09 $\pm$ 1.38} \\
\midrule
\textbf{SPGCL(Ours)} & $\textbf{X, A}$          & \textbf{72.26 $\pm$ 1.66} & \textbf{75.41 $\pm$ 6.43} & \textbf{80.81 $\pm$ 6.30} & \textbf{83.53 $\pm$ 4.26} & \textbf{77.43 $\pm$ 0.67} & \textbf{37.23 $\pm$ 1.19} \\
\bottomrule
\end{tabular}
}
\label{tb:Classificationformlp}
\end{table*}

\begin{figure*}[t]
\centering
\begin{subfigure}[t]{0.245\linewidth}
    \centering
    \includegraphics[width=\linewidth]{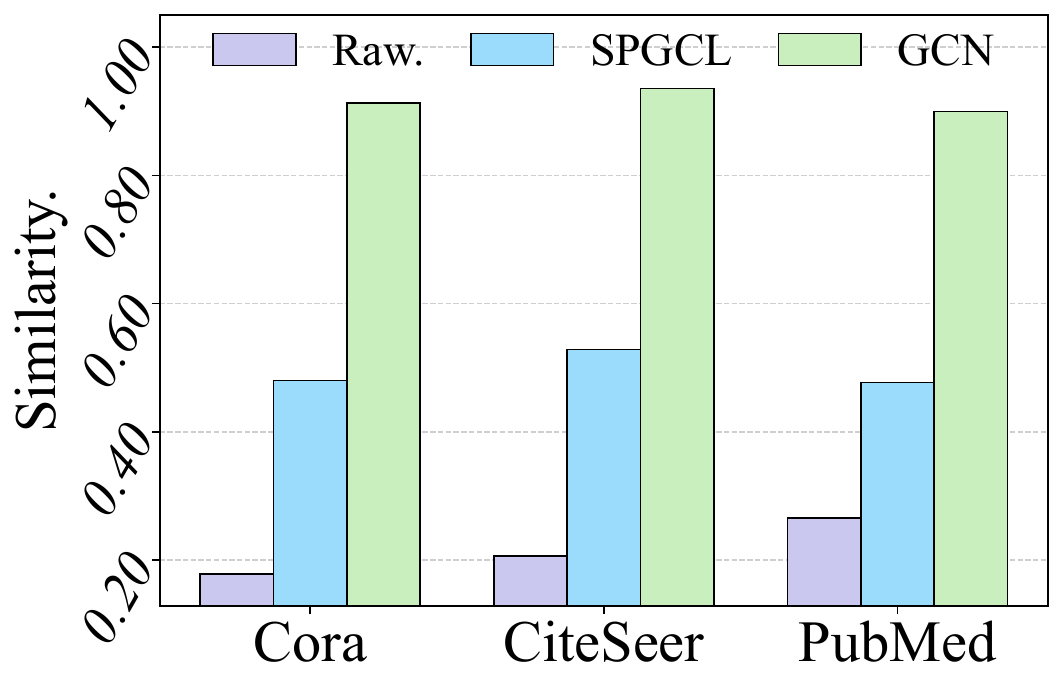}
\end{subfigure}
\begin{subfigure}[t]{0.24\linewidth}
    \centering
    \includegraphics[width=\linewidth]{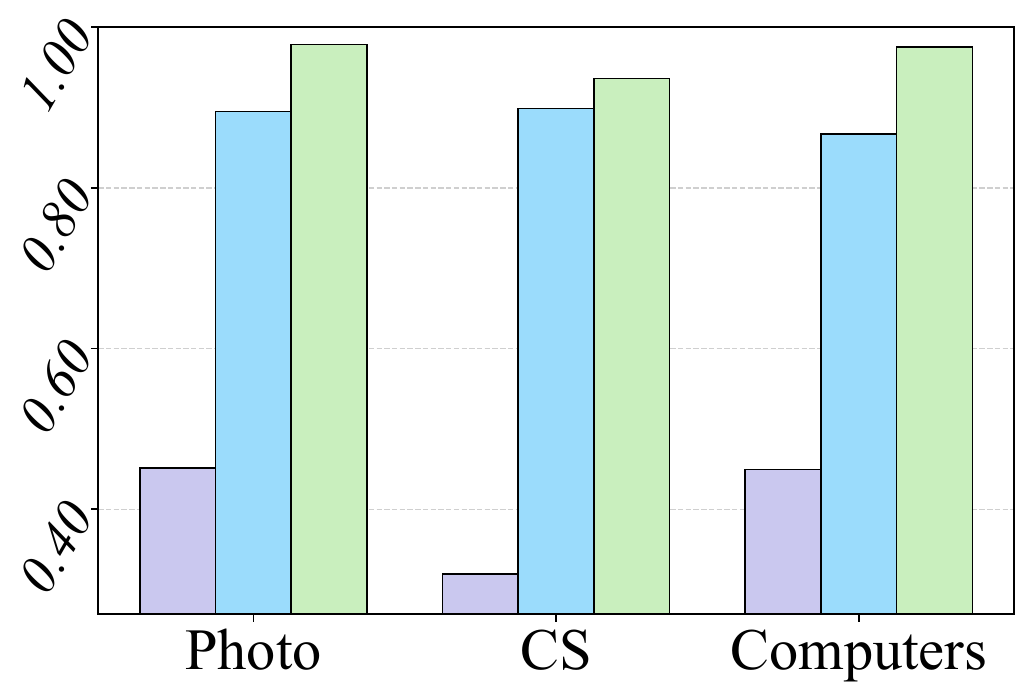}
\end{subfigure}
\begin{subfigure}[t]{0.24\linewidth}
    \centering
    \includegraphics[width=\linewidth]{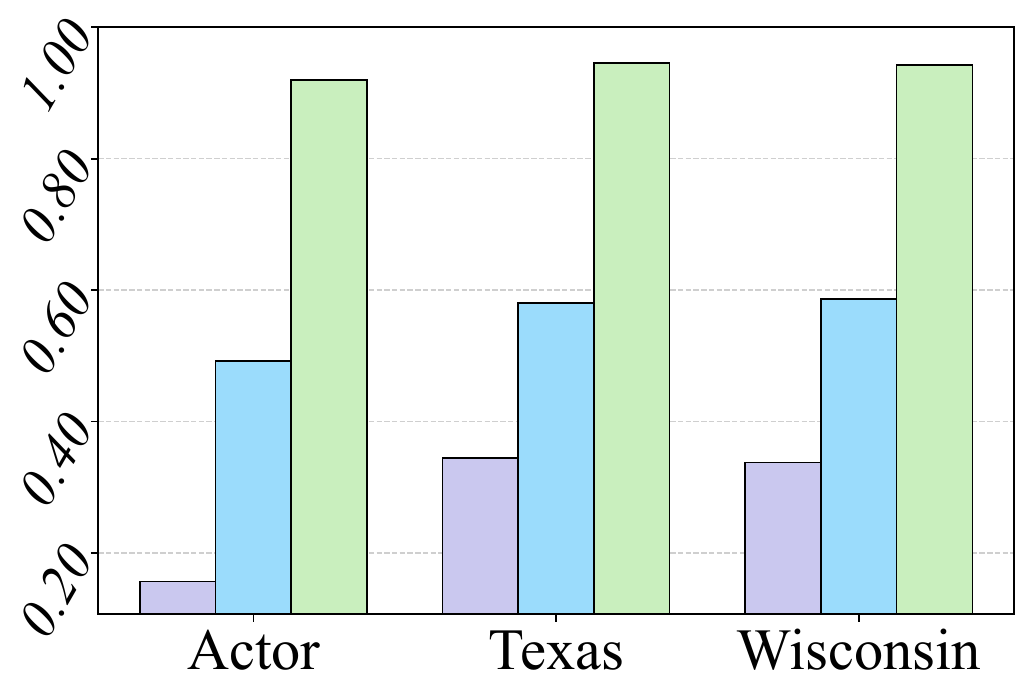}
\end{subfigure}
\begin{subfigure}[t]{0.24\linewidth}
    \centering
    \includegraphics[width=\linewidth]{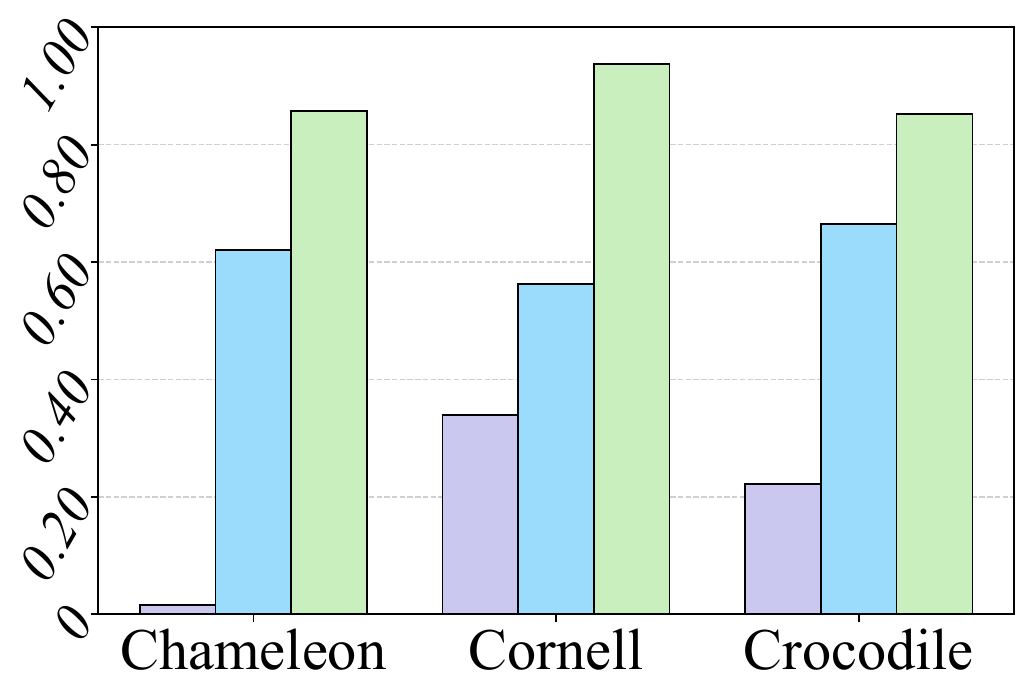}
\end{subfigure}

\caption{Average 1-hop neighbor similarity before contrastive optimization. Raw. means original node feature $\mathbf{X}$.}
\label{fig:1hop_neighbor_similarity}
\end{figure*}

\begin{figure*}[ht!]
\centering
\setlength{\tabcolsep}{2pt}
\renewcommand{\arraystretch}{1.0}
\begin{tabular}{cccccc}
\includegraphics[width=0.155\textwidth]{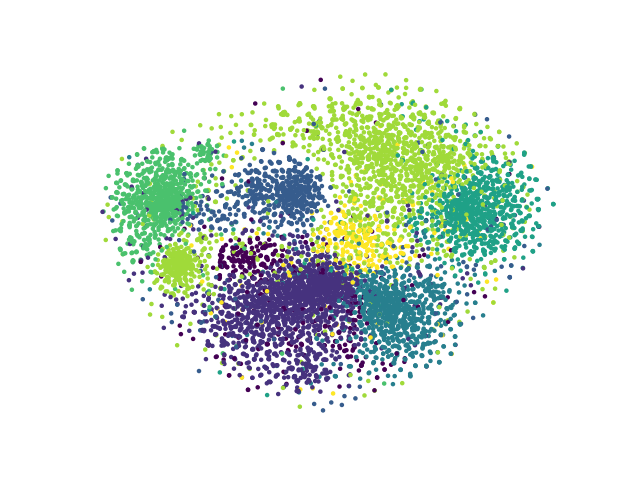} &
\includegraphics[width=0.155\textwidth]{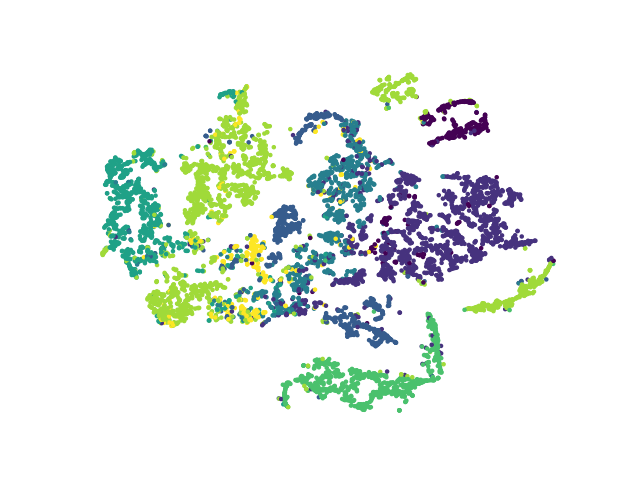} &
\includegraphics[width=0.155\textwidth]{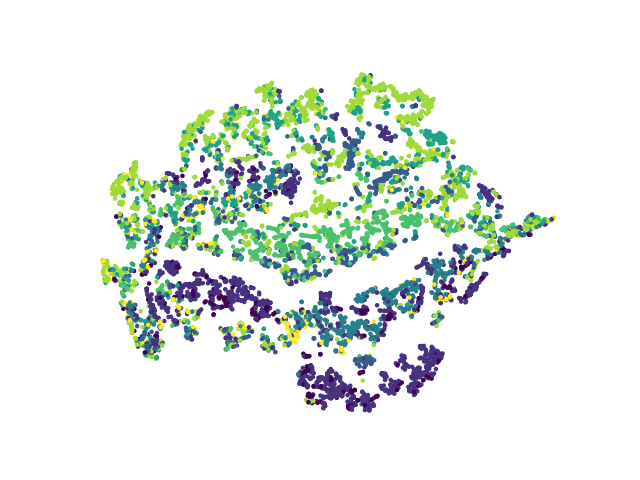} &
\includegraphics[width=0.155\textwidth]{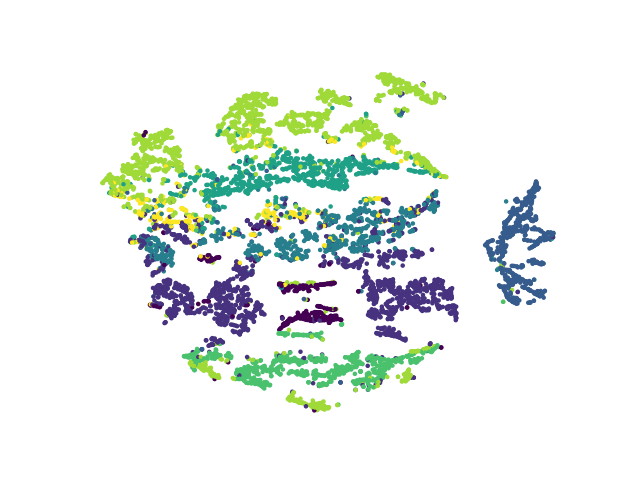} &
\includegraphics[width=0.155\textwidth]{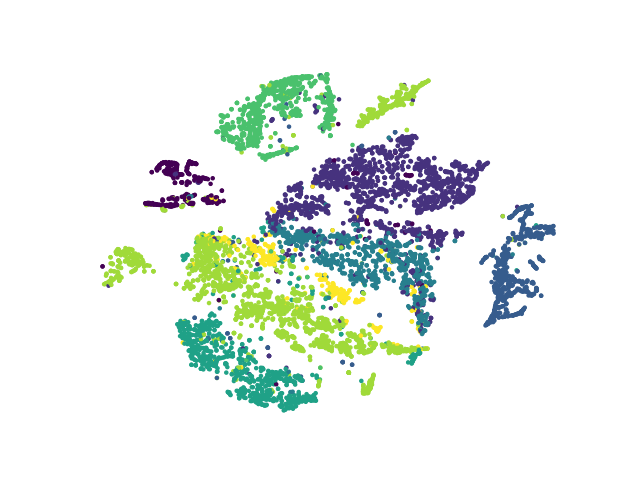} &
\includegraphics[width=0.155\textwidth]{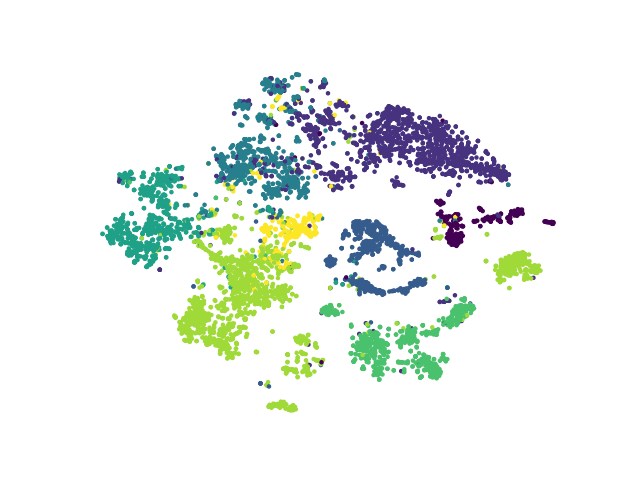} \\
\\[-2mm]
\includegraphics[width=0.155\textwidth]{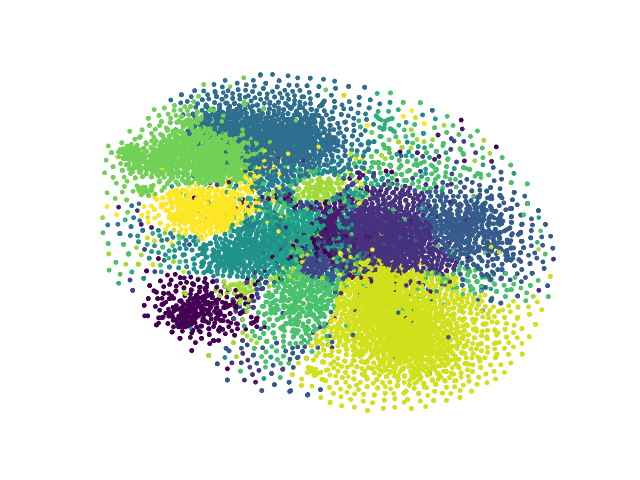} &
\includegraphics[width=0.155\textwidth]{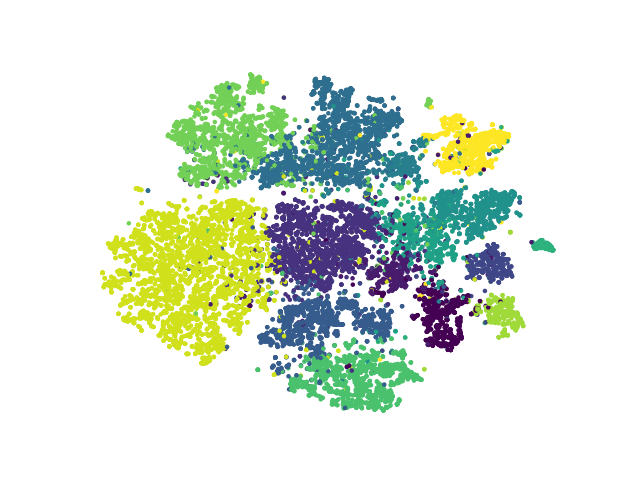} &
\includegraphics[width=0.155\textwidth]{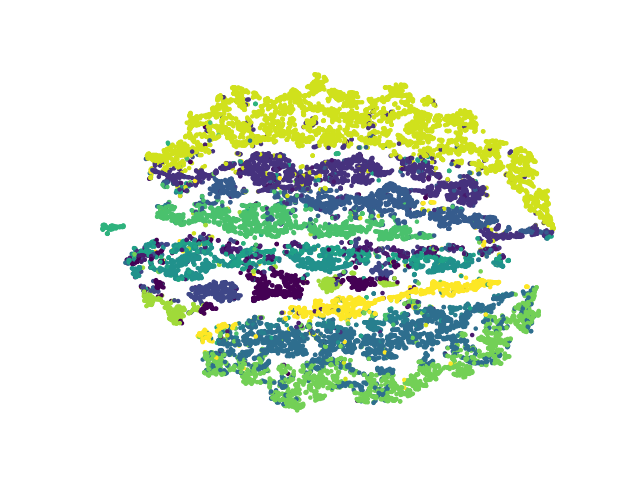} &
\includegraphics[width=0.155\textwidth]{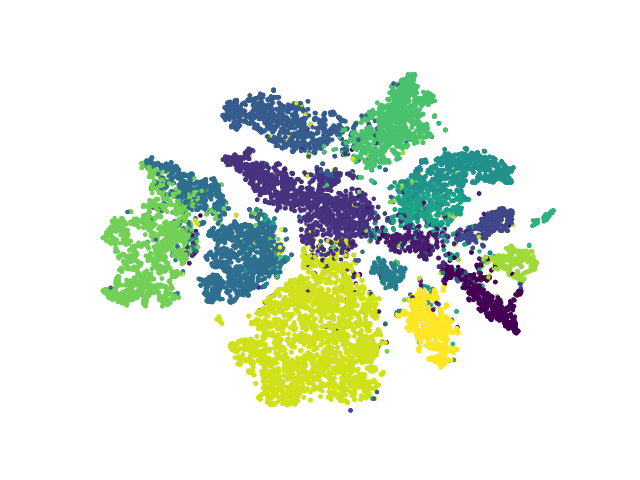} &
\includegraphics[width=0.155\textwidth]{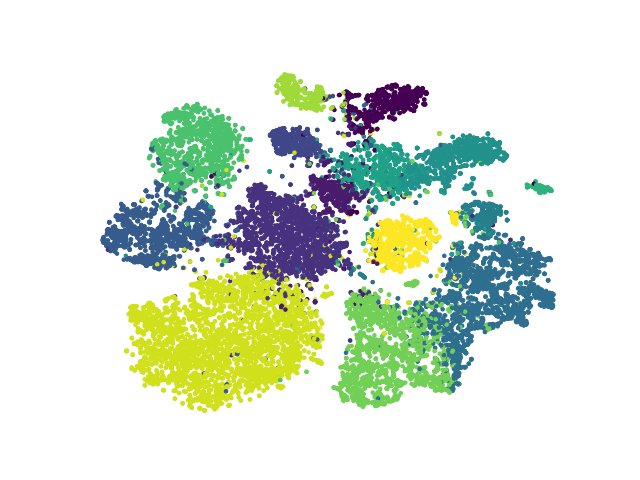} &
\includegraphics[width=0.155\textwidth]{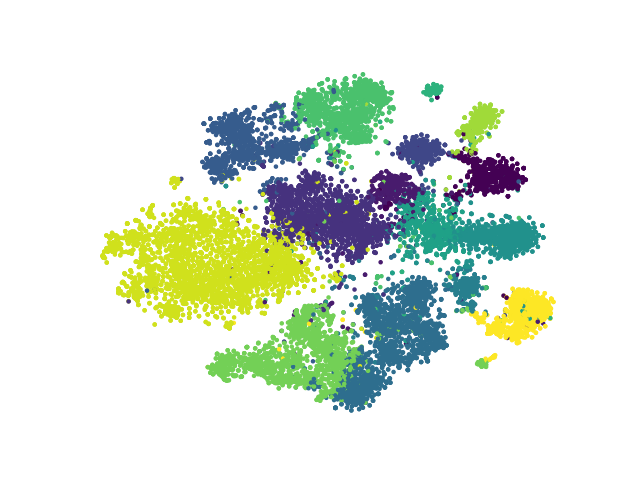} \\
\\[-2mm]
\includegraphics[width=0.155\textwidth]{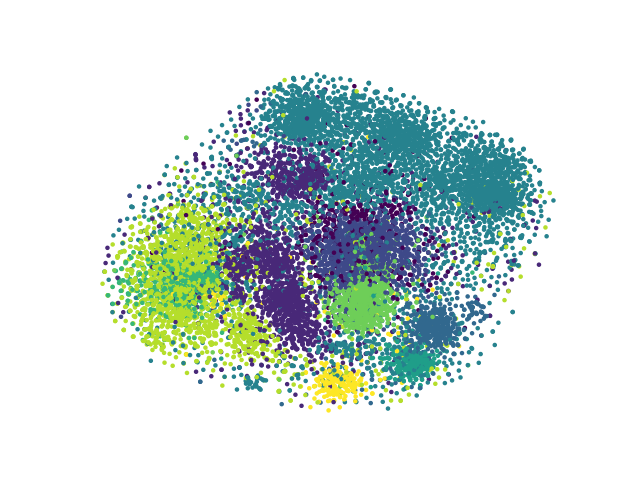} &
\includegraphics[width=0.155\textwidth]{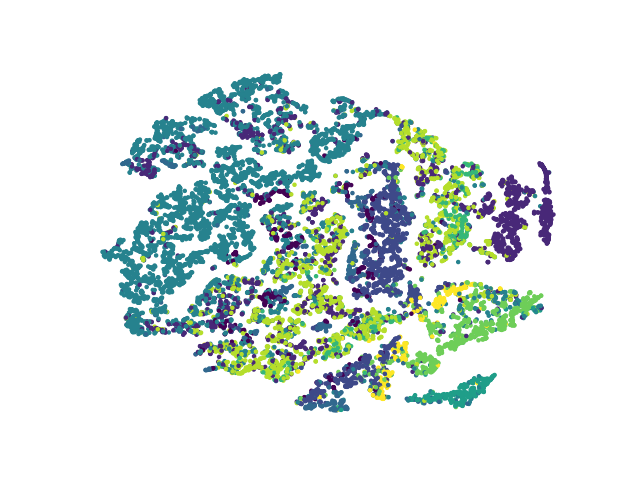} &
\includegraphics[width=0.155\textwidth]{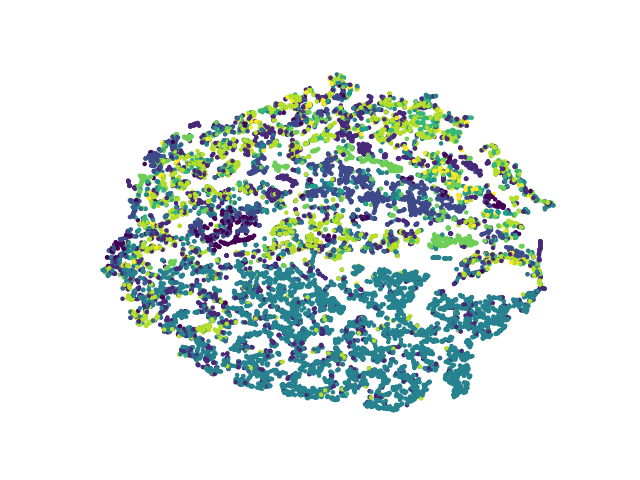} &
\includegraphics[width=0.155\textwidth]{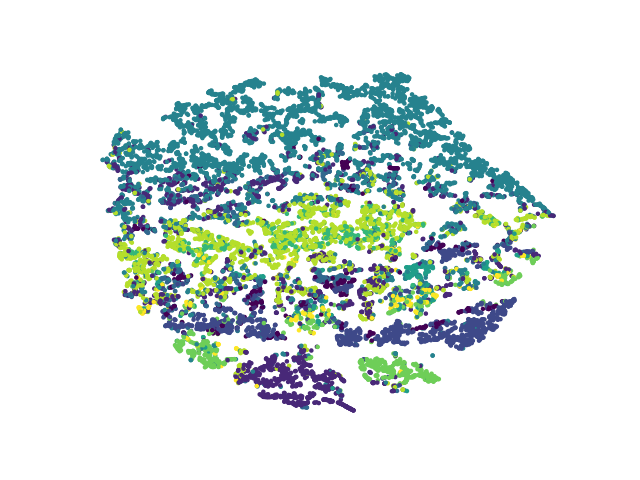} &
\includegraphics[width=0.155\textwidth]{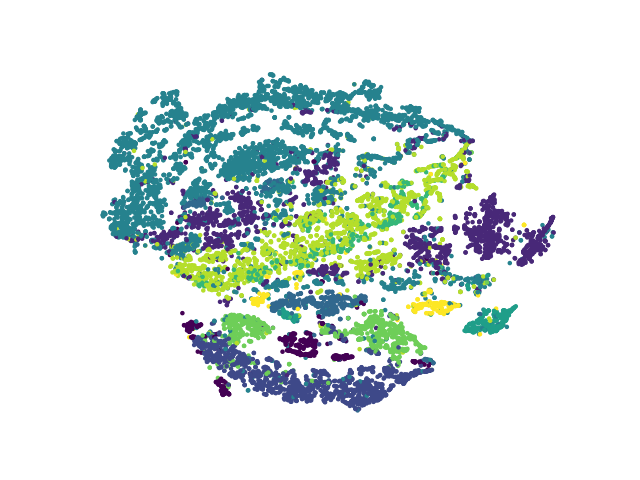} &
\includegraphics[width=0.155\textwidth]{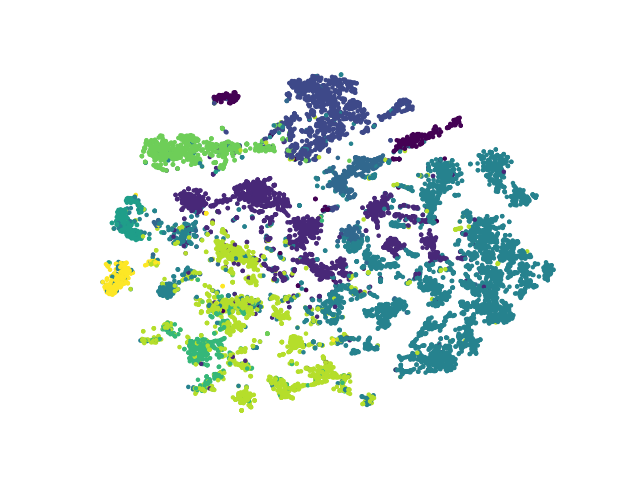} \\
\\[-3mm]
 \multicolumn{1}{c}{\footnotesize Raw.}
& \multicolumn{1}{c}{\footnotesize GRACE}
& \multicolumn{1}{c}{\footnotesize DGI}
& \multicolumn{1}{c}{\footnotesize BGRL}
& \multicolumn{1}{c}{\footnotesize SGRL}
& \multicolumn{1}{c}{\footnotesize SPGCL} \\
\end{tabular}

\caption{t-SNE embeddings of nodes on three datasets. From top to bottom: Photo, CS, and Computers.}
\label{fig:tsne}
\end{figure*}

\section{Additional Experiment Results.}
\label{additionally experiment}
\subsection{Visualization.} To better understand the quality of the learned representations, we visualize node embeddings with t-SNE \cite{t-SNE}. Each point corresponds to a node, and various colors denote ground-truth classes. Figure \ref{fig:tsne} reports results on Photo, CS, and Computers. Compared with baselines, SPGCL yields more compact clusters within the same class. Class regions overlap less in many cases. The decision boundaries between classes also appear clearer. These patterns suggest that SPGCL learns embeddings with stronger class discrimination and better similar semantic grouping.

\begin{figure*}[ht!]
\centering

\resizebox{\textwidth}{!}{%
\begin{minipage}{\textwidth}
\centering
\includegraphics[width=0.23\textwidth]{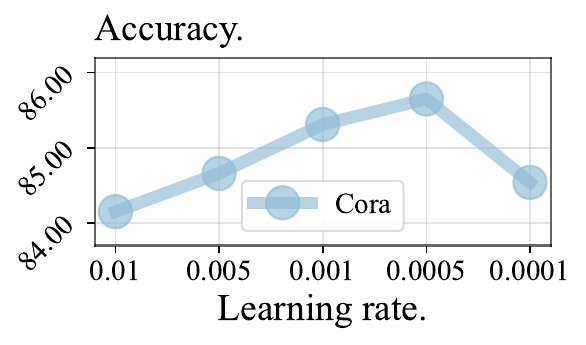}\hfill
\includegraphics[width=0.23\textwidth]{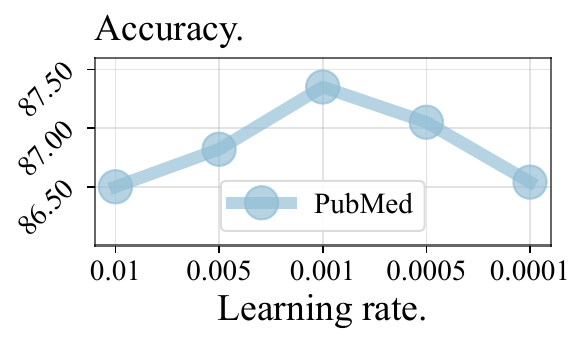}\hfill
\includegraphics[width=0.23\textwidth]{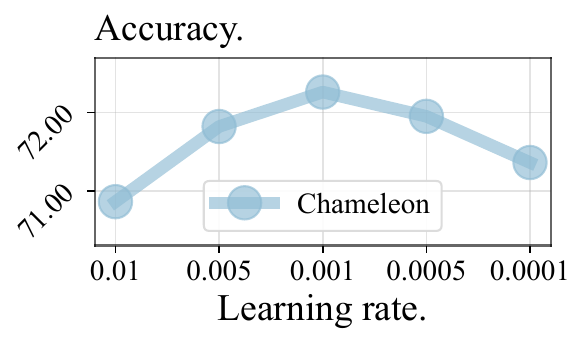}\hfill
\includegraphics[width=0.23\textwidth]{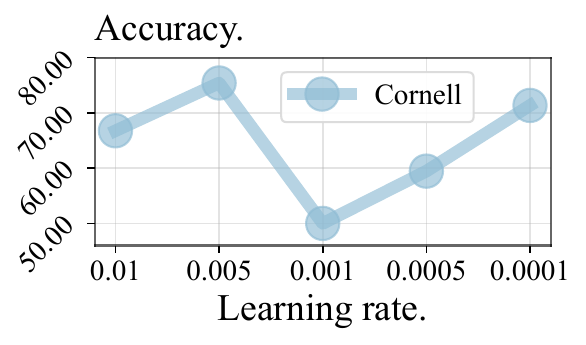}\\[-1mm]

\includegraphics[width=0.23\textwidth]{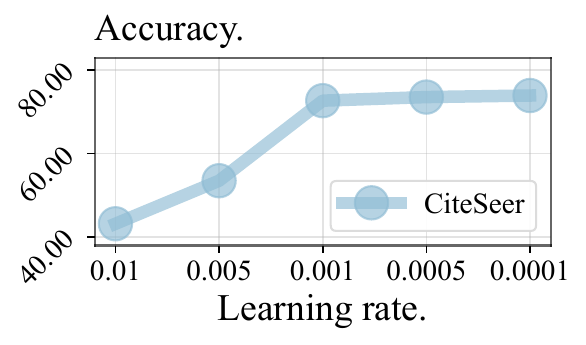}\hfill
\includegraphics[width=0.23\textwidth]{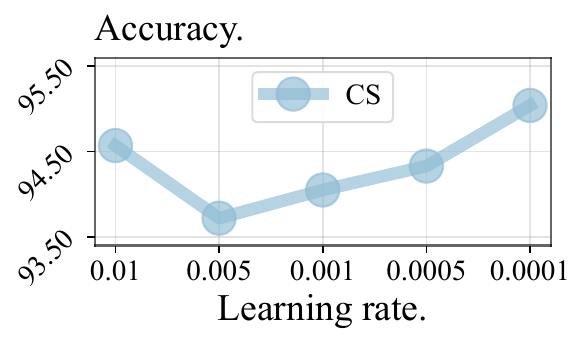}\hfill
\includegraphics[width=0.23\textwidth]{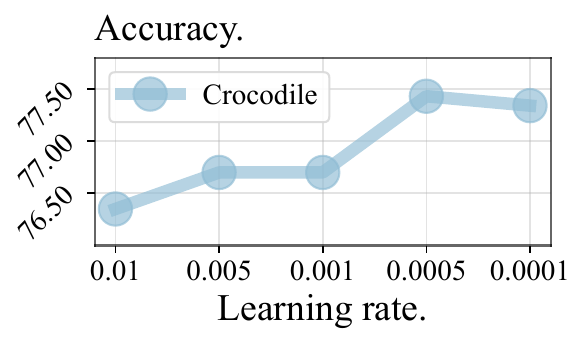}\hfill
\includegraphics[width=0.23\textwidth]{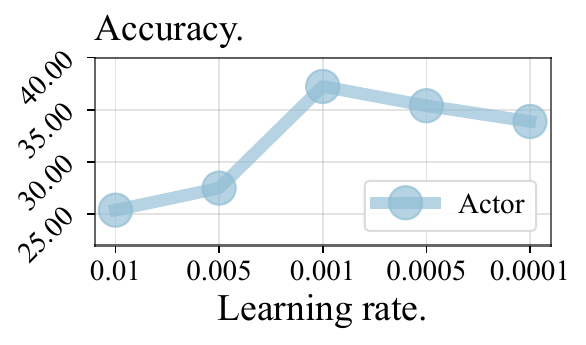}\\[-1mm]

\includegraphics[width=0.23\textwidth]{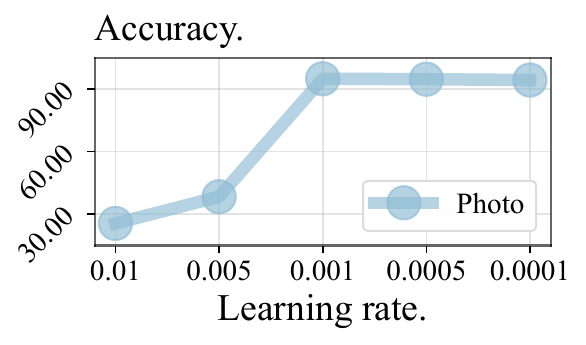}\hfill
\includegraphics[width=0.23\textwidth]{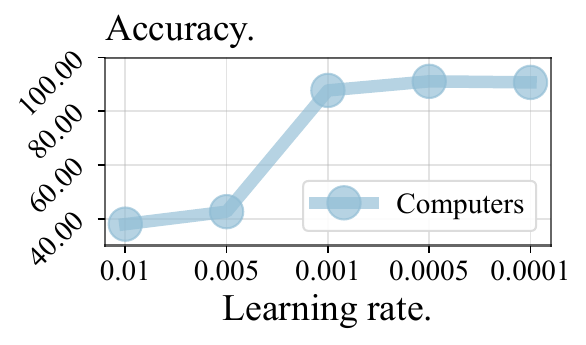}\hfill
\includegraphics[width=0.23\textwidth]{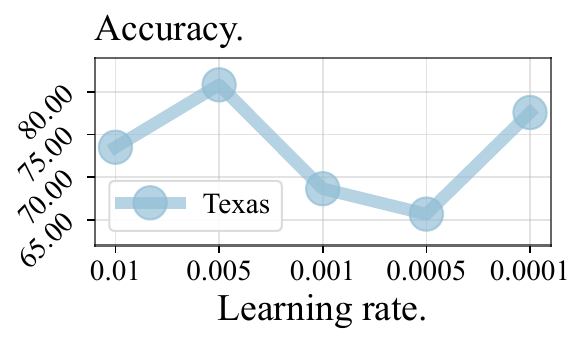}\hfill
\includegraphics[width=0.23\textwidth]{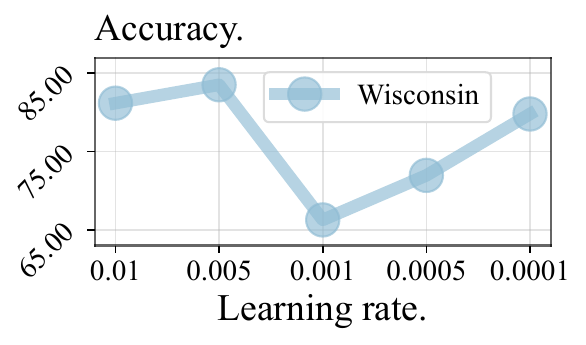}
\end{minipage}
}

\vspace{2mm}

\resizebox{\textwidth}{!}{%
\begin{minipage}{\textwidth}
\centering
\includegraphics[width=0.23\textwidth]{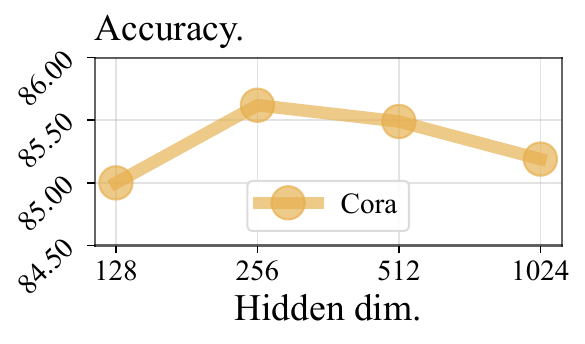}\hfill
\includegraphics[width=0.23\textwidth]{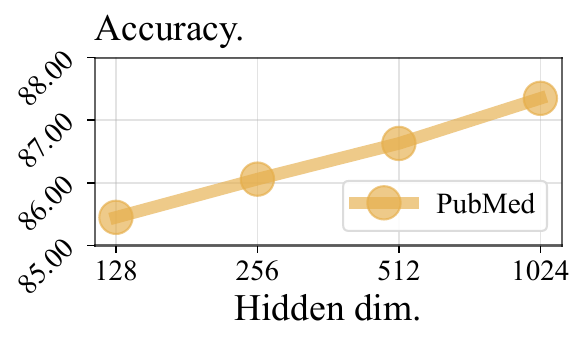}\hfill
\includegraphics[width=0.23\textwidth]{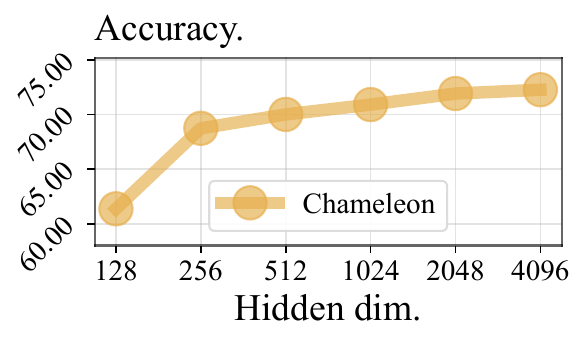}\hfill
\includegraphics[width=0.23\textwidth]{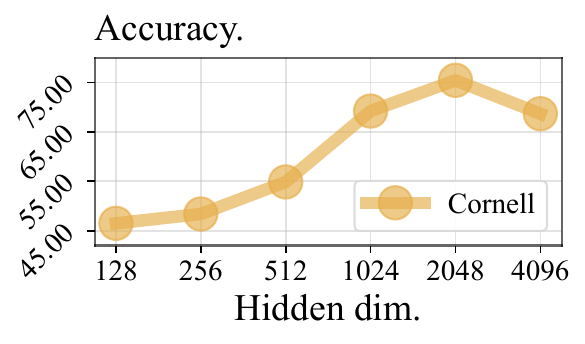}\\[-1mm]

\includegraphics[width=0.23\textwidth]{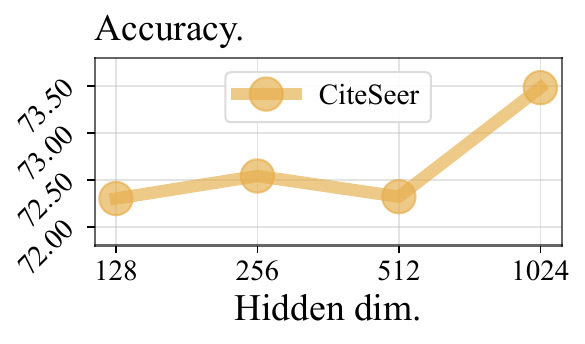}\hfill
\includegraphics[width=0.23\textwidth]{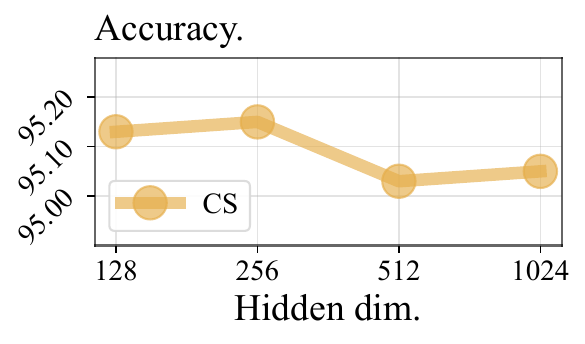}\hfill
\includegraphics[width=0.23\textwidth]{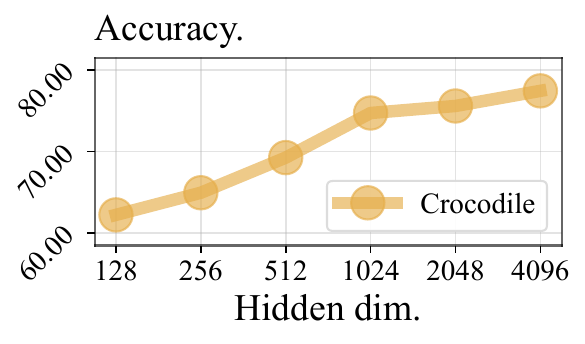}\hfill
\includegraphics[width=0.23\textwidth]{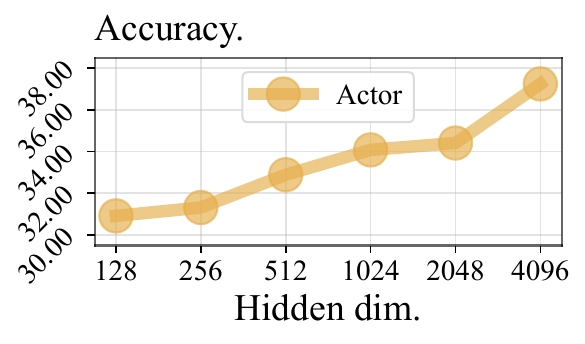}\\[-1mm]

\includegraphics[width=0.23\textwidth]{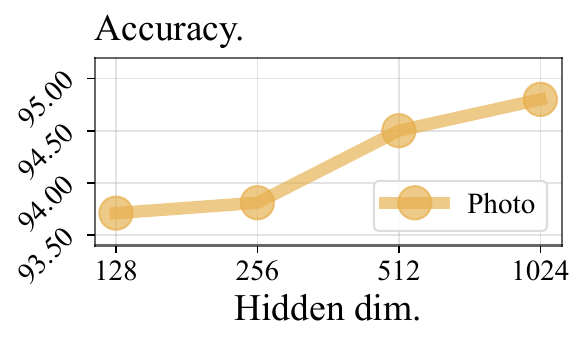}\hfill
\includegraphics[width=0.23\textwidth]{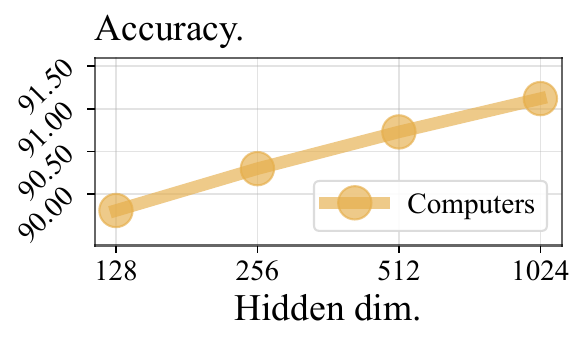}\hfill
\includegraphics[width=0.23\textwidth]{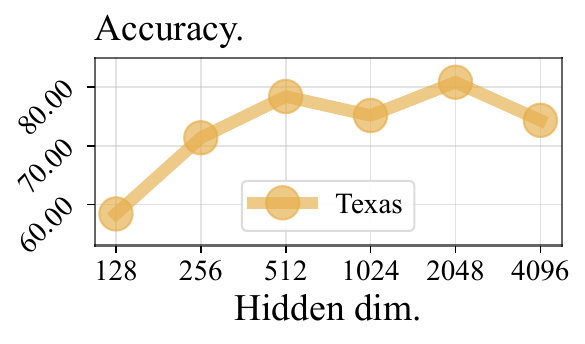}\hfill
\includegraphics[width=0.23\textwidth]{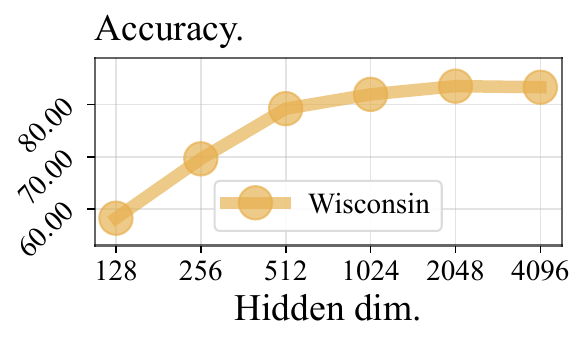}
\end{minipage}
}

\caption{Hyper-parameter analysis of SPGCL with respect to the learning rate (blue) and the hidden dimension (orange).
The left two columns correspond to homophilic graphs, while the right two columns correspond to heterophilic graphs.}
\label{fig:hyperparameter_lr_hiddendim}
\end{figure*}

\subsection{Empirical Analysis of the Pre-alignment Effect.}
To empirically verify the pre-alignment effect induced by message passing, we further analyze the average similarity between each node and its one-hop neighbors.
As shown in Figure \ref{fig:1hop_neighbor_similarity}, representations produced by a standard GCN exhibit a substantially higher neighbor similarity than the raw input features across all datasets, indicating that message passing alone can strongly increase the similarity of positive pairs even before contrastive optimization.
In contrast, SPGCL consistently yields lower neighbor similarity than GCN.
This confirms that SPGCL effectively mitigates the excessive pre-alignment caused by indiscriminate feature propagation, preserving sufficient diversity among positive pairs for contrastive learning.

\subsection{Impact of the base encoder choice.}
To further validate the competitive performance of SPGCL, we replace the GNN encoder with MLP as the base encoder, given that nodes in heterophilic graphs are likely to belong to different classes despite being connected. 
As shown in Table \ref{tb:Classificationformlp}, SPGCL still outperforms all baselines by a significant margin across all datasets. 
This highlights SPGCL’s ability to effectively learn from positive samples.

\subsection{Learning rate and hidden dimension.}
Figure \ref{fig:hyperparameter_lr_hiddendim} presents the node classification accuracy of SPGCL with different learning rate and hidden dimension on 12 datasets.
For the learning rate, we observe a consistent unimodal pattern across most datasets. On homophilic graphs, performance improves as the learning rate increases and reaches its peak, after which it slightly degrades. 
Whereas on heterophilic graphs, performance is generally more sensitive to the learning rate, reflecting the noise induced by heterophilic neighbor mixing.
For the hidden dimension, performance generally improves as the representation capacity increases. We notice that excessively large hidden dimensions may limit further gains, such as Texas, Cornell and Wisconsin.

\subsection{Effect of Dynamic Top-$K$ Partition.}
In SPGCL, we adopt a fixed Top-$K$ feature partition before training, where feature dimensions are separated according to their Dirichlet energies computed from the input features. To examine whether dynamically updating the partition during training can further improve performance, we additionally compare the fixed partition with a dynamic Top-$K$ partition that recomputes the selected feature dimensions during training.

As shown in Table~\ref{tab:partition}, the dynamic partition achieves comparable performance to the fixed one. It brings slight improvements on Cora, CiteSeer, PubMed, and Computers, while the fixed partition performs marginally better on CS and Photo. Overall, the performance difference between the two strategies is small. This suggests that the initial feature-energy-based partition already captures the main distinction between low-energy and high-energy feature dimensions. Since dynamic partitioning introduces extra computation at each training epoch, we use the fixed partition in SPGCL as it provides a better trade-off between effectiveness and efficiency.

\begin{table}[t]
\centering
\caption{Performance comparison of dynamic and fixed partitions.}
\label{tab:partition}
\begin{tabular}{lcccccc}
\toprule
Partition & Cora & CiteSeer & PubMed & Photo & CS & Computers \\
\midrule
Dynamic 
& $85.97 \pm 0.82$ 
& $73.77 \pm 0.15$ 
& $87.36 \pm 0.34$ 
& $94.81 \pm 0.13$ 
& $94.78 \pm 0.05$ 
& $91.08 \pm 0.20$ \\
Fixed   
& $85.44 \pm 0.13$ 
& $73.49 \pm 0.26$ 
& $87.19 \pm 0.13$ 
& $94.83 \pm 0.07$ 
& $95.03 \pm 0.03$ 
& $91.04 \pm 0.09$ \\
\bottomrule
\end{tabular}
\end{table}

\subsection{Efficiency Compared with InfoNCE-based Methods.}
We compare the training time of SPGCL with representative InfoNCE-based methods, including GRACE and GCA. Unlike these methods that generate two augmented views and encode both views with GNNs, SPGCL follows a single-view contrastive design. The Dirichlet energy estimation is conducted only once before training, and the model uses one GCN branch with a lightweight MLP branch. As shown in Table~\ref{tab:efficiency}, SPGCL consistently requires less training time per epoch than GRACE and GCA across all datasets. This demonstrates that SPGCL achieves a more efficient training process by avoiding the extra cost of dual-view augmentation and dual-view GNN encoding.

\begin{table}[t]
\centering
\caption{Training time per epoch in seconds compared with InfoNCE-based methods.}
\label{tab:efficiency}
\begin{tabular}{lccccc}
\toprule
Method & Cora & CiteSeer & PubMed & Computers & Photo \\
\midrule
SPGCL (Ours)
& $\mathbf{0.0038}$ 
& $\mathbf{0.0043}$ 
& $\mathbf{0.0813}$ 
& $\mathbf{0.0774}$ 
& $\mathbf{0.0291}$ \\
GRACE 
& $0.0066$ 
& $0.0071$ 
& $0.1008$ 
& $0.1068$ 
& $0.0410$ \\
GCA   
& $0.0062$ 
& $0.0070$ 
& $0.1311$ 
& $0.1726$ 
& $0.0638$ \\
\bottomrule
\end{tabular}
\end{table}

\end{document}